\newcommand{\cmark}{\ding{51}}
\newcommand{\xmark}{\ding{55}}
\newtheorem{theorem}{Theorem}
\newtheorem{lemma}{Lemma}
\newtheorem{assumption}{Assumption}
\newtheorem{corollary}{Corollary}
\def\eqref#1{equation~\ref{#1}}
\def\Eqref#1{Equation~\ref{#1}}
\def\floor#1{\lfloor #1 \rfloor}
\def\1{\bm{1}}
\def\vg{{\bm{g}}}
\def\vu{{\bm{u}}}
\def\vx{{\bm{x}}}
\def\vy{{\bm{y}}}
\def\mG{{\bm{G}}}
\DeclareMathAlphabet{\mathsfit}{\encodingdefault}{\sfdefault}{m}{sl}
\SetMathAlphabet{\mathsfit}{bold}{\encodingdefault}{\sfdefault}{bx}{n}
\def\gD{{\mathcal{D}}}
\def\gG{{\mathcal{G}}}
\def\gO{{\mathcal{O}}}
\def\gQ{{\mathcal{Q}}}
\def\gS{{\mathcal{S}}}
\def\sP{{\mathbb{P}}}
\def\bvg{\bar{\vg}}
\def\bvx{\bar{\vx}}
\newcommand{\indicator}[1]{\mathds{1}\left\{#1\right\}}
\newcommand{\Eqmark}[2]{\stackrel{(#1)}{#2}}
\title{Federated Learning under Periodic Client Participation and Heterogeneous Data: A New Communication-Efficient Algorithm and Analysis}
\author{%
  Michael Crawshaw\\
  Department of Computer Science\\
  George Mason University\\
  Fairfax, VA 22030 \\
  \texttt{mcrawsha@gmu.edu} \\
  \And
  Mingrui Liu \\
  Department of Computer Science\\
  George Mason University\\
  Fairfax, VA 22030 \\
  \texttt{mingruil@gmu.edu} \\
}
\begin{document}

\maketitle

\begin{abstract}
In federated learning, it is common to assume that clients are always available to
participate in training, which may not be feasible with user devices in practice. Recent
works analyze federated learning under more realistic participation patterns, such as
cyclic client availability or arbitrary participation. However, all such works either
require strong assumptions (e.g., all clients participate almost surely within a bounded
window), do not achieve linear speedup and reduced communication rounds, or are not
applicable in the general non-convex setting. In this work, we focus on nonconvex
optimization and consider participation patterns in which the chance of participation
over a fixed window of rounds is equal among all clients, which includes cyclic client
availability as a special case. Under this setting, we propose a new algorithm, named
Amplified SCAFFOLD, and prove that it achieves linear speedup, reduced communication,
and resilience to data heterogeneity simultaneously. In particular, for cyclic
participation, our algorithm is proved to enjoy $\mathcal{O}(\epsilon^{-2})$
communication rounds to find an $\epsilon$-stationary point in the non-convex stochastic
setting. In contrast, the prior work under the same setting requires
$\mathcal{O}(\kappa^2 \epsilon^{-4})$ communication rounds, where $\kappa$ denotes the
data heterogeneity. Therefore, our algorithm significantly reduces communication rounds
due to better dependency in terms of $\epsilon$ and $\kappa$. Our analysis relies on a
fine-grained treatment of the nested dependence between client participation and errors
in the control variates, which results in tighter guarantees than previous work. We also
provide experimental results with (1) synthetic data and (2) real-world data with a
large number of clients $(N = 250)$, demonstrating the effectiveness of our algorithm
under periodic client participation.
\end{abstract}

\begin{table*}[t]
\caption{Communication and computation complexity of various methods to find an
$\epsilon$-stationary point for $L$-smooth, non-convex objectives. $N$: number of
clients, $\kappa$: data heterogeneity $\sup_\vx \|\nabla f_i(\vx) - \nabla f(\vx)\| \leq
\kappa$. $S$: number of participating clients per round, $\bar{K}$: number of groups for
cyclic participation. See Section \ref{sec:participation_patterns} for a description of
each participation pattern. We say that an algorithm exhibits reduced communication if
its dependence in terms of $\epsilon$ is strictly smaller than
$\mathcal{O}(\epsilon^{-4})$. Derivation of complexities for Amplified FedAvg can be
found in Appendix \ref{app:baseline_complexity}.}
\label{tab:complexity}
\begin{center}
\resizebox{\textwidth}{!}{\begin{tabular}{@{}lcccc@{}}
\toprule
Setting & \makecell{Communication \\ Complexity ($R$)} & \makecell{Iteration \\ Complexity ($RI$)} & \makecell{Reduced \\ Communication} & \makecell{Unaffected by \\ Heterogeneity} \\
\midrule
i.i.d. Participation ($S$) & & & & \\
\quad\quad FedAvg \cite{karimireddy2020scaffold} & $\frac{\Delta \kappa^2}{S \epsilon^4} \left( 1 - \frac{S}{N} \right) + \frac{\sqrt{L} \kappa}{\epsilon^3}$ & $\frac{\Delta L \sigma^2}{S \epsilon^4}$ & \textcolor{red}{\xmark} & \textcolor{red}{\xmark} \\
\quad\quad SCAFFOLD \cite{karimireddy2020scaffold} & $\frac{\Delta L}{\epsilon^2} \left( \frac{N}{S} \right)^{2/3}$ & $\frac{\Delta L \sigma^2}{S \epsilon^4}$ & \textcolor{Green}{\cmark} & \textcolor{Green}{\cmark} \\
\quad\quad Amplified FedAvg \cite{wang2022unified} & $\frac{\Delta L N \kappa^2}{S \epsilon^4} \left( 1 - \frac{S}{N} \right) + \frac{\Delta L + \kappa^2}{\epsilon^2}$ & $\frac{\Delta L \sigma^2}{S \epsilon^4}$ & \textcolor{red}{\xmark} & \textcolor{red}{\xmark} \\
\quad\quad Amplified SCAFFOLD (ours) & $\frac{\Delta L}{\epsilon^2} \frac{N}{S}$ & $\frac{\Delta L \sigma^2}{S \epsilon^4}$ & \textcolor{Green}{\cmark} & \textcolor{Green}{\cmark} \\
\midrule
Regularized Participation ($P, \rho$) & & & & \\
\quad\quad Amplified FedAvg \cite{wang2022unified} & $\frac{\Delta L P + \kappa^2 + \sigma^2}{\epsilon^2}$ & $\frac{\Delta L \rho^2 \sigma^2}{\epsilon^4}$ & \textcolor{Green}{\cmark} & \textcolor{red}{\xmark} \\
\quad\quad Amplified SCAFFOLD (ours) & $\frac{\Delta LP}{\epsilon^2}$ & $\frac{\Delta L \rho^2 \sigma^2}{\epsilon^4}$ & \textcolor{Green}{\cmark} & \textcolor{Green}{\cmark} \\
\midrule
Cyclic Participation ($\bar{K}$, $S$) & & & & \\
\quad\quad Amplified FedAvg \cite{wang2022unified} & $\frac{\Delta LN \kappa^2}{S \epsilon^4} \left( 1 - \frac{S\bar{K}}{N} \right) + \frac{\Delta L \bar{K} + \kappa^2}{\epsilon^2}$ & $\frac{\Delta L \sigma^2}{S \epsilon^4}$ & \textcolor{red}{\xmark} & \textcolor{red}{\xmark} \\
\quad\quad Amplified SCAFFOLD (ours) & $\frac{\Delta L \bar{K}}{\epsilon^2} \frac{N}{S}$ & $\frac{\Delta L \sigma^2}{S \epsilon^4}$ & \textcolor{Green}{\cmark} & \textcolor{Green}{\cmark} \\
\bottomrule
\end{tabular}}
\end{center}
\end{table*}

\section{Introduction}
Federated learning (FL) \cite{mcmahan2017communication, li2020federated1,
kairouz2021advances, zhang2021survey} is a distributed learning paradigm that emphasizes
client privacy \cite{mcmahan2017learning, wei2020federated, mothukuri2021survey},
limited communication \cite{konevcny2016federated, lim2020federated}, and data
heterogeneity across clients \cite{karimireddy2020scaffold, zhu2021federated}. FL has
attracted attention in recent years due to the ability to leverage data and compute from
user devices while respecting privacy \cite{yang2018applied, ding2017collecting}. For
large-scale FL, it is common to limit the number of simultaneously participating
devices, and many works do so by assuming that a random subset of clients can be sampled
independently at each round \cite{mcmahan2017communication, bonawitz2019towards,
yu_linear, li2020federated, wang2021field}. However, this pattern of client
participation is not always practical. If clients are user devices like mobile phones,
they may not have 24/7 availability due to low battery or bad internet connection
\cite{huba2022papaya, paulik2021federated}. In particular, if client availability is
correlated with geographical location (e.g. mobile phones charging at night), then
client availability follows a cyclic pattern \cite{zhu2021diurnal}. Therefore, it
remains an important open question to design federated optimization algorithms with
provable efficiency under non-i.i.d client participation.

Several works have investigated optimization in FL under non-i.i.d. client
participation
\cite{eichner2019semi,avdiukhin2021federated,cho2023convergence,wang2022unified,yan2020distributed}.
However, to the best of our knowledge, no existing algorithm in a non-i.i.d.
participation setting provably exhibits reduced communication cost, linear
speedup with respect to the number of clients, and resilience to client data
heterogeneity for general non-convex optimization.

In this work, we consider FL under an arbitrary participation framework
\cite{wang2022unified}, where client participation during each round is a random
variable with potentially unknown distribution. We focus on client participation
patterns that are periodic, in the sense that all clients are expected to participate
with equal frequency over a window of multiple training rounds.

For this setting, we propose Amplified SCAFFOLD, an optimization algorithm for FL under
periodic client participation. Amplified SCAFFOLD utilizes (a) amplified updates across
participation periods and (b) control variates computed across entire participation
periods, to eliminate the effect of data heterogeneity even under non-i.i.d.
participation. We show that Amplified SCAFFOLD exhibits significantly reduced
communication cost, linear speedup, and is unaffected by client data heterogeneity. To
the best of our knowledge, this is the first result demonstrating reduced communication
or resilience to data heterogeneity without assuming i.i.d. participation. The
complexity of Amplified SCAFFOLD is compared against baselines in Table
\ref{tab:complexity}. For cyclic participation, Amplified SCAFFOLD improves the previous
best communication cost from $\mathcal{O}(\kappa^2 \epsilon^{-4})$ to
$\mathcal{O}(\epsilon^{-2})$.

The main challenges of achieving these properties are (1) simultaneously handling
randomness from stochastic gradients and non-i.i.d. participation; and (2) controlling
the error of control variates under non-i.i.d. participation. Previous work in this
setting \cite{wang2022unified} performs an in-expectation analysis, by taking
expectation only over randomness from stochastic gradients; this avoids (1) but cannot
leverage properties of the participation pattern to reduce communication. We present a
tighter analysis that addresses (1) by taking expectation over both client participation
and the stochastic gradients throughout the analysis, and carefully treating the
trajectory variables which depend on both sources of randomness. We address (2) by
recursively bounding the control variate errors, which involves a non-uniform average of
non-uniform averages of error terms resulting from non-i.i.d. participation. We show
that this nested non-uniform average can be bounded using mild regularity conditions on
the participation pattern.

Our contributions are summarized below.
\begin{itemize}
    \item We introduce Amplified SCAFFOLD, an optimization algorithm for federated
        learning under non-i.i.d. client participation. Our convergence analysis
        demonstrates its computational and communication efficiency: Amplified SCAFFOLD
        exhibits reduced communication, linear speedup, and is unaffected by data
        heterogeneity. These guarantees are achieved with a tighter analysis than used
        in previous work \cite{wang2022unified}, with a fine-grained treatment of the
        two sources of randomness: client participation and stochastic gradients. In the
        case of cyclic participation, we reduce the previous best communication cost of
        $\mathcal{O}(\kappa^2 \epsilon^{-4})$ to $\mathcal{O}(\epsilon^{-2})$.
    \item Experimental results show that Amplified SCAFFOLD converges faster than
        baselines on both synthetic and real-world problems under realistic non-i.i.d.
        client participation patterns. We also include an ablation study which demonstrates
        the robustness of our algorithm to changes in data heterogeneity, the number of
        participating clients per round, and the number of client groups in cyclic
        participation.
\end{itemize}

The paper is outlined as follows. We discuss related work in Section
\ref{sec:related_work}, and Section \ref{sec:setup} provides a formal specification of
the optimization problem. Amplified SCAFFOLD is introduced and theoretically analyzed in
Section \ref{sec:alg}, and we provide experiments in Section \ref{sec:experiments}. We
conclude with Section \ref{sec:conclusion}.

\section{Related Work} \label{sec:related_work}

\textbf{Federated Optimization.} FedAvg~\cite{mcmahan2017communication} characterizes
partial client participation and local updates in each round. FedAvg was analyzed in the
full participation
setting~\cite{stich2018local,wang2018cooperative,yu_linear,yu2019parallel,woodworth2020minibatch,woodworth2020local,khaled2020tighter,glasgow2022sharp}.
Other federated optimization algorithms aim to improve communication
efficiency~\cite{reddi2020adaptive,yuan2020federated} and tackle data
heterogeneity~\cite{li2020federated,karimireddy2020scaffold}. The analysis of FL
optimization algorithms typically either assumes full client participation or partial
client participation where clients are sampled uniformly
randomly~\cite{yang2020achieving,wang2021field,karimireddy2020scaffold,li2019convergence,woodworth2020minibatch,acar2021federated}.
\cite{patel2022towards} provides lower bounds for distributed stochastic, smooth
optimization with intermittent communication and non-convex objectives, both in the full
and partial participation settings. They also include algorithms employing variance
reduction which match (or closely match) lower bounds in the full and partial
participation settings. However, none of the works above are applicable for general
participation patterns such as periodic participation.

\textbf{Client Participation.} Cyclic data sampling was considered for
stochastic convex optimization in~\cite{eichner2019semi}, where they propose
``pluralistic" solutions instead of learning a single model for all clients.
There is a recent line of work considering various participation patterns,
including client
selection~\cite{fraboni2021clustered,chen2020optimal,rizk2022federated} biased
participation~\cite{ruan2021towards,cho2020client,cho2022towards}, independent
participation across
rounds~\cite{karimireddy2020scaffold,li2020federated,li2019convergence},
unbiased participation~\cite{tyurin2022sharper,grudzien2023improving}, bounded
rounds of unavailability~\cite{yan2020distributed,gu2021fast,yang2022anarchic},
asynchronous participation~\cite{lian2018asynchronous,avdiukhin2021federated},
cyclic participation~\cite{cho2023convergence}, and arbitrary
participation~\cite{wang2022unified,wang2023lightweight}. However, none of these
works enjoy linear speedup, reduced communication rounds, and resilience to data
heterogeneity under the general setting of non-convex objectives and periodic
client participation.

See Appendix \ref{app:comparison} for a detailed discussion comparing our results with a small number of closely related baselines.

\section{Problem Setup} \label{sec:setup}
We consider a federated learning problem with $N$ clients, with the overall objective
\begin{equation*}
    \min_{\vx \in \mathbb{R}^d} \left\{ f(x) := \frac{1}{N} \sum_{i=1}^N f_i(x) \right\},
\end{equation*}
where each $f_i: \mathbb{R}^d \rightarrow \mathbb{R}$ is the local objective of one
client. We consider the stochastic optimization problem, so that $f_i(\vx) =
\mathbb{E}_{\xi \sim \gD_i}[F(\vx; \xi)]$, and the optimization algorithm can access
$F_i(\vx; \xi)$ and $\nabla F_i(\vx; \xi)$ for individual values of $\xi$. We make the
following assumptions about the objectives:

\begin{assumption} \label{ass:obj}
\textbf{(a)} $f(\vx_0) - \min_{\vx \in \mathbb{R}^d} f(\vx) \leq \Delta$. \textbf{(b)}
Each $f_i$ is $L$-smooth, i.e., $\|\nabla f_i(\vx) - \nabla f_i(\vy)\| \leq L\|\vx -
\vy\|$ for all $\vx, \vy \in \mathbb{R}^d$. \textbf{(c)} The stochastic gradient has
variance $\sigma^2$, i.e., $\mathbb{E}_{\xi \sim \gD_i} [\|\nabla F_i(\vx; \xi) - \nabla
f_i(\vx)\|^2] \leq \sigma^2$ for all $\vx \in \mathbb{R}^d$.
\end{assumption}

Since each $f_i$ may be non-convex, we consider the problem of finding an
$\epsilon$-stationary point of $f$, that is, a point $\vx \in \mathbb{R}^d$ such that
$\|\nabla f(\vx)\| \leq \epsilon$.

\subsection{Participation Framework} \label{sec:participation_framework}
We consider a federated learning framework consisting of $R$ rounds. For any round $r
\in \{0, \ldots, R-1\}$ and client $i \in [N]$, the availability of client $i$ at round
$r$ is a random variable $q_r^i$, following the arbitrary participation framework of
\cite{wang2022unified}. If $q_r^i = 0$, then client $i$ may not participate during round
$i$. For example, under the conventional i.i.d. sampling of clients, at each round $r$ a
subset of clients $\gS_r \subset [N]$ is sampled uniformly without replacement, and the
weights are set as $q_r^i = \frac{\indicator{i \in \gS_r}}{S}$.

For some $P \in \mathbb{N}$, let $\gQ_{r_0}$ be the filtration generated by $\{q_r^i:
r_0 \leq r < r_0 + P, i \in [N]\}$, let $\gQ$ be the filtration generated by $\gQ_0,
\ldots, \gQ_{R-P}$, and let $\gG$ be the filtration generated by $\{\xi_{r,k}^i: 0 \leq
r < R, 0 \leq k < I, i \in [N]\}$, where $\xi_{r,k}^i$ is the random sampling of the
stochastic gradient of round $r$, step $k$, client $i$. We make the following
assumptions about the participation distribution.

\begin{assumption} \label{ass:sampling}
For all $r \in \{0, \ldots, R-1\}$: \textbf{(a)} $\sum_{i=1}^N q_r^i = 1$ and
$\sum_{i=1}^N(q_r^i)^2 \leq \rho^2$. \textbf{(b)} The distribution of $\{q_r^i\}$ is
unbiased across clients over every window of $P$ rounds, i.e.,
$\mathbb{E}_{\gQ_{r_0}}[\frac{1}{P} \sum_{r=mP}^{(m+1)P - 1} q_r^i] = 1/N$ for every $m
< R/P$ and $i \in [N]$. \textbf{(c)} Each client has a non-zero probability of being
sampled over every window of $P$ rounds, i.e., $\sP_{\gQ_{r_0}}(\frac{1}{P}
\sum_{r=mP}^{(m+1)P - 1} q_r^i > 0) > p_{\text{sample}}$ for every $m < R/P$ and $i \in
[N]$. \textbf{(d)} $\gQ$ and $\gG$ are independent.
\end{assumption}

For each round, Assumption \ref{ass:sampling}(a) enforces that the participation weights
$q_r^i$ are normalized to sum to 1, and characterizes the spread of participation
weights across clients with the constant $\rho^2$. Assumption \ref{ass:sampling}(b)
requires that the set of rounds can be partitioned into windows of length $P$ within
which clients are expected to participate with equal frequency. Lastly, Assumption
\ref{ass:sampling}(c) enforces that within each window, for each client the probability
of being sampled is nonzero. Conventional i.i.d client sampling satisfies Assumption
\ref{ass:sampling} with $\rho = S^{-1/2}, P=1$, and $p_{\text{sample}} = S/N$.

An important difference from conventional i.i.d. participation is that here, client
participation is not necessarily independent across rounds. Accordingly, we emphasize
that the expectation and probability in Assumptions \ref{ass:sampling}(b)-(c) are taken
only over $\gQ_{r_0}$. Therefore, the mean participation weight in Assumption
\ref{ass:sampling}(b) may itself be a random variable if client participation at some
rounds is dependent on the outcome of participation in previous rounds. Similarly, the
sampling probability in Assumption \ref{ass:sampling}(c) may be a random variable. For
the participation patterns considered in the next section, Assumption \ref{ass:sampling}
is satisfied even when client sampling is not independent across rounds.

\subsection{Specific Participation Patterns} \label{sec:participation_patterns}

\textbf{Regularized Participation} We say that client participation is
\textit{regularized} \cite{wang2022unified} if $\bar{q}_{r_0}^i = \frac{1}{N}$ almost
surely for all $r_0$ and $i$, where $\bar{q}_{r_0}^i$ is defined on Line 18 of Algorithm
1 as the participation of client $i$ averaged over rounds $r_0, \ldots, r_0 + P - 1$. In
this case, Assumption \ref{ass:sampling} is satisfied with $p_{\text{sample}} = 1$,
while $P$ and $\rho^2$ are parameters of the participation pattern. Regularized
participation is a relatively strong constraint, since every client must participate
within each window, which may not be practical. However, it is flexible in that there is
no constraint on how clients participate within each window. Regularized participation
was also considered for strongly convex objectives \cite{malinovsky2023federated}.

\textbf{Cyclic Participation} Following the CyCP framework \cite{cho2023convergence},
$N$ clients are partitioned into $\bar{K}$ equally sized subsets, and at round $r$ only
clients in group $(r \text{ mod } \bar{K})$ may participate. $S$ clients are sampled
without replacement from group $(r \text{ mod } \bar{K})$, for whom the participation
weight is $q_r^i = 1/S$. All other clients are assigned $q_r^i = 0$. Cyclic
participation satisfies Assumption \ref{ass:sampling} with $P = \bar{K}$, $\rho =
S^{-1/2}$, and $p_{\text{sample}} = S\bar{K}/N$. Notice that i.i.d. client sampling is
the special case where $\bar{K} = 1$.

Cyclic participation can model a situation where each client group is available at a
different time of day. For example, if client devices are mobile phones, then clients
are available for participation at night, when phones are charging, likely to have
internet connection, and otherwise idle. If devices are spread across the globe, then
client groups are naturally formed by time zones. Cyclic participation is less stringent
than regularized participation since not all clients are required to participate within
each window. FedAvg was analyzed under cyclic participation for PL objectives
\cite{cho2023convergence}, although this analysis is not applicable in our setting,
which uses general non-convex objectives.

\section{Algorithm and Analysis} \label{sec:alg}
In this section, we present Amplified SCAFFOLD, our algorithm to solve the FL problem
described in Section \ref{sec:setup}. Pseudocode for Amplified SCAFFOLD is shown in
Algorithm 1. The main components of the Amplified SCAFFOLD algorithm are (1) amplified
updates and (2) long-range control variates.

\begin{figure}
\begin{center}
\begin{tikzonimage}[width=\textwidth]{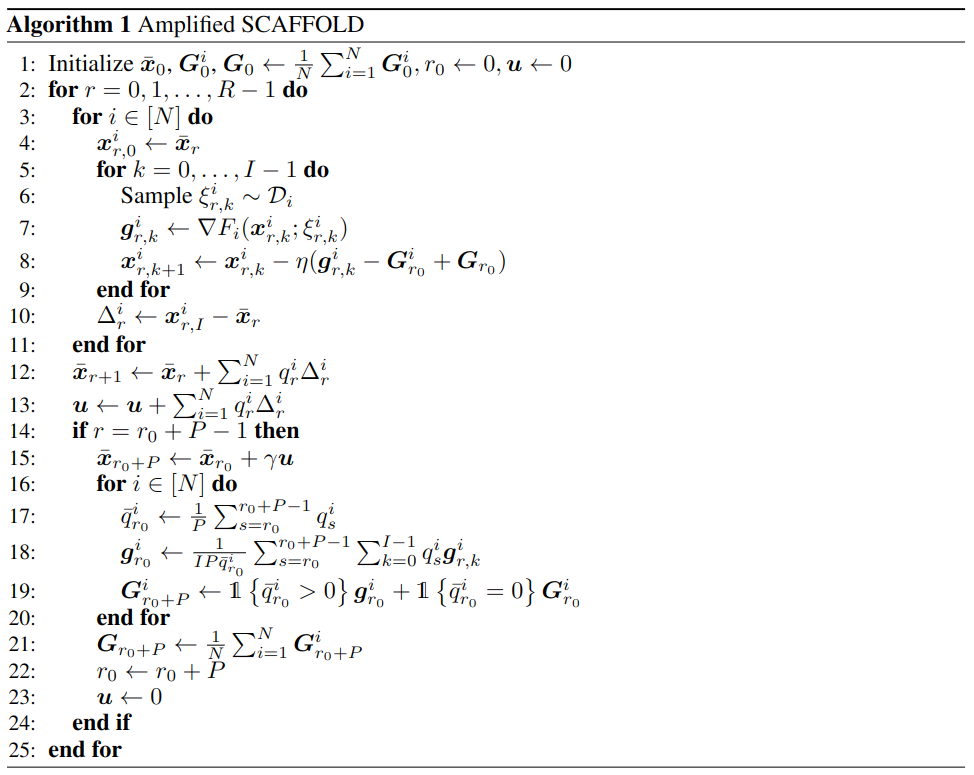}
\draw [blue, fill=blue, opacity=0.2, line width=1pt] (0.005,0.835) rectangle (0.625,0.57);
\node[draw] at (0.575,0.795) {client};
\draw [orange, fill=orange, opacity=0.2, line width=1pt] (0.005,0.54) rectangle (0.625,0.085);
\node[draw] at (0.575,0.50) {server};
\end{tikzonimage}
\end{center}
\end{figure}

\subsection{Algorithm Overview}
To deal with the non-stationarity of client availability, Amplified SCAFFOLD performs
amplified updates based on information accumulated over a window of $P$ rounds. In
Algorithm 1, the variable $\vu$ holds a weighted average of local updates to client
models, weighted by client participation. Every $P$ rounds, the global model is updated
in the direction $\vu$ scaled by the amplification factor $\gamma$. Informally, the
direction $\vu$ includes information from all clients with equal representation,
according to Assumption \ref{ass:sampling}(b). Similar amplified updates are used in
Amplified FedAvg \cite{wang2022unified}.

Control variates for heterogeneous federated learning were first introduced by SCAFFOLD
\cite{karimireddy2020scaffold}. However, SCAFFOLD-style control variates are updated
every time a client participates, which may not be appropriate under periodic
availability. For example, under non-i.i.d. participation control variates for different
clients would be updated with different frequencies, so that some clients may have
consistently less accurate control variates than others. Informally, this may lead to a
bias in which some clients' objective is underweighted relative to others. To deal with
this issue, Amplified SCAFFOLD updates control variates based on information accumulated
over a window of $P$ rounds, which enforces equal representation of all clients in
expectation, according to Assumption \ref{ass:sampling}(b).

\textbf{Comparison with \cite{karimireddy2020scaffold, wang2022unified}} Although the
two algorithmic components of Amplified SCAFFOLD individually appear in previous work
\cite{karimireddy2020scaffold, wang2022unified}, we emphasize that our complexity
results cannot be achieved by simply combining the analyses of these two works. The
analysis of \cite{wang2022unified} requires $\epsilon^{-4}$ communication cost due to
their treatment of the randomness in client participation. Here, we present a tighter
analysis with $\epsilon^{-2}$ cost from a more fine-grained treatment of the two sources
of randomness (stochastic gradients and client sampling). See Section
\ref{sec:proof_sketch} for more details on our approach.

\subsection{Main Results} \label{sec:main_results}
Let $\hat{\vx} = \bvx_{mP}$, where $m$ is sampled uniformly from $\{0, \ldots, R/P-1\}$,
and let $r_0 \in \{0, P, \ldots, R/P\}$.  Denote ${w_{r_0}^i = \frac{1}{N} \sum_{j=1}^N
\frac{\indicator{\bar{q}_{r_0}^j > 0}}{P \bar{q}_{r_0}^j} \sum_{s=r_0}^{r_0+P-1} q_s^i
q_s^j}$ and ${v_{r_0}^i = \bar{q}_{r_0}^i - \frac{1}{N}}$.  Informally, $w_{r_0}^i$
represents the "non-uniformity" of the client sampling distribution. We also consider a
variable $\Lambda_{r_0}^i$ that depends only on the client sampling distribution and
characterizes the sample size from which $\mG_{r_0}^i$ is computed. See Appendix
\ref{app:definitions} for further discussion of these quantities. We consider
convergence under the following conditions, which are satisfied by several participation
patterns of interest.
\begin{align}
    \mathbb{E}[w_{r_0}^i] \leq \frac{P^2}{N} \quad \text{for all } r_0 \text{ and } i, \label{eq:w_cond} \\
    \mathbb{E} \left[ \sum_{i=1}^N \left( v_{r_0}^i \right)^2 \Lambda_{r_0}^i \right] \leq \rho^2 \quad \text{for all } r_0 \text{ and } i, \label{eq:v_cond}
\end{align}

\begin{theorem} \label{thm:main_convergence}
Suppose that Assumptions \ref{ass:obj} and \ref{ass:sampling} hold, and that
\Eqref{eq:w_cond} and \Eqref{eq:v_cond} hold. If ${\gamma \eta \leq
\frac{p_{\text{sample}}}{60 LIP}}$ and ${\eta \leq \frac{\sqrt{p_{\text{sample}}}}{60
LIP}}$, then Algorithm 1 satisfies
\begin{align*}
    \mathbb{E}[\|\nabla f(\hat{\vx})\|^2] \leq \mathcal{O} \left( \frac{\Delta}{\gamma \eta I R} + \left( \gamma \eta L \rho^2 + \eta^2 L^2 IP \right) \sigma^2 \right).
\end{align*}
\end{theorem}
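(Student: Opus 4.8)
The plan is to follow the standard non-convex descent template, but to carry both the participation filtration $\gQ$ and the stochastic-gradient filtration $\gG$ through the entire argument, rather than taking expectation over $\gG$ alone as in \cite{wang2022unified}. First I would apply $L$-smoothness (Assumption \ref{ass:obj}(b)) to the once-per-window server update, writing $\bvx_{(m+1)P} = \bvx_{mP} - \gamma \vu$ where $\vu$ is the participation-weighted aggregate of the local updates over rounds $mP, \ldots, (m+1)P-1$. This gives
\[
f(\bvx_{(m+1)P}) \le f(\bvx_{mP}) + \langle \nabla f(\bvx_{mP}),\, \bvx_{(m+1)P} - \bvx_{mP}\rangle + \tfrac{L}{2}\,\|\bvx_{(m+1)P} - \bvx_{mP}\|^2 .
\]
The key structural fact is Assumption \ref{ass:sampling}(b): averaging the weights $q_r^i$ over the window makes each client contribute with weight $1/N$ in expectation, so the conditional expectation of $\vu$ aligns with the full gradient $\nabla f(\bvx_{mP})$ with a coefficient of order $\eta I P$, making the leading descent term $-\gamma\eta IP\,\|\nabla f(\bvx_{mP})\|^2$ (the factor $P$ coming from aggregating participation-weighted updates across the window). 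The remaining work is to control the three error sources this decomposition leaves behind: client drift during the $I$ local steps, inexactness of the long-range control variates, and the combined stochastic-gradient and participation noise in $\vu$.

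Second, I would bound the client drift. Because the control variate $\mG_{r_0}^i$ is subtracted from each local gradient, a client's drift from $\bvx_{mP}$ is governed by $L$-smoothness plus the control-variate error rather than by the raw heterogeneity $\kappa$; this is exactly what removes the $\kappa$-dependence enjoyed by SCAFFOLD-type methods. Unrolling the local recursion and using the step-size bound $\eta \le \sqrt{p_{\text{sample}}}/(60LIP)$ should yield a drift bound of order $\eta^2 I$ times (squared gradient norm $+$ control-variate error $+ \sigma^2$), which feeds the $\eta^2 L^2 IP\,\sigma^2$ term in the final bound.

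The hard part, and the third step, is the control-variate error recursion. The control variate for client $i$ is a participation-weighted average over the window, and it is reused in the next window, so its error at window $m+1$ is a non-uniform average (reweighted by $q_s^i$) of errors that are themselves non-uniform averages carried from window $m$ --- the nested non-uniform average flagged in the introduction. I would set up a recursion for $\mathbb{E}\big[\sum_i \|\mG_{mP}^i - \nabla f_i(\bvx_{mP})\|^2\big]$ and show it contracts, or at least stays bounded, across windows. This is precisely where the auxiliary quantities enter: the non-uniformity factor $w_{r_0}^i$, controlled by \Eqref{eq:w_cond}, bounds the amplification of old error under reweighting, while $v_{r_0}^i$ together with the effective sample size $\Lambda_{r_0}^i$, controlled by \Eqref{eq:v_cond}, bounds the fresh bias and variance injected each window; the latter condition is what couples the control-variate contribution to $\rho^2$. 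The lower bound $p_{\text{sample}}$ from Assumption \ref{ass:sampling}(c) guarantees the effective sample size is large enough for the recursion to close, which is why $p_{\text{sample}}$ appears in the step-size conditions. I expect the crux to be keeping the two sources of randomness disentangled --- handling the participation reweighting conditionally on $\gQ$ while handling gradient noise conditionally on $\gG$ --- so that these conditions can actually be invoked.

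Finally, I would combine the descent inequality with the drift and control-variate bounds, choose $\gamma,\eta$ via the stated constraints so that every error term except the irreducible noise is absorbed into the descent $-\gamma\eta IP\,\|\nabla f\|^2$, telescope over the $R/P$ windows, and take full expectation over $\gQ$ and $\gG$. Dividing by the total effective step $\gamma\eta IR$ and using that $\hat{\vx} = \bvx_{mP}$ is drawn uniformly over window starts converts the telescoped sum into $\mathbb{E}[\|\nabla f(\hat{\vx})\|^2]$, producing the optimization term $\Delta/(\gamma\eta IR)$ together with the noise terms $\gamma\eta L\rho^2\sigma^2$ (from participation-weighted aggregation, where $\sum_i (q_r^i)^2 \le \rho^2$, reinforced by \Eqref{eq:v_cond}) and $\eta^2 L^2 IP\,\sigma^2$ (from the local-step noise).
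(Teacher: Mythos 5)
Your plan is correct and follows essentially the same route as the paper's proof: a window-level smoothness descent inequality, drift bounds whose heterogeneity dependence is killed by the long-range control variates (the paper's Lemma \ref{lem:dm_ub}), a cross-window recursion for the control-variate errors that contracts with factor $1 - \tfrac{1}{2}p_{\text{sample}}$ by invoking \Eqref{eq:w_cond}, \Eqref{eq:v_cond}, and Assumption \ref{ass:sampling}(c) (Lemma \ref{lem:staleness_rec_ub}), plus a bound on $\mathbb{E}\|\bvx_{r_0+P}-\bvx_{r_0}\|^2$ (Lemma \ref{lem:quadratic_err}), all combined in a Lyapunov-style telescoping over windows. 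The only slight imprecision is attributing $p_{\text{sample}}$ to "effective sample size" — its actual role is that each client's control variate is refreshed within a window with probability at least $p_{\text{sample}}$, which is what makes the stale-error carryover contract — but this does not change the argument.
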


\begin{corollary} \label{cor:main_complexity}
For any $\epsilon > 0$ and $I \geq 1$, there exist choices of $\gamma$ and $\eta$ such
that $\mathbb{E}[\|\nabla f(\hat{\vx})\|^2] \leq \mathcal{O}(\epsilon^2)$ as long as ${R
\geq \mathcal{O} \left( \frac{\Delta L \rho^2 \sigma^2}{I \epsilon^4} + \frac{\Delta
LP}{p_{\text{sample}} \epsilon^2} \right)}$.
\end{corollary}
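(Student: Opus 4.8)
The plan is to treat Corollary~\ref{cor:main_complexity} as a step-size tuning argument applied to the bound of Theorem~\ref{thm:main_convergence}. I would read the right-hand side as the sum of three pieces: an optimization term $\frac{\Delta}{\gamma\eta IR}$, an amplification-variance term $\gamma\eta L\rho^2\sigma^2$, and a local-step-variance term $\eta^2 L^2 IP\sigma^2$. The goal is to pick $\eta$ and $\gamma$ so that the two variance terms are each $\mathcal{O}(\epsilon^2)$ while the step-size conditions $\gamma\eta\le\frac{p_{\text{sample}}}{60LIP}$ and $\eta\le\frac{\sqrt{p_{\text{sample}}}}{60LIP}$ stay satisfied, and then to absorb the remaining optimization term into the round budget $R$.

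The key observation is that only the product $\gamma\eta$ enters the optimization and amplification-variance terms, whereas $\eta$ alone enters the local-step-variance term and its own step-size constraint. I would therefore introduce $\alpha:=\gamma\eta$ and tune $\alpha$ and $\eta$ separately, recovering $\gamma=\alpha/\eta$ at the end. To control the amplification-variance term I set
\begin{equation*}
\alpha = \min\!\left(\frac{\epsilon^2}{L\rho^2\sigma^2},\ \frac{p_{\text{sample}}}{60LIP}\right),
\end{equation*}
which makes $\gamma\eta L\rho^2\sigma^2\le\epsilon^2$ and simultaneously enforces the product step-size condition. To control the local-step-variance term I set
\begin{equation*}
\eta = \min\!\left(\frac{\epsilon}{L\sigma\sqrt{IP}},\ \frac{\sqrt{p_{\text{sample}}}}{60LIP}\right),
\end{equation*}
which gives $\eta^2 L^2 IP\sigma^2\le\epsilon^2$ and enforces the single-variable step-size condition. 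Both chosen values are strictly positive, so $\gamma=\alpha/\eta$ is well defined, and by construction the hypotheses of Theorem~\ref{thm:main_convergence} are met.

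It then remains to drive the optimization term below $\mathcal{O}(\epsilon^2)$, which requires $R\ge\frac{\Delta}{\alpha I\epsilon^2}$. Substituting the chosen $\alpha$ gives $\frac{1}{\alpha}=\max\!\big(\frac{L\rho^2\sigma^2}{\epsilon^2},\frac{60LIP}{p_{\text{sample}}}\big)$, and bounding the maximum by the sum of its arguments yields
\begin{equation*}
R \ \ge\ \frac{\Delta}{I\epsilon^2}\left(\frac{L\rho^2\sigma^2}{\epsilon^2}+\frac{60LIP}{p_{\text{sample}}}\right) \ =\ \frac{\Delta L\rho^2\sigma^2}{I\epsilon^4}+\frac{60\,\Delta LP}{p_{\text{sample}}\epsilon^2},
\end{equation*}
which is exactly the claimed complexity up to the absolute constant. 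For this $R$ all three terms are $\mathcal{O}(\epsilon^2)$, hence $\mathbb{E}[\|\nabla f(\hat{\vx})\|^2]\le\mathcal{O}(\epsilon^2)$.

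This corollary is essentially a bookkeeping argument once Theorem~\ref{thm:main_convergence} is in hand, so I do not expect a substantive obstacle. The two points that require care are: (i) decoupling $\gamma$ and $\eta$ through the auxiliary variable $\alpha=\gamma\eta$, so that both variance terms and both step-size constraints are satisfied at the same time (one checks that the chosen $\eta$ and $\alpha$ are positive, making $\gamma=\alpha/\eta$ a legitimate choice for any $\epsilon>0$ and $I\ge 1$); and (ii) converting the $\max$ produced by $1/\alpha$ into the additive two-term expression for $R$, whose two summands match precisely the $\epsilon^{-4}$ and $\epsilon^{-2}$ contributions in the statement.
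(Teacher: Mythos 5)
Your proposal is correct: your choices satisfy both step-size conditions of Theorem \ref{thm:main_convergence}, each variance term is capped at $\epsilon^2$, and the bound $1/\alpha = \max\left(\tfrac{L\rho^2\sigma^2}{\epsilon^2}, \tfrac{60LIP}{p_{\text{sample}}}\right) \leq \tfrac{L\rho^2\sigma^2}{\epsilon^2} + \tfrac{60LIP}{p_{\text{sample}}}$ converts the requirement on the optimization term into exactly the stated round complexity, up to absolute constants.

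Your route does differ from the paper's in the style of tuning, and the difference is worth noting. The paper tunes against the horizon $R$ rather than the accuracy $\epsilon$: it sets $\gamma\eta = \min\left\{\tfrac{\sqrt{\Delta}}{60\sqrt{LIR}\,\rho\sigma}, \tfrac{p_{\text{sample}}}{60LIP}\right\}$ (balancing the optimization term against the amplification-variance term) and chooses $\eta$ of order $\left(\tfrac{\Delta\rho^2}{L^3I^3RP^2\sigma^2}\right)^{1/4}$, engineered precisely so that the local-step term is absorbed, i.e.\ $5785\,\eta^2L^2IP \leq 5\gamma\eta L\rho^2$. The whole bound then collapses to $\tfrac{301\sqrt{\Delta L}\rho\sigma}{\sqrt{IR}} + \tfrac{300\Delta LP}{p_{\text{sample}}R}$, an explicit rate valid for \emph{every} $R$, and the corollary follows by reading off when this rate drops below $\mathcal{O}(\epsilon^2)$. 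Your accuracy-based tuning is more elementary and arguably cleaner: introducing $\alpha = \gamma\eta$ decouples the two constraints so that each of the three terms can be handled independently, with no need for the absorption step $\eta^2 L^2 IP \lesssim \gamma\eta L\rho^2$. What the paper's parametrization buys in exchange is a convergence rate as a function of $R$ that does not require committing to a target $\epsilon$ when setting the step sizes; what yours buys is transparency of the bookkeeping. Both yield the same complexity, so there is no gap in your argument.
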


The complexity of Amplified SCAFFOLD has several important properties:

\textbf{Reduced Communication} By choosing $I = \Theta(\Delta \rho^2 \sigma^2
p_{\text{sample}} P^{-1} \epsilon^{-2})$, Amplified SCAFFOLD has communication
complexity $R = \mathcal{O}(LP p_{\text{sample}}^{-1} \epsilon^{-2})$, which improves
upon the $\epsilon^{-4}$ complexity of parallel SGD. We are not aware of any existing
work that achieves this communication reduction for non-convex federated optimization
with periodic participation.

\textbf{Unaffected by Heterogeneity} The iterations $RI$ and the number of
communications $R$ are unaffected by heterogeneity, which is not achieved for periodic
participation by any existing work \cite{wang2022unified, cho2023convergence}.

\textbf{Linear Speedup} The number of iterations $RI = \mathcal{O}(\Delta L \rho^2
\sigma^2 \epsilon^{-4})$ will exhibit linear speedup in the number of clients through
the term $\rho^2$, depending on the client participation pattern.

\subsection{Application to Participation Patterns} \label{sec:pattern_results}
The results from Section \ref{sec:main_results} apply under any participation pattern
that satisfies Assumption \ref{ass:sampling}, \Eqref{eq:w_cond}, and \Eqref{eq:v_cond},
and below we discuss the participation patterns discussed in Section
\ref{sec:participation_patterns}. The complexity of Amplified SCAFFOLD for each
participation pattern are shown in Table \ref{tab:complexity}, and these results can be
obtained by plugging $\rho^2$, $P$, and $p_{\text{sample}}$ into Corollary
\ref{cor:main_complexity}, together with a choice of $I$ as described in Section
\ref{sec:main_results}. The derivations of each result below are given in Appendix
\ref{app:pattern_proofs}.

\textbf{Regularized Participation} Recall that regularized participation satisfies
Assumption \ref{ass:sampling} with $p_{\text{sample}} = 1$, and $P$, $\rho^2$ are
parameters of the participation pattern. Also, under regularized participation,
$w_{r_0}^i = \bar{q}_{r_0}^i = 1/N$ almost surely, so that $\mathbb{E}[w_{r_0}^i] = 1/N
\leq P^2/N$ and $v_{r_0}^i = 0$. Therefore \Eqref{eq:w_cond} and \Eqref{eq:v_cond} are
satisfied. Plugging $p_{\text{sample}} = 1$ into Corollary \ref{cor:main_complexity}
yields
\begin{equation*}
    R = \mathcal{O} \left( LP \epsilon^{-2} \right), \quad RI = \mathcal{O} \left( \Delta L \rho^2 \sigma^2 \epsilon^{-4} \right).
\end{equation*}
In this setting, our algorithm exhibits reduced communication and resilience to
heterogeneity. To our knowledge, the only existing algorithm with theoretical guarantees
for non-convex problems under regularized participation is Amplified FedAvg
\cite{wang2022unified}. However, as seen in Table \ref{tab:complexity}, the
communication complexity of Amplified FedAvg has order $\epsilon^{-4}$ in terms of
$\epsilon$ and suffers from a $\kappa^2$ dependence.

\textbf{Cyclic Participation} Recall that cyclic participation satisfies Assumption
\ref{ass:sampling} with $P = \bar{K}$, $\rho = S^{-1/2}$, and $p_{\text{sample}} = S/N$.
Also, $\mathbb{E}[w_{r_0}^i] = S/N^2 \leq P^2/N$ and $\mathbb{E}[(v_{r_0}^i)^2
\Lambda_{r_0}^i] = \rho^2$, so that \Eqref{eq:w_cond} and \Eqref{eq:v_cond} are
satisfied. Based on the above parameter values, the resulting complexities are
\begin{equation*}
    R = \mathcal{O} \left( \frac{L \bar{K}}{\epsilon^2} \left( \frac{N}{S} \right) \right), \quad RI = \mathcal{O} \left( \frac{\Delta L \sigma^2}{S \epsilon^4} \right).
\end{equation*}
Again, Amplified SCAFFOLD achieves reduced communication, linear speedup, and resilience
to heterogeneity. Amplified FedAvg \cite{wang2022unified} is the only existing algorithm
with theoretical guarantees in this setting, but it fails to achieve resilience to
heterogeneity or reduce communication cost outside of the trivial case of full
participation ($\bar{K} = 1, S=N$). Also, even for the setting of PL-functions, the
convergence rate of FedAvg under cyclic participation from \cite{cho2023convergence}
does not demonstrate an improvement with respect to the number of local steps. See
Appendix \ref{app:comparison} for further discussion of their results.

Recall that i.i.d. participation is a special case of cyclic participation with $\bar{K}
= 1$. In this case, Amplified FedAvg fails to recover the reduced communication usually
achieved under i.i.d. participation, such as by SCAFFOLD \cite{karimireddy2020scaffold}.
In fact, Amplified FedAvg fails to recover the communication cost of FedAvg under i.i.d.
participation, requiring an additional factor of $LN$. The larger communication cost of
Amplified FedAvg is a result of its convergence analysis, which does not leverage the
property of unbiased participation (Assumption \ref{ass:sampling}(b)) during the
analysis, and requires $P = \mathcal{O}(\epsilon^{-2})$ in order to converge (see
Appendix \ref{app:baseline_complexity} for more details). In contrast, Amplified
SCAFFOLD succeeds in recovering the results of SCAFFOLD under i.i.d. participation, with
only a slightly worse dependence of $R$ on $N/S$. This difference in the order of
$\frac{N}{S}$ is due to a potential small issue in the analysis of SCAFFOLD, which we
intentionally avoided by accepting a slightly worse dependence on $\frac{N}{S}$. We
provide a detailed discussion of the $N/S$ dependence in Appendix \ref{app:comparison}.

\subsection{Proof Sketch} \label{sec:proof_sketch}
The main challenges for demonstrating convergence are (1) simultaneously handling
randomness from stochastic gradients and non-i.i.d. client sampling, and (2) controlling
error of control variates under non-i.i.d. client sampling. Previous work
\cite{wang2022unified} subverts (1) by conditioning on $\gQ$ throughout the entire
analysis. However, this eliminates the possibility of utilizing the condition
$\mathbb{E}[\bar{q}_{r_0}^i] = 1/N$, and ultimately incurs a dependence on the data
heterogeneity (see the term $\tilde{\delta}^2(P)$ in Theorem 3.1 of
\cite{wang2022unified}). Instead, we take expectation over both sources of randomness
throughout the analysis, which requires a careful treatment of each iterate's
dependence, and enables communication reduction. For (2), previous analysis of federated
algorithms with control variates \cite{karimireddy2020scaffold} recursively bounds the
error of control variates between consecutive rounds. However, this recursion crucially
depends on i.i.d. client participation. We extend this analysis to our setting,
establishing a recursion over the control variate error between consecutive windows of
$P$ rounds. Establishing this recursion under non-i.i.d.  participation involves a
non-uniform average of non-uniform averages of error terms, which we handle by invoking
the regularity conditions stated in \Eqref{eq:w_cond} and \Eqref{eq:v_cond}.

Using smoothness of $f$, the objective function decrease $f(\bvx_{r_0+P}) -
f(\bvx_{r_0})$ is upper bounded by $\langle \nabla f(\bvx_{r_0}), \bvx_{r_0+P} -
\bvx_{r_0} \rangle + \frac{L}{2} \|\bvx_{r_0+P} - \bvx_{r_0}\|^2$. Letting $\bvx_{r,k} =
\sum_{i=1}^N q_r^i \vx_{r,k}^i$ be a weighted average of local models, the sum of the
previous inner product and quadratic terms can be bounded by $-\gamma \eta IP \|\nabla
f(\bvx_{r_0})\|^2$, plus standard noise terms, the additional ``drift" terms
\begin{align*}
    \tilde{D}_{r,k} = \sum_{i=1}^N q_r^i \left\| \vx_{r,k}^i - \bvx_{r,k} \right\|^2 \quad \tilde{M}_{r,k} = \left\| \bvx_{r,k} - \bvx_{r_0} \right\|^2,
\end{align*}
and control variate errors $C_{r_0}^i = \| \nabla f_i(\bvx_{r_0}) - \mG_{r_0}^i \|^2$.
$\tilde{D}_{r,k}$ captures the distance between local client models, while
$\tilde{M}_{r,k}$ captures the distance from local models to the previous global model
$\bvx_{r_0}$.

Taking conditional expectation $D_{r,k} = \mathbb{E}[\tilde{D}_{r,k} | \gQ]$ and
$M_{r,k} = \mathbb{E}[\tilde{M}_{r,k} | \gQ]$, Lemma \ref{lem:dm_ub} bounds the drift
terms by establishing and unrolling a mutually recurrent relation between $D_{r,k}$ and
$M_{r,k}$. The resulting bound involves a non-uniform average over the control variate
errors: $\sum_{i=1}^N q_r^i \mathbb{E}[C_{r_0}^i | \gQ]$.

Denoting the average control variate error $C_{r_0} = \frac{1}{N} \sum_{i=1}^N
C_{r_0}^i$, we want to bound $\mathbb{E}[C_{r_0+P}]$ in terms of $\mathbb{E}[C_{r_0}]$.
$C_{r_0+P}$ can be decomposed into drift terms, but the result is a non-uniform average:
\[
    \sum_{s=r_0}^{r_0+P-1} \sum_{k=0}^{I-1} q_s^i (D_{s,k} + M_{s,k}).
\]
Since the bound for each $M_{s,k} + D_{s,k}$ from Lemma 1 involves a non-uniform average
over $C_{r_0}^i$, the resulting bound of $C_{r_0+P}$ involves a non-uniform average of
non-uniform averages of $C_{r_0}^i$, instead of the uniform average $C_{r_0}$. The
regularity conditions in \Eqref{eq:w_cond} and \Eqref{eq:v_cond} allow us to bound this
nested non-uniform average by a uniform average, which finishes the recursion.

Putting everything together, we obtain the descent inequality
\begin{align*}
    \mathbb{E}[\tilde{f}_{r_0+P}] &\leq \mathbb{E}[\tilde{f}_{r_0}] - \gamma \eta IP \mathbb{E} \left[ \|\nabla f(\bvx_{r_0})\|^2 \right] + \gamma \eta IP (\gamma \eta L \rho^2 + \eta^2 L^2 IP) \sigma^2,
\end{align*}
where $\tilde{f}_{r_0} := f(\bvx_{r_0+P}) + \Phi(r_0+P)$ and $\Phi$ is a potential
function that depends on the control variate errors. Theorem \ref{thm:main_convergence}
is then obtained by averaging over $r_0$ and isolating the gradient.

\section{Experiments} \label{sec:experiments}
We experimentally validate our algorithm for non-i.i.d client participation under three
settings: minimizing a synthetic function, logistic regression for Fashion-MNIST
\footnote{Fashion-MNIST is licensed under the MIT License.} \cite{xiao2017/online}, and
training a CNN for CIFAR-10 \cite{krizhevsky2009learning}. We also include an ablation
study on Fashion-MNIST, to investigate how each algorithm is affected by changes in data
heterogeneity, the number of participating clients, and the number of client groups in
cyclic participation.

\subsection{Setup} \label{sec:exp_setup}
All of our experiments utilize a non-i.i.d. client participation pattern similar to
cyclic participation (discussed in Section \ref{sec:participation_patterns}). We
partition the total set of $N$ clients into $\bar{K}$ equally sized subsets, and at each
training round only a single client group is available for participation. In our
experiments, the available group does not change every round; instead, each group is
available for $g$ rounds at a time. Under this pattern, Assumption \ref{ass:sampling} is
satisfied with $P = g\bar{K}$. We refer to $g$ as the availability time.

We evaluate five algorithms: FedAvg \cite{mcmahan2017communication}, FedProx
\cite{li2020federated}, SCAFFOLD \cite{karimireddy2020scaffold}, Amplified FedAvg
\cite{wang2022unified}, and Amplified SCAFFOLD (ours). We tune each algorithm's
parameters by grid search, including learning rate $\eta$, amplification rate $\gamma$,
and FedProx's $\mu$. The search ranges and tuned values can be found in Appendix
\ref{app:experiment_details}. All experiments were run on a single node with eight
NVIDIA A6000 GPUs. Code is available at the following repository:
\url{https://github.com/MingruiLiu-ML-Lab/FL-under-Periodic-Participation}

\textbf{Synthetic}
We evaluate each algorithm's convergence on a difficult objective based on a lower bound
for FedAvg \cite{woodworth2020minibatch}. The objective maps $\mathbb{R}^4$ to
$\mathbb{R}$, is convex, and is parameterized by a smoothness $L$, stochastic gradient
variance $\sigma^2$, and heterogeneity $\kappa$, so that it satisfies Assumption
\ref{ass:obj} by construction. The complete definition of the objective can be found in
Appendix \ref{app:experiment_details}. Since there are only two distinct local
objectives, we set the number of clients $N=2$ and the number of sampled clients $S=1$,
and the number of groups $\bar{K} = 2$. All other settings can be found in Appendix
\ref{app:experiment_details}.

\textbf{Fashion-MNIST and CIFAR-10}
We evaluate each algorithm for training an image classifier, using logistic regression
for Fashion-MNIST and a two-layer CNN for CIFAR-10. To simulate heterogeneous data in
federated learning, we use a common protocol \cite{karimireddy2020scaffold,
wang2022unified}, to partition each dataset into client datasets according to a data
similarity parameter $s$. This protocol is detailed in Appendix
\ref{app:experiment_details}. Following \cite{wang2022unified}, we set the number of
clients $N=250$, data similarity $s=5\%$, and the number of sampled clients per round
$S=10$. For client participation, we set the number of groups $\bar{K}=5$, so that each
group contains clients that have majority label from two different classes. We run all
baselines with 5 different random seeds and report the mean results with error bars in
Section \ref{sec:results} (the radius of each error bar is 1 standard deviation). All
other settings can be found in Appendix \ref{app:experiment_details}.

Additional experimental results are provided in Appendix \ref{app:extra_experiments},
where we compare against extra baselines (FedAdam \cite{reddi2020adaptive}, FedYogi
\cite{reddi2020adaptive}, FedAvg-M \cite{cheng2023momentum}, and Amplified FedAvg with
FedProx regularization), and evaluate training under another non-i.i.d. client
participation pattern.

\subsection{Main Results} \label{sec:results}
Results for the synthetic experiment and CIFAR-10 are shown in Figure
\ref{fig:synthetic_cifar}, and results for Fashion-MNIST are shown in Figure
\ref{fig:fashion_ablation}. We make the following observations:

\begin{figure*}
\begin{center}
\includegraphics[width=0.99\textwidth]{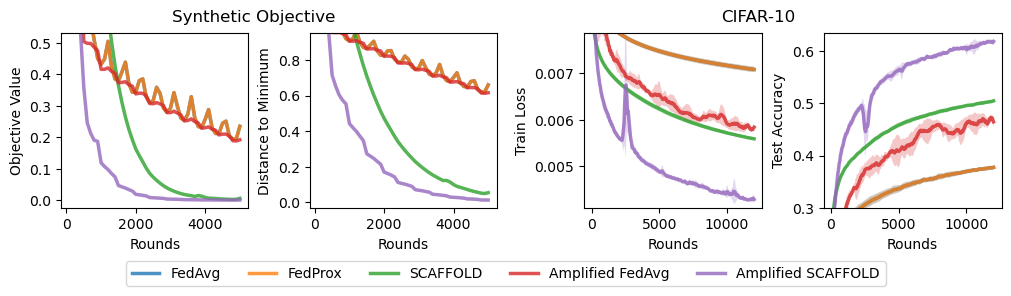}
\end{center}
\caption{Results for synthetic objective and CIFAR-10. Left: Amplified SCAFFOLD and
SCAFFOLD both converge to the global minimum, but Amplified SCAFFOLD converges
significantly faster. Right: Amplified SCAFFOLD converges to the best solution by a
significant margin. Note that in both cases, the curves for FedAvg and FedProx are
nearly overlapping.}
\label{fig:synthetic_cifar}
\end{figure*}

\textbf{Amplified SCAFFOLD converges the fastest.} In all three settings, Amplified
SCAFFOLD reaches the best overall solution among all algorithms (by all metrics) and
requires the fewest communication rounds. In the synthetic experiment, Amplified
SCAFFOLD requires 800 communication rounds to reach an objective value of $0.2$, while
SCAFFOLD requires 1900 rounds, and both FedAvg and Amplified FedAvg require 4800 rounds
to reach the same objective value.

\textbf{Amplified FedAvg is comparable to FedAvg.} Amplified FedAvg shows slight
improvement over FedAvg for the synthetic experiment and for Fashion-MNIST. Only for
CIFAR-10 is Amplified FedAvg significantly faster than FedAvg, but there it also
exhibits a reduction in stability. The underwhelming experimental performance of
Amplified FedAvg corroborates our discussion from Section \ref{sec:pattern_results};
Amplified FedAvg requires many communication rounds and suffers from data heterogeneity.

Contrary to our findings, the original evaluation of Amplified FedAvg
\cite{wang2022unified} showed a significant improvement over FedAvg. One explanation is
that the original evaluation employed pretraining using FedAvg, so that each algorithm
was evaluated only for fine-tuning. Our experiments suggest that Amplified FedAvg may
have limited improvement over FedAvg when training from scratch.

\textbf{SCAFFOLD beats Amplified FedAvg.} Despite a lack of theoretical guarantees under
non-i.i.d. participation, SCAFFOLD outperforms Amplified FedAvg in all settings. This
suggests that SCAFFOLD may have reasonable performance under some non-i.i.d.
participation patterns. For the synthetic objective and CIFAR-10, SCAFFOLD is
still significantly slower than Amplified SCAFFOLD.

\subsection{Ablation Study} \label{sec:ablation}
To understand how each algorithm's performance is affected by data heterogeneity, the
number of participating clients, and the number of client groups, we perform an ablation
study on Fashion-MNIST. First, we fix the data similarity $s = 5\%$ and number of groups
$\bar{K} = 5$ while varying the number of participating clients ($S$) over $\{5, 15, 20,
25\}$. Next, we fix $S = 10, \bar{K} = 5$ while varying the similarity $s$ over
$\{2.5\%, 10\%, 33\%, 100\%\}$. Lastly, we fix $s = 5\%, S = 10$ while varying the
number of client groups $\bar{K} \in \{2, 4, 6, 8\}$. In each of these 12 scenarios, we
evaluate all five algorithms using the same settings as detailed in Section
\ref{sec:setup}. We train with three random seeds for each algorithm, and report the
average results in Figure \ref{fig:fashion_ablation} (right).

\begin{figure*}
\begin{center}
\includegraphics[width=0.99\textwidth]{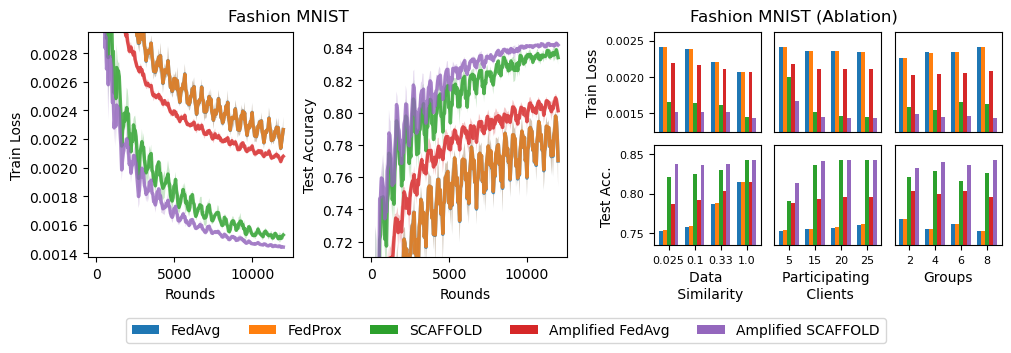}
\end{center}
\caption{Results for Fashion MNIST and ablation study. Left: Amplified SCAFFOLD reaches
the best solution, but SCAFFOLD is competitive. Other baselines are much slower. Right:
Amplified SCAFFOLD is robust to changes in data heterogeneity, number of
participating clients, and number of client groups.}
\label{fig:fashion_ablation}
\end{figure*}

\textbf{Amplified SCAFFOLD reaches the best solution in all settings.} Similarly to
Section \ref{sec:results}, Amplified SCAFFOLD consistently reaches the best solution in
terms of both training loss and testing accuracy. While our theoretical results provide
guarantees for optimization, these experiments show that Amplified SCAFFOLD also
exhibits superior generalization in a variety of settings.

\textbf{Robustness to data heterogeneity.} When changing from completely homogeneous
data ($s=100\%$) to extremely heterogeneous data ($s=2.5\%$), the test accuracy of
Amplified SCAFFOLD exhibits a very small decrease from $84.6\%$ to $84.45\%$, so that
our algorithm behaves nearly identically with homogeneous data as with extremely
heterogeneous data. All baselines suffer a larger decrease when transitioning from
homogeneous data to heterogeneous data.

\textbf{Robustness to number of participating clients.} The number of participating
clients has a smaller effect on performance than data heterogeneity, but some
degradation happens in the extreme case $S = 5$. In particular, SCAFFOLD has competitive
performance with large $S \geq 15$, but its test accuracy drops off significantly
compared to Amplified SCAFFOLD in the case $S = 5$.

\textbf{Robustness to number of client groups.} As $\bar{K}$ increases, FedAvg and
Amplified FedAvg get worse, while SCAFFOLD and Amplified SCAFFOLD maintain performance.
It makes intuitive sense for an algorithm to degrade as $\bar{K}$ increases, since a
larger $\bar{K}$ means that the participation is in some sense ``further" from i.i.d.
participation. Still, Amplified SCAFFOLD (and SCAFFOLD) are able to maintain performance
even as $\bar{K}$ increases. While the worst-case communication complexity of Amplified
SCAFFOLD (listed in Table \ref{tab:complexity}) actually increases with $\bar{K}$, these
experiments demonstrate that in practice, Amplified SCAFFOLD can maintain performance as
$\bar{K}$ increases.

\section{Conclusion} \label{sec:conclusion}
We propose Amplified SCAFFOLD, an optimization algorithm for federated learning under
periodic client participation, and prove that it exhibits reduced communication cost,
linear speedup, and is unaffected by data heterogeneity. We also show that Amplified
SCAFFOLD experimentally outperforms baselines on standard benchmarks under non-i.i.d.
client participation, and that the performance of our algorithm is robust to changes in
data heterogeneity and the number of participating clients.

\textbf{Limitations} While our analysis covers a general class of participation
patterns, it may not cover some participation patterns that appear in practice. Our
framework requires that all clients have an equal chance of participation across
well-defined windows of time that are known to the algorithm implementer, which may not
always hold. One such practical situation is where clients may freely join or leave the
federated learning process during training. Extending our algorithm and guarantees for
this situation would require a reformulation of the optimization problem, and possibly
additional assumptions about the participation structure. We leave such analysis for
future work.

\section*{Acknowledgements}
We would like to thank the anonymous reviewers for their helpful comments. This work is supported by the Institute for Digital Innovation fellowship from George Mason University,
a ORIEI seed funding, an IDIA P3 fellowship from George Mason University,
and a Cisco Faculty Research Award.
Experiments were partially run on Hopper, a research computing cluster provided by the Office of Research Computing at George Mason University (URL: https://orc.gmu.edu).

\bibliography{references}

%%%%%%%%%%%%%%%%%%%%%%%%%%%%%%%%%%%%%%%%%%%%%%%%%%%%%%%%%%%%

\newpage
\appendix
\tableofcontents

\section{Proof of Theorem \ref{thm:main_convergence}} \label{app:main_proof}

\subsection{Preliminary Definitions} \label{app:definitions}
Let $\gG_r$ denote the filtration generated by $\{\xi_{r,k}^i : 0 \leq k \leq I - 1, i
\in [N]\}$, that is, by the randomness in the stochastic gradients during round $r$.
Also, let $\gQ_{r_0}$ denote the filtration generated by $\{q_r^i : r_0 \leq r <
r_0 + P, i \in [N]\}$, that is, the randomness in client sampling between rounds $r_0$
and $r_0 + P - 1$ (inclusive). Also, let $\gG$ denote the filtration generated by $\gG_0 \cup \gG_1 \cup \ldots \cup \gG_{R-1}$ and $\gQ$ denote the filtration generated by
$\gQ_0 \cup \gQ_{P} \cup \ldots \cup \gQ_{R - P}$. Similarly, let $\gG_{:r}$ denote the filtration generated by $\gG_0 \cup \gG_1 \cup \ldots \cup \gG_{r-1}$ and $\gQ_{:r_0}$ denote the filtration
generated by $\gQ_0 \cup \gQ_P \cup \ldots \cup \gQ_{r_0-P}$. Lastly, denote $\mathbb{E}_{r_0}
[\cdot] = \mathbb{E} [\cdot | \gQ_{:r_0}, \gG_{:r_0}]$.

In order to analyze the control variate errors (i.e., $\|\nabla f_i(\bvx_{r_0}) - \mG_{r_0}^i\|$), we introduce notation to refer to the iterates whose stochastic gradients were used to construct $\mG_{r_0}^i$. These iterates are exactly the iterates during the most recent window before $r_0$ in which client $i$ was sampled, i.e., where $\frac{1}{P} \sum_{s=mP}^{(m+1)P - 1} q_s^i > 0$.
For each $r \in \{r_0, r_0 + 1, \ldots, r_0 + P - 1\}$, $k \in \{0, \ldots, I-1\}$, and
$i \in [N]$, let
\begin{align*}
    t_i(r) = \begin{cases}
        r & \text{if } \bar{q}_{r_0}^i > 0 \\
        t_i(r-P) & \text{if } \bar{q}_{r_0}^i = 0
    \end{cases}
\end{align*}
with the initialization $t_i(r) = 0$ for all $r \in \{-P, \ldots, -1\}$. Then denote
\begin{align*}
    \vy_{r,k}^i &= \vx_{t_i(r),k}^i \\
    z_r^i &= q_{t_i(r)}^i \\
    \zeta_{r,k}^i &= \xi_{t_i(r),k}^i.
\end{align*}
Also, denote $\bar{z}_{r_0}^i = \frac{1}{P} \sum_{r=r_0}^{r_0+P-1} z_r^i$. Then we can
rewrite
\begin{align*}
    \mG_{r_0}^i = \frac{1}{P \bar{z}_{r_0-P}^i I} \sum_{s=r_0-P}^{r_0-1} z_s^i \sum_{k=0}^{I-1} \nabla F_i(\vy_{s,k}^i; \zeta_{s,k}^i).
\end{align*}

Define
\begin{align*}
    \bvx_{r,k} &= \sum_{i=1}^N q_r^i \vx_{r,k}^i \\
    D_{r,k} &= \sum_{i=1}^N q_r^i \mathbb{E} \left[ \left\| \vx_{r,k}^i - \bvx_{r,k} \right\|^2 \middle| \gQ \right] \\
    M_{r,k} &= \mathbb{E} \left[ \left\| \bvx_{r,k} - \bvx_{r_0} \right\|^2 \middle| \gQ \right] \\
    S_{r_0}^i &= \frac{1}{IP} \sum_{s=r_0-P}^{r_0-1} \sum_{k=0}^{I-1} \frac{z_s^i}{\bar{z}_{r_0-P}^i} \mathbb{E} \left[ \left\| \vy_{s,k}^i - \bvx_{r_0} \right\|^2 \middle| \gQ \right] \\
    w_{r_0}^i &= \frac{1}{N} \sum_{j=1}^N \frac{\indicator{\bar{q}_{r_0}^j > 0}}{P \bar{q}_{r_0}^j} \sum_{s=r_0}^{r_0+P-1} q_s^i q_s^j \\
    v_{r_0}^i &= \bar{q}_{r_0}^i - \frac{1}{N} \\
    \Lambda_{r_0}^i &= \frac{ \frac{1}{P} \sum_{s=r_0-P}^{r_0-1} \left( z_s^i \right)^2 }{\left( \frac{1}{P} \sum_{s=r_0-P}^{r_0-1} z_s^i \right)^2}.
\end{align*}
As discussed in the main body, $\bvx_{r,k}$ is a weighted average over local client models, weighted according to client participation. $D_{r,k}$ and $M_{r,k}$ are the drift terms described in the proof sketch of Section \ref{sec:proof_sketch}.
In the proof sketch, we also informally discuss the control variate error $C_{r_0}^i = \|\nabla f_i(\bvx_{r_0}) - \mG_{r_0}^i\|^2$. This error is closely related to the term $S_{r_0}^i$ defined above, since $\mathbb{E} \left[ \left\| \nabla f_i(\bvx_{r_0}) - \mathbb{E}[\mG_{r_0}^i | \gQ] \right\|^2 \middle| \gQ \right] = L^2 S_{r_0}^i$.

The quantities $w_{r_0}^i, v_{r_0}^i$ and $\Lambda_{r_0}^i$ arise in our proof from the
use of control variates under non-i.i.d. client participation. As discussed in the proof
sketch (Section \ref{sec:proof_sketch}), bounding the errors introduced by control
variates involves a non-uniform average of non-uniform averages of error terms. This
nested non-uniform average can be bounded by a uniform average as long as the term
$w_{r_0}^i$ is not too large, which is exactly the requirement stated in Equation
\ref{eq:w_cond}. On the other hand, the term $\Lambda_{r_0}^i$ arises while bounding the
variance of the update $\|\bvx_{r_0+P} - \bvx_{r_0}\|$, and it represents the sample
size for which the correction $\mG_{r_0}^i$ is computed. When the client sampling
distribution is closer to uniform, then each client may participate frequently during
each window of $P$ rounds, and the effective sample size used to compute each control
variate is large. In this case, the variance of the update $\| \bvx_{r_0+P} - \bvx_{r_0}
\|$ will be smaller, since larger sample size implies smaller variance. For general
client sampling distributions, $\Lambda_{r_0}^i$ describes the reduction of variance of
one component of the update (i.e. $\mG_{r_0}^i$) due to the sampling of stochastic
gradients during the previous window of $P$ rounds. In our analysis, the variance of the
update $\|\bvx_{r_0+P} - \bvx_{r_0}\|$ can be bounded in terms of the variance of
$\mG_{r_0}^i$, which in turn depends on $\Lambda_{r_0}^i$. The condition in
\Eqref{eq:v_cond} essentially enforces that clients participate sufficiently uniformly,
so that the variance of $\mG_{r_0}^i$ can be bounded. We note that both
\Eqref{eq:w_cond} and \Eqref{eq:v_cond} are satisfied by i.i.d. participation,
regularized participation, and cyclic participation.

Finally, for ease of exposition, one clarification must be made about the conditional
expectation $\mathbb{E}[ \cdot | \gQ ]$ as it appears in, for example, $M_{r,k}$. The
filtration $\gQ$ contains randomness from every round from $0$ to $R-1$, but the
expression of which we are taking expectation may only depend on randomness for a
smaller subset rounds.  For example, in $M_{r,k}$, the expression $\|\bvx_{r,k} -
\bvx_{r_0}\|^2$ only depends on the randomness up to round $r \leq r_0 + P$, so that
\begin{equation*}
    \mathbb{E} \left[ \|\bvx_{r,k} - \bvx_{r_0}\|^2 \middle| \gQ \right] = \mathbb{E} \left[ \|\bvx_{r,k} - \bvx_{r_0}\|^2 \middle| \gQ_{:r_0+P} \right].
\end{equation*}
In several places throughout the proof, we will replace $\mathbb{E}[ \cdot | \gQ]$ by
$\mathbb{E}[ \cdot | \gQ_{:r_0+P}]$ in similar situations.

\subsection{Proofs}
\begin{lemma} \label{lem:dm_ub}
If $\eta \leq \frac{1}{60 LIP}$ and $\mathbb{E}_{r_0}[\bar{q}_{r_0}^i] = \frac{1}{N}$, then
\begin{align}
    D_{r,k} + M_{r,k} &\leq 108 \eta^2 I^2 P^2 \mathbb{E} \left[ \left\| \nabla f(\bvx_{r_0}) \right\|^2 \middle| \gQ \right] + \left( 75 \eta^2 I + 65 \eta^2 IP \rho^2 \right) \sigma^2 \nonumber \\
    &\quad + 109 \eta^2 L^2 I^2 P^2 \sum_{i=1}^N \left( \bar{q}_{r_0}^i + \frac{1}{N} \right) S_{r_0}^i + 36 \eta^2 L^2 I^2 \sum_{i=1}^N q_r^i S_{r_0}^i , \label{eq:dm_ub_single}
\end{align}
and
\begin{align}
    \mathbb{E} \left[ \sum_{r=r_0}^{r_0+P-1} \sum_{k=0}^{I-1} (D_{r,k} + M_{r,k}) \right] &\leq 108 \eta^2 I^3 P^3 \mathbb{E} \left[ \left\| \nabla f(\bvx_{r_0}) \right\|^2 \right] + \left( 75 \eta^2 I^2 P + 65 \eta^2 I^2 P^2 \rho^2 \right) \sigma^2 \nonumber \\
    &\quad + 254 \eta^2 L^2 I^3 P^3 \frac{1}{N} \sum_{i=1}^N \mathbb{E} \left[ S_{r_0}^i \right]. \label{eq:dm_ub_sum}
\end{align}
\end{lemma}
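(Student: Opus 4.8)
The plan is to prove the per-term estimate \Eqref{eq:dm_ub_single} first and then obtain the summed bound \Eqref{eq:dm_ub_sum} by summation over the window. Throughout I would condition on $\gQ$, so that the participation weights $q_r^i, \bar{q}_{r_0}^i$ are frozen and the only remaining randomness is in the stochastic gradients. Writing the corrected local update direction of client $i$ as $\vg_{r,k}^i$ and its participation-weighted average as $\bar{\vg}_{r,k} = \sum_j q_r^j \vg_{r,k}^j$, the one-step identities $\vx_{r,k+1}^i - \bvx_{r,k+1} = (\vx_{r,k}^i - \bvx_{r,k}) - \eta(\vg_{r,k}^i - \bar{\vg}_{r,k})$ and $\bvx_{r,k+1} - \bvx_{r_0} = (\bvx_{r,k} - \bvx_{r_0}) - \eta \bar{\vg}_{r,k}$ are the starting point. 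Applying $\|a-b\|^2 \le (1+s)\|a\|^2 + (1+s^{-1})\|b\|^2$ with $s = \Theta(1/(IP))$ yields recurrences for $D_{r,k}$ and $M_{r,k}$ whose multiplicative factor is $1 + \Theta(1/(IP))$; since the average model is measured against the fixed reference $\bvx_{r_0}$ over the whole window, these must be unrolled over the full horizon of $IP$ effective local updates, which costs only a constant $(1+\Theta(1/(IP)))^{IP} \le e$.

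The heart of the argument is bounding the driving terms $\vg_{r,k}^i - \bar{\vg}_{r,k}$ and $\bar{\vg}_{r,k}$. I would first take conditional expectation over the stochastic gradients to peel off the variance: the per-client noise contributes the $75\eta^2 I \sigma^2$ term, while the variance of the weighted average $\bar{\vg}_{r,k}$ contributes $65\eta^2 IP \rho^2 \sigma^2$, with the factor $\rho^2$ coming from $\sum_i (q_r^i)^2 \le \rho^2$ in Assumption \ref{ass:sampling}(a). For the conditional means, $L$-smoothness replaces each $\nabla f_i(\vx_{r,k}^i)$ by $\nabla f_i(\bvx_{r_0})$ at the cost of $L^2\|\vx_{r,k}^i - \bvx_{r_0}\|^2$ terms; these feed back into $D_{r,k}$ and $M_{r,k}$, closing the coupled recurrence, and the surviving mean direction accumulates over the $IP$ steps into the $108\eta^2 I^2 P^2 \|\nabla f(\bvx_{r_0})\|^2$ term. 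The control variates $\mG_{r_0}^i$ and their average combine with $\nabla f_i(\bvx_{r_0})$ so that only the control-variate error remains, which I would handle through the given identity $\mathbb{E}[\|\nabla f_i(\bvx_{r_0}) - \mathbb{E}[\mG_{r_0}^i \mid \gQ]\|^2 \mid \gQ] = L^2 S_{r_0}^i$ together with the variance of $\mG_{r_0}^i$; tracking whether a given client's $S_{r_0}^i$ enters through the weight $q_r^i$ or through the $(\bar{q}_{r_0}^i + 1/N)$ weight produces the two distinct control-variate sums in \Eqref{eq:dm_ub_single}.

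For \Eqref{eq:dm_ub_sum} I would sum \Eqref{eq:dm_ub_single} over $r \in \{r_0, \dots, r_0+P-1\}$ and $k \in \{0,\dots,I-1\}$, which multiplies the gradient, noise, and $\sum_i(\bar{q}_{r_0}^i + 1/N) S_{r_0}^i$ contributions by the $IP$ summands and turns $\sum_{r,k} q_r^i$ into $IP\,\bar{q}_{r_0}^i$, and then take full expectation. The crucial point is that $S_{r_0}^i$ is determined entirely by the previous window (it is $\gQ_{:r_0},\gG_{:r_0}$-measurable), whereas $\bar{q}_{r_0}^i$ and the current-window $q_r^i$ average to $\mathbb{E}_{r_0}[\bar{q}_{r_0}^i] = 1/N$ by the unbiasedness hypothesis; the tower rule then replaces each non-uniform average $\sum_i \bar{q}_{r_0}^i S_{r_0}^i$ and $\sum_i q_r^i S_{r_0}^i$ by the uniform $\frac{1}{N}\sum_i \mathbb{E}[S_{r_0}^i]$, and absorbing $P \ge 1$ into the top power of $P$ collapses the control-variate contributions into the single $254\eta^2 L^2 I^3 P^3 \frac{1}{N}\sum_i \mathbb{E}[S_{r_0}^i]$. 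Only the unbiasedness hypothesis is needed here; the regularity conditions \Eqref{eq:w_cond} and \Eqref{eq:v_cond} are reserved for the later control-variate recursion, not for this lemma.

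I expect the main obstacle to be the coupled recurrence itself. In contrast to the standard single-round local-SGD drift analysis, here $D_{r,k}$ and $M_{r,k}$ must be unrolled \emph{jointly} across both the $I$ local steps and the $P$ rounds of a window --- an effective horizon of $IP$ updates, which is exactly why $\eta \le 1/(60LIP)$ is the natural step-size threshold for keeping the accumulated factor $O(1)$ and why the per-term bound carries $I^2P^2$ rather than $I^2$. The round-to-round transitions are delicate because the participation weights change between rounds, so the re-weighting of $\bvx_{r,k}$ must be absorbed into the client-spread term $D_{r,k}$ rather than treated as a clean continuation of a single sequence. A secondary subtlety, which is what ultimately enables the improved rate, is maintaining a clean separation of the two randomness sources: the gradient recursion is unrolled under the conditioning on $\gQ$, and the expectation over participation is deferred to the final summation so that Assumption \ref{ass:sampling}(b) can be invoked cleanly without entangling it with the drift recursion.
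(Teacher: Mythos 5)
Your overall skeleton matches the paper's proof: condition on $\gQ$, split each corrected update $\vg_{r,k}^i$ into its conditional mean $\bvg_{r,k}^i$ plus noise (per-client noise of size $O(\sigma^2)$ and $\rho^2$-weighted averaged noise, via $\sum_i (q_r^i)^2\le\rho^2$), use smoothness to reduce the mean part to $D_{r,k}$, $M_{r,k}$, $\|\nabla f(\bvx_{r_0})\|^2$ and the control-variate terms $S_{r_0}^i$, and invoke $\mathbb{E}_{r_0}[\bar q_{r_0}^i]=\frac{1}{N}$ only at the end (via the tower property) to pass from \Eqref{eq:dm_ub_single} to \Eqref{eq:dm_ub_sum}; you are also right that \Eqref{eq:w_cond} and \Eqref{eq:v_cond} play no role in this lemma. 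The gap is in the coupled recursion, which is the actual technical content. You place both recurrences on the single timescale $IP$: Young parameter $s=\Theta(1/(IP))$ for both $D$ and $M$, both unrolled over the full horizon of $IP$ steps, with round boundaries handled by "absorbing the re-weighting into $D_{r,k}$." In the algorithm, participating clients re-synchronize to the running aggregate at the start of every round, so $\bvx_{r,0}=\bvx_{r-1,I}$ exactly and $D_{r,0}=0$ for every $r$; there is no re-weighting error to absorb, and $D$ lives on the horizon $I$, not $IP$. The paper's proof exploits exactly this two-timescale structure: the $D$-recursion uses Young parameter $2I$ and is unrolled only within the current round, the $M$-recursion uses $2IP$ and is unrolled across the whole window, and the two are coupled through an induction (the $\alpha_{s,t},\beta_{s,t},\psi_{s,t},\phi_{s,t}$ sequences) that occupies most of the proof — considerably more than the "constant factor" of work you allot to it.

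This is not cosmetic: the single-timescale unrolling cannot produce the claimed right-hand side. The per-client noise enters the $D$-recursion as $36\eta^2\sigma^2$ per step \emph{without} a $\rho^2$ factor; accumulated over $I$ steps (thanks to the reset $D_{r,0}=0$) it yields the $75\eta^2 I\sigma^2$ term, but accumulated over $IP$ steps, as in your plan, it yields $\Theta(\eta^2 IP\sigma^2)$. Since $\rho^2$ can be as small as $\frac{1}{N}\ll\frac{1}{P}$, a term $\Theta(\eta^2 IP\sigma^2)$ is not dominated by $\left(75\eta^2 I+65\eta^2 IP\rho^2\right)\sigma^2$, so the stated inequality would not follow from your argument. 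The same misidentification inflates the within-round control-variate sum $36\eta^2 L^2 I^2\sum_i q_r^i S_{r_0}^i$ by at least a factor of $P$. To repair the proposal you need to (i) recognize that $D$ resets at every round boundary and unroll it only over the $I$ local steps with Young parameter $\Theta(I)$, (ii) keep $M$ on the $IP$ scale against the fixed reference $\bvx_{r_0}$, and (iii) carry out the joint unrolling of the resulting two-timescale coupled recursion, which is where the stated constants come from.
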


\begin{proof}
Denote
\begin{align*}
    \bar{\mG}_{r_0}^i &= \frac{1}{P \bar{z}_{r_0-P}^i I} \sum_{s=r_0-P}^{r_0-1} z_s^i \sum_{k=0}^{I-1} \nabla f_i(\vy_{s,k}^i) \\
    \bar{\mG}_{r_0} &= \frac{1}{N} \sum_{i=1}^N \bar{\mG}_{r_0}^i.
\end{align*}
Also, denote $\vg_{r,k}^i = \nabla F_i(\vx_{r,k}^i; \xi_{r,k}^i) - \mG_{r_0}^i +
\mG_{r_0}$ and $\bvg_{r,k}^i = \nabla f_i(\vx_{r,k}^i) - \bar{\mG}_{r_0}^i +
\bar{\mG}_{r_0}$. $\bar{\mG}_{r_0}^i$ is the analogue of $\bar{\mG}_{r_0}$ that depends
on deterministic gradients $\nabla f_i(\vy_{s,k}^i)$ instead of stochastic gradients.

\paragraph{Variance of updates}
We first compute the errors $\mathbb{E} \left[ \left\| \vg_{r,k}^i - \bar{\vg}_{r,k}^i
\right\|^2 \middle| \gQ \right]$ and $\mathbb{E} \left[ \left\| \sum_{i=1}^N q_r^i
\left( \vg_{r,k}^i - \bar{\vg}_{r,k}^i \right) \right\|^2 \middle| \gQ \right]$, which
will be needed in several places.
\begin{align}
    \mathbb{E} \left[ \left\| \mG_{r_0}^i - \bar{\mG}_{r_0}^i \right\|^2 \middle| \gQ \right] &= \mathbb{E} \left[ \left\| \frac{1}{P \bar{z}_{r_0-P}^i I} \sum_{s=r_0-P}^{r_0-1} \sum_{k=0}^{I-1} z_s^i \left( \nabla F_i(\vy_{s,k}^i; \zeta_{s,k}^i) - \nabla f_i(\vy_{s,k}^i) \right) \right\|^2 \middle| \gQ \right] \nonumber \\
    &\Eqmark{i}{\leq} \frac{1}{P \bar{z}_{r_0-P}^i I} \sum_{s=r_0-P}^{r_0-1} \sum_{k=0}^{I-1} z_s^i \mathbb{E} \left[ \left\| \nabla F_i(\vy_{s,k}^i; \zeta_{s,k}^i) - \nabla f_i(\vy_{s,k}^i) \right\|^2 \middle| \gQ \right] \nonumber \\
    &\leq \frac{1}{P \bar{z}_{r_0-P}^i I} \sum_{s=r_0-P}^{r_0-1} \sum_{k=0}^{I-1} z_s^i \sigma^2 \nonumber \\
    &= \sigma^2, \label{eq:update_var_inter}
\end{align}
where $(i)$ uses Jensen's inequality. Therefore
\begin{align}
    &\mathbb{E} \left[ \left\| \vg_{r,k}^i - \bar{\vg}_{r,k}^i \right\|^2 \middle| \gQ \right] \nonumber \\
    &\quad = \mathbb{E} \left[ \left\| \left( \nabla F_i(\vx_{r,k}^i; \xi_{r,k}^i) - \nabla f_i(\vx_{r,k}^i) \right) - \left( \mG_{r_0}^i - \bar{\mG}_{r_0}^i \right) + \left( \mG_{r_0} - \bar{\mG}_{r_0} \right) \right\|^2 \middle| \gQ \right] \nonumber \\
    &\quad \leq 3 \mathbb{E} \left[ \left\| \nabla F_i(\vx_{r,k}^i; \xi_{r,k}^i) - \nabla f_i(\vx_{r,k}^i) \right\|^2 \middle| \gQ \right] + 3 \mathbb{E} \left[ \left\| \mG_{r_0}^i - \bar{\mG}_{r_0}^i \right\|^2 \middle| \gQ \right] \nonumber \\
    &\quad\quad + 3 \mathbb{E} \left[ \left\| \mG_{r_0} - \bar{\mG}_{r_0} \right\|^2 \middle| \gQ \right] \nonumber \\
    &\quad = 3 \mathbb{E} \left[ \left\| \nabla F_i(\vx_{r,k}^i; \xi_{r,k}^i) - \nabla f_i(\vx_{r,k}^i) \right\|^2 \middle| \gQ \right] + 3 \mathbb{E} \left[ \left\| \mG_{r_0}^i - \bar{\mG}_{r_0}^i \right\|^2 \middle| \gQ \right] \nonumber \\
    &\quad\quad + 3 \mathbb{E} \left[ \left\| \frac{1}{N} \sum_{j=1}^N \left( \mG_{r_0}^j - \bar{\mG}_{r_0}^j \right) \right\|^2 \middle| \gQ \right] \nonumber \\
    &\quad = 3 \mathbb{E} \left[ \left\| \nabla F_i(\vx_{r,k}^i; \xi_{r,k}^i) - \nabla f_i(\vx_{r,k}^i) \right\|^2 \middle| \gQ \right] + 3 \mathbb{E} \left[ \left\| \mG_{r_0}^i - \bar{\mG}_{r_0}^i \right\|^2 \middle| \gQ \right] \nonumber \\
    &\quad\quad + 3 \frac{1}{N} \sum_{j=1}^N \mathbb{E} \left[ \left\| \mG_{r_0}^j - \bar{\mG}_{r_0}^j \right\|^2 \middle| \gQ \right] \nonumber \\
    &\quad \leq 9 \sigma^2, \label{eq:update_var}
\end{align}
where the last line uses \Eqref{eq:update_var_inter}. Also
\begin{align}
    &\mathbb{E} \left[ \left\| \sum_{i=1}^N q_r^i \left( \vg_{r,k}^i - \bar{\vg}_{r,k}^i \right) \right\|^2 \middle| \gQ \right] \nonumber \\
    &\quad = \mathbb{E} \left[ \left\| \sum_{i=1}^N q_r^i \left( \left( \nabla F_i(\vx_{r,k}^i; \xi_{r,k}^i) - \nabla f_i(\vx_{r,k}^i) \right) - \left( \mG_{r_0}^i - \bar{\mG}_{r_0}^i \right) + \left( \mG_{r_0} - \bar{\mG}_{r_0} \right) \right) \right\|^2 \middle| \gQ \right] \nonumber \\
    &\quad \leq 3 \mathbb{E} \left[ \left\| \sum_{i=1}^N q_r^i \left( \nabla F_i(\vx_{r,k}^i; \xi_{r,k}^i) - \nabla f_i(\vx_{r,k}^i) \right) \right\|^2 \middle| \gQ \right] + 3 \mathbb{E} \left[ \left\| \sum_{i=1}^N q_r^i \left( \mG_{r_0}^i - \bar{\mG}_{r_0}^i \right) \right\|^2 \middle| \gQ \right] \nonumber \\
    &\quad\quad + 3 \mathbb{E} \left[ \left\| \sum_{i=1}^N q_r^i \left( \mG_{r_0} - \bar{\mG}_{r_0} \right) \right\|^2 \middle| \gQ \right] \nonumber \\
    &\quad = 3 \mathbb{E} \left[ \left\| \sum_{i=1}^N q_r^i \left( \nabla F_i(\vx_{r,k}^i; \xi_{r,k}^i) - \nabla f_i(\vx_{r,k}^i) \right) \right\|^2 \middle| \gQ \right] + 3 \mathbb{E} \left[ \left\| \sum_{i=1}^N q_r^i \left( \mG_{r_0}^i - \bar{\mG}_{r_0}^i \right) \right\|^2 \middle| \gQ \right] \nonumber \\
    &\quad\quad + 3 \mathbb{E} \left[ \left\| \sum_{i=1}^N \frac{1}{N} \left( \mG_{r_0}^i - \bar{\mG}_{r_0}^i \right) \right\|^2 \middle| \gQ \right] \nonumber \\
    &\quad \Eqmark{i}{=} 3 \sum_{i=1}^N \mathbb{E} \left[ \left\| q_r^i \left( \nabla F_i(\vx_{r,k}^i; \xi_{r,k}^i) - \nabla f_i(\vx_{r,k}^i) \right) \right\|^2 \middle| \gQ \right] + 3 \sum_{i=1}^N \mathbb{E} \left[ \left\| q_r^i \left( \mG_{r_0}^i - \bar{\mG}_{r_0}^i \right) \right\|^2 \middle| \gQ \right] \nonumber \\
    &\quad\quad + 3 \sum_{i=1}^N \mathbb{E} \left[ \left\| \frac{1}{N} \left( \mG_{r_0}^i - \bar{\mG}_{r_0}^i \right) \right\|^2 \middle| \gQ \right] \nonumber \\
    &\quad = 3 \sum_{i=1}^N \left( q_r^i \right)^2 \mathbb{E} \left[ \left\| \nabla F_i(\vx_{r,k}^i; \xi_{r,k}^i) - \nabla f_i(\vx_{r,k}^i) \right\|^2 \middle| \gQ \right] + 3 \sum_{i=1}^N \left( q_r^i \right)^2 \mathbb{E} \left[ \left\| \mG_{r_0}^i - \bar{\mG}_{r_0}^i \right\|^2 \middle| \gQ \right] \nonumber \\
    &\quad\quad + 3 \frac{1}{N^2} \sum_{i=1}^N \mathbb{E} \left[ \left\| \mG_{r_0}^i - \bar{\mG}_{r_0}^i \right\|^2 \middle| \gQ \right] \nonumber \\
    &\quad \leq 6 \sigma^2 \sum_{i=1}^N \left( q_r^i \right)^2 + 3 \frac{\sigma^2}{N} \nonumber \\
    &\quad \leq \sigma^2 \left( 6 \rho^2 + 3 \frac{1}{N} \right) \nonumber \\
    &\quad \Eqmark{ii}{\leq} 9 \rho^2 \sigma^2, \label{eq:avg_update_var}
\end{align}
where $(i)$ uses the fact that stochastic gradient noise is independent across each
client, and $(ii)$ uses $\rho^2 \geq \frac{1}{N}$.

\paragraph{One-step recursive bound for $D_{r,k}$} For any $k \geq 0$,
\begin{align}
    D_{r,k+1} &= \sum_{i=1}^N q_r^i \mathbb{E} \left[ \left\| \vx_{r,k+1}^i - \sum_{j=1}^N q_r^j \vx_{r,k+1}^j \right\|^2 \middle| \gQ \right] \nonumber \\
    &= \sum_{i=1}^N q_r^i \mathbb{E} \left[ \left\| \vx_{r,k}^i - \eta \vg_{r,k}^i - \sum_{j=1}^N q_r^j (\vx_{r,k}^j - \eta \vg_{r,k}^j) \right\|^2 \middle| \gQ \right] \nonumber \\
    &= \sum_{i=1}^N q_r^i \mathbb{E} \left[ \left\| \vx_{r,k}^i - \sum_{j=1}^N q_r^j \vx_{r,k}^j - \eta \left( \vg_{r,k}^i - \sum_{j=1}^N q_r^j \vg_{r,k}^j \right) \right\|^2 \middle| \gQ \right] \nonumber \\
    &= \sum_{i=1}^N q_r^i \mathbb{E} \left[ \left\| \vx_{r,k}^i - \sum_{j=1}^N q_r^j \vx_{r,k}^j - \eta \left( \bvg_{r,k}^i - \sum_{j=1}^N q_r^j \bvg_{r,k}^j \right) \right\|^2 \middle| \gQ \right] \nonumber \\
    &\quad + \eta^2 \sum_{i=1}^N q_r^i \mathbb{E} \left[ \left\| (\vg_{r,k}^i - \bar{\vg}_{r,k}^i) + \sum_{j=1}^N q_r^j (\vg_{r,k}^j - \bar{\vg}_{r,k}^j) \right\|^2 \middle| \gQ \right] \nonumber \\
    &\Eqmark{i}{=} \sum_{i=1}^N q_r^i \mathbb{E} \left[ \left\| \vx_{r,k}^i - \sum_{j=1}^N q_r^j \vx_{r,k}^j - \eta \left( \bvg_{r,k}^i - \sum_{j=1}^N q_r^j \bvg_{r,k}^j \right) \right\|^2 \middle| \gQ \right] + 36 \eta^2 \sigma^2 \nonumber \\
    &\Eqmark{ii}{\leq} \left( 1 + \frac{1}{\lambda_1} \right) \sum_{i=1}^N q_r^i \mathbb{E} \left[ \left\| \vx_{r,k}^i - \sum_{j=1}^N q_r^j \vx_{r,k}^j \right\|^2 \middle| \gQ \right] \nonumber \\
    &\quad + \eta^2 (1 + \lambda_1) \sum_{i=1}^N q_r^i \mathbb{E} \left[ \left\| \bvg_{r,k}^i - \sum_{j=1}^N q_r^j \bvg_{r,k}^j \right\|^2 \middle| \gQ \right] + 36 \eta^2 \sigma^2 \nonumber \\
    &= \left( 1 + \frac{1}{\lambda_1} \right) D_{r,k} + \eta^2 (1 + \lambda_1) \sum_{i=1}^N q_r^i \mathbb{E} \left[ \left\| \bvg_{r,k}^i - \sum_{j=1}^N q_r^j \bvg_{r,k}^j \right\|^2 \middle| \gQ \right] + 36 \eta^2 \sigma^2 , \label{eq:D_ub_1}
\end{align}
where $(i)$ uses
\begin{align*}
    &\mathbb{E} \left[ \left\| (\vg_{r,k}^i - \bar{\vg}_{r,k}^i) + \sum_{j=1}^N q_r^j (\vg_{r,k}^j - \bar{\vg}_{r,k}^j) \right\|^2 \middle| \gQ \right] \\
    &\quad \leq 2 \mathbb{E} \left[ \left\| \vg_{r,k}^i - \bar{\vg}_{r,k}^i \right\|^2 \middle| \gQ \right] + 2 \mathbb{E} \left[ \left\| \sum_{j=1}^N q_r^j (\vg_{r,k}^j - \bar{\vg}_{r,k}^j) \right\|^2 \middle| \gQ \right] \\
    &\quad \leq 2 \mathbb{E} \left[ \left\| \vg_{r,k}^i - \bar{\vg}_{r,k}^i \right\|^2 \middle| \gQ \right] + 2 \sum_{j=1}^N q_r^j \mathbb{E} \left[ \left\| \vg_{r,k}^j - \bar{\vg}_{r,k}^j \right\|^2 \middle| \gQ \right] \\
    &\quad \Eqmark{a}{\leq} 36 \sigma^2,
\end{align*}
$(a)$ uses \Eqref{eq:update_var}, and $(ii)$ uses Young's inequality. Focusing on the
second term of \Eqref{eq:D_ub_1}:
\begin{align*}
    &\quad \sum_{i=1}^N q_r^i \mathbb{E} \left[ \left\| \bvg_{r,k}^i - \sum_{j=1}^N q_r^j \bvg_{r,k}^j \right\|^2 \middle| \gQ \right] = \sum_{i=1}^N q_r^i \mathbb{E} \left[ \left\| \sum_{j=1}^N q_r^j (\bvg_{r,k}^i - \bvg_{r,k}^j) \right\|^2 \middle| \gQ \right] \\
    &\Eqmark{i}{\leq} \sum_{i=1}^N q_r^i \sum_{j=1}^N q_r^j \mathbb{E} \left[ \left\| \bvg_{r,k}^i - \bvg_{r,k}^j \right\|^2 \middle| \gQ \right] \\
    &\leq \sum_{i=1}^N q_r^i \sum_{j=1}^N q_r^j \mathbb{E} \left[ \left\| \nabla f_i(\vx_{r,k}^i) - \bar{\mG}_{r_0}^i - \nabla f_j(\vx_{r,k}^j) + \bar{\mG}_{r_0}^j \right\|^2 \middle| \gQ \right] \\
    &\leq \sum_{i=1}^N q_r^i \sum_{j=1}^N q_r^j \bigg( 2 \mathbb{E} \left[ \left\| \nabla f_i(\vx_{r,k}^i) - \bar{\mG}_{r_0}^i \right\|^2 \middle| \gQ \right] + 2 \mathbb{E} \left[ \left\| \nabla f_j(\vx_{r,k}^j) - \bar{\mG}_{r_0}^j \right\|^2 \middle| \gQ \right] \bigg) \\
    &= 4 \sum_{i=1}^N q_r^i \mathbb{E} \left[ \left\| \nabla f_i(\vx_{r,k}^i) - \bar{\mG}_{r_0}^i \right\|^2 \middle| \gQ \right],
\end{align*}
where $(i)$ uses Jensen's inequality. Using the decomposition
\begin{equation*}
    \nabla f_i(\vx_{r,k}^i) - \bar{\mG}_{r_0}^i = (\nabla f_i(\vx_{r,k}^i) - \nabla f_i(\bvx_{r,k})) + (\nabla f_i(\bvx_{r,k}) - \nabla f_i(\bvx_{r_0})) + (\nabla f_i(\bvx_{r_0}) - \bar{\mG}_{r_0}^i),
\end{equation*}
we have
\begin{align*}
    &\sum_{i=1}^N q_r^i \mathbb{E} \left[ \left\| \bvg_{r,k}^i - \sum_{j=1}^N q_r^j \bvg_{r,k}^j \right\|^2 \middle| \gQ \right] \\
    &\leq 4 \sum_{i=1}^N q_r^i \bigg( 3 \mathbb{E} \left[ \left\| \nabla f_i(\vx_{r,k}^i) - \nabla f_i(\bvx_{r,k}) \right\|^2 \middle| \gQ \right] + 3 \mathbb{E} \left[ \left\| \nabla f_i(\bvx_{r,k}) - \nabla f_i(\bvx_{r_0}) \right\|^2 \middle| \gQ \right] \\
    &\quad + 3 \mathbb{E} \left[ \left\| \nabla f_i(\bvx_{r_0}) - \bar{\mG}_{r_0}^i \right\|^2 \middle| \gQ \right] \bigg) \\
    &\leq 12 L^2 \sum_{i=1}^N q_r^i \mathbb{E} \left[ \left\| \vx_{r,k}^i - \bvx_{r,k} \right\|^2 \middle| \gQ \right] + 12 L^2 \sum_{i=1}^N q_r^i \mathbb{E} \left[ \left\| \bvx_{r,k} - \bvx_{r_0} \right\|^2 \middle| \gQ \right] \\
    &\quad + 12 \sum_{i=1}^N q_r^i \mathbb{E} \left[ \left\| \nabla f_i(\bvx_{r_0}) - \bar{\mG}_{r_0}^i \right\|^2 \middle| \gQ \right] \\
    &= 12 L^2 D_{r,k} + 12 L^2 M_{r,k} + 12 \sum_{i=1}^N q_r^i \mathbb{E} \left[ \left\| \nabla f_i(\bvx_{r_0}) - \frac{1}{P \bar{z}_{r_0-P}^i I} \sum_{s=r_0-P}^{r_0-1} \sum_{k=0}^{I-1} z_s^i \nabla f_i(\vy_{s,k}^i) \right\|^2 \middle| \gQ \right] \\
    &= 12 L^2 D_{r,k} + 12 L^2 M_{r,k} + 12 \sum_{i=1}^N q_r^i \mathbb{E} \left[ \left\| \frac{1}{P \bar{z}_{r_0-P}^i I} \sum_{s=r_0-P}^{r_0-1} \sum_{k=0}^{I-1} z_s^i (\nabla f_i(\bvx_{r_0}) - \nabla f_i(\vy_{s,k}^i)) \right\|^2 \middle| \gQ \right] \\
    &\leq 12 L^2 D_{r,k} + 12 L^2 M_{r,k} + 12 \sum_{i=1}^N q_r^i \left( \frac{1}{P \bar{z}_{r_0-P}^i I} \sum_{s=r_0-P}^{r_0-1} \sum_{k=0}^{I-1} z_s^i \mathbb{E} \left[ \left\| \nabla f_i(\bvx_{r_0}) - \nabla f_i(\vy_{s,k}^i) \right\|^2 \middle| \gQ \right] \right) \\
    &\leq 12 L^2 D_{r,k} + 12 L^2 M_{r,k} + 12 L^2 \sum_{i=1}^N q_r^i \left( \frac{1}{P \bar{z}_{r_0-P}^i I} \sum_{s=r_0-P}^{r_0-1} \sum_{k=0}^{I-1} z_s^i \mathbb{E} \left[ \left\| \bvx_{r_0} - \vy_{s,k}^i \right\|^2 \middle| \gQ \right] \right) \\
    &\leq 12 L^2 D_{r,k} + 12 L^2 M_{r,k} + 12 L^2 \sum_{i=1}^N q_r^i S_{r_0}^i.
\end{align*}
Plugging back into \Eqref{eq:D_ub_1},
\begin{align*}
    D_{r,k+1} &\leq \left( 1 + \frac{1}{\lambda_1} \right) D_{r,k} + 12 \eta^2 L^2 (1 + \lambda_1) \left( D_{r,k} + M_{r,k} + \sum_{i=1}^N q_r^i S_{r_0}^i \right) + 36 \eta^2 \sigma^2 \\
    &\leq \left( 1 + \frac{1}{\lambda_1} + 12 \eta^2 L^2 (1 + \lambda_1) \right) D_{r,k} + 12 \eta^2 L^2 (1 + \lambda_1) \left( M_{r,k} + \sum_{i=1}^N q_r^i S_{r_0}^i \right) + 36 \eta^2 \sigma^2.
\end{align*}
Now choose $\lambda_1 = 2I$, so that $1 \leq \frac{\lambda_1}{2}$ and
\begin{equation*}
    12 \eta^2 L^2 (1 + \lambda_1) \leq 18 \eta^2 L^2 \lambda_1 \leq 36 \eta^2 L^2 I \leq \frac{1}{2I},
\end{equation*}
where we used the condition $\eta \leq \frac{1}{60 LIP}$. This yields the
following recursive bound on $D_{r,k+1}$:
\begin{equation} \label{eq:D_rec_ub}
    D_{r,k+1} \leq \left( 1 + \frac{1}{I} \right) D_{r,k} + 18 \eta^2 L^2 I M_{r,k} + 18 \eta^2 L^2 I \sum_{i=1}^N q_r^i S_{r_0}^i + 36 \eta^2 \sigma^2.
\end{equation}

\paragraph{One-step recursive bound for $M_{r,k}$} For any $k \geq 0$,
\begin{align}
    M_{r,k+1} &= \mathbb{E} \left[ \left\| \sum_{i=1}^N q_r^i \vx_{r,k+1}^i - \bvx_{r_0} \right\|^2 \middle| \gQ \right] \nonumber \\
    &= \mathbb{E} \left[ \left\| \sum_{i=1}^N q_r^i \vx_{r,k}^i - \bvx_{r_0} - \eta \sum_{i=1}^N q_r^i \vg_{r,k}^i \right\|^2 \middle| \gQ \right] \nonumber \\
    &\leq \mathbb{E} \left[ \left\| \sum_{i=1}^N q_r^i \vx_{r,k}^i - \bvx_{r_0} - \eta \sum_{i=1}^N q_r^i \bvg_{r,k}^i \right\|^2 \middle| \gQ \right] + \eta^2 \mathbb{E} \left[ \left\| \sum_{i=1}^N q_r^i (\vg_{r,k}^i - \bar{\vg}_{r,k}^i) \right\|^2 \middle| \gQ \right] \nonumber \\
    &\Eqmark{i}{\leq} \mathbb{E} \left[ \left\| \sum_{i=1}^N q_r^i \vx_{r,k}^i - \bvx_{r_0} - \eta \sum_{i=1}^N q_r^i \bvg_{r,k}^i \right\|^2 \middle| \gQ \right] + 9 \eta^2 \rho^2 \sigma^2 \nonumber \\
    &\Eqmark{ii}{\leq} \left( 1 + \frac{1}{\lambda_2} \right) \mathbb{E} \left[ \left\| \sum_{i=1}^N q_r^i \vx_{r,k}^i - \bvx_{r_0} \right\|^2 \middle| \gQ \right] + \eta^2 (1 + \lambda_2) \mathbb{E} \left[ \left\| \sum_{i=1}^N q_r^i \bvg_{r,k}^i \right\|^2 \middle| \gQ \right] + 9 \eta^2 \rho^2 \sigma^2 \nonumber \\
    &\leq \left( 1 + \frac{1}{\lambda_2} \right) M_{r,k} + \eta^2 (1 + \lambda_2) \mathbb{E} \left[ \left\| \sum_{i=1}^N q_r^i \bvg_{r,k}^i \right\|^2 \middle| \gQ \right] + 9 \eta^2 \rho^2 \sigma^2 \nonumber \\
    &\leq \left( 1 + \frac{1}{\lambda_2} \right) M_{r,k} + \eta^2 (1 + \lambda_2) \sum_{i=1}^N q_r^i \mathbb{E} \left[ \left\| \bvg_{r,k}^i \right\|^2 \middle| \gQ \right] + 9 \eta^2 \rho^2 \sigma^2, \label{eq:M_ub_1}
\end{align}
where $(i)$ uses \Eqref{eq:avg_update_var} and $(ii)$ uses Young's inequality. Focusing on the second term in \Eqref{eq:M_ub_1}:
\begin{align*}
    &\sum_{i=1}^N q_r^i \mathbb{E} \left[ \left\| \bvg_{r,k}^i \right\|^2 \middle| \gQ \right] \\
    &= \sum_{i=1}^N q_r^i \mathbb{E} \left[ \left\| \nabla f_i(\vx_{r,k}^i) - \bar{\mG}_{r_0}^i + \bar{\mG}_{r_0} \right\|^2 \middle| \gQ \right] \\
    &= \sum_{i=1}^N q_r^i \mathbb{E} \bigg[ \big\| (\nabla f_i(\vx_{r,k}^i) - \nabla f_i(\bvx_{r,k})) + (\nabla f_i(\bvx_{r,k}) - \nabla f_i(\bvx_{r_0})) \\
    &\quad\quad + (\nabla f_i(\bvx_{r_0}) - \bar{\mG}_{r_0}^i) + (\bar{\mG}_{r_0} - \nabla f(\bvx_{r_0})) + \nabla f(\bvx_{r_0}) \big\|^2 \bigg| \gQ \bigg] \\
    &\leq 5 \sum_{i=1}^N q_r^i \mathbb{E} \left[ \left\| \nabla f_i(\vx_{r,k}^i) - \nabla f_i(\bvx_{r,k}) \right\|^2 \middle| \gQ \right] + 5 \sum_{i=1}^N q_r^i \mathbb{E} \left[ \left\| \nabla f_i(\bvx_{r,k}) - \nabla f_i(\bvx_{r_0}) \right\|^2 \middle| \gQ \right] \\
    &\quad\quad + 5 \sum_{i=1}^N q_r^i \mathbb{E} \left[ \left\| \nabla f_i(\bvx_{r_0}) - \bar{\mG}_{r_0}^i \right\|^2 \middle| \gQ \right] + 5 \mathbb{E} \left[ \left\| \bar{\mG}_{r_0} - \nabla f(\bvx_{r_0}) \right\|^2 \middle| \gQ \right] + 5 \mathbb{E} \left[ \left\| \nabla f(\bvx_{r_0}) \right\|^2 \middle| \gQ \right] \\
    &\leq 5L^2 \sum_{i=1}^N q_r^i \mathbb{E} \left[ \left\| \vx_{r,k}^i - \bvx_{r,k} \right\|^2 \middle| \gQ \right] + 5L^2 \sum_{i=1}^N q_r^i \mathbb{E} \left[ \left\| \bvx_{r,k} - \bvx_{r_0} \right\|^2 \middle| \gQ \right] \\
    &\quad\quad + 5 \sum_{i=1}^N \left( q_r^i + \frac{1}{N} \right) \mathbb{E} \left[ \left\| \nabla f_i(\bvx_{r_0}) - \frac{1}{P \bar{z}_{r_0-P}^i I} \sum_{s=r_0-P}^{r_0-1} \sum_{k=0}^{I-1} z_s^i \nabla f_i(\vy_{s,k}^i) \right\|^2 \middle| \gQ \right] + 5 \mathbb{E} \left[ \left\| \nabla f(\bvx_{r_0}) \right\|^2 \middle| \gQ \right] \\
    &\leq 5L^2 D_{r,k} + 5L^2 M_{r,k} \\
    &\quad\quad + 5 \sum_{i=1}^N \left( q_r^i + \frac{1}{N} \right) \mathbb{E} \left[ \left\| \frac{1}{P \bar{z}_{r_0-P}^i I} \sum_{s=r_0-P}^{r_0-1} \sum_{k=0}^{I-1} z_s^i (\nabla f_i(\bvx_{r_0}) - \nabla f_i(\vy_{s,k}^i)) \right\|^2 \middle| \gQ \right] + 5 \mathbb{E} \left[ \left\| \nabla f(\bvx_{r_0}) \right\|^2 \middle| \gQ \right] \\
    &\leq 5L^2 D_{r,k} + 5L^2 M_{r,k} \\
    &\quad\quad + 5 \sum_{i=1}^N \left( q_r^i + \frac{1}{N} \right) \frac{1}{P \bar{z}_{r_0-P}^i I} \sum_{s=r_0-P}^{r_0-1} \sum_{k=0}^{I-1} z_s^i \mathbb{E} \left[ \left\| \nabla f_i(\bvx_{r_0}) - \nabla f_i(\vy_{s,k}^i) \right\|^2 \middle| \gQ \right] + 5 \mathbb{E} \left[ \left\| \nabla f(\bvx_{r_0}) \right\|^2 \middle| \gQ \right] \\
    &\leq 5L^2 D_{r,k} + 5L^2 M_{r,k} + 5L^2 \sum_{i=1}^N \left( q_r^i + \frac{1}{N} \right) S_{r_0}^i + 5 \mathbb{E} \left[ \left\| \nabla f(\bvx_{r_0}) \right\|^2 \middle| \gQ \right].
\end{align*}
Plugging back into \Eqref{eq:M_ub_1}:
\begin{align*}
    M_{r,k+1} &\leq \left( 1 + \frac{1}{\lambda_2} \right) M_{r,k} + 5 \eta^2 L^2 (1 + \lambda_2) \left( D_{r,k} + M_{r,k} + \sum_{i=1}^N \left( q_r^i + \frac{1}{N} \right) S_{r_0}^i \right) \\
    &\quad\quad + 5 \eta^2 (1 + \lambda_2) \mathbb{E} \left[ \left\| \nabla f(\bvx_{r_0}) \right\|^2 \middle| \gQ \right] + 9 \eta^2 \rho^2 \sigma^2 \\
    &\leq \left( 1 + \frac{1}{\lambda_2} + 5 \eta^2 L^2 (1 + \lambda_2) \right) M_{r,k} + 5 \eta^2 L^2 (1 + \lambda_2) \left( D_{r,k} + \sum_{i=1}^N \left( q_r^i + \frac{1}{N} \right) S_{r_0}^i \right) \\
    &\quad\quad + 5 \eta^2 (1 + \lambda_2) \mathbb{E} \left[ \left\| \nabla f(\bvx_{r_0}) \right\|^2 \middle| \gQ \right] + 9 \eta^2 \rho^2 \sigma^2.
\end{align*}
Now choose $\lambda_2 = 2IP$, so that $1 \leq \frac{\lambda_2}{2}$ and
\begin{equation*}
    5 \eta^2 L^2 (1 + \lambda_2) \leq \frac{15}{2} \eta^2 L^2 \lambda_2 \leq 15 \eta^2 L^2 IP \leq 15 L^2 IP \frac{1}{72 L^2 I^2 P^2} \leq \frac{1}{2 IP},
\end{equation*}
where we used the condition $\eta \leq \frac{1}{60 LIP}$. This yields the
following recursive bound on $M_{r,k+1}$:
\begin{align}
    M_{r,k+1} &\leq \left( 1 + \frac{1}{IP} \right) M_{r,k} + 15 \eta^2 L^2 IP D_{r,k} + 15 \eta^2 L^2 IP\sum_{i=1}^N \left( q_r^i + \frac{1}{N} \right) S_{r_0}^i \nonumber \\
    &\quad\quad + 15 \eta^2 IP \mathbb{E} \left[ \left\| \nabla f(\bvx_{r_0}) \right\|^2 \middle| \gQ \right] + 9 \eta^2 \rho^2 \sigma^2. \label{eq:M_rec_ub}
\end{align}
Note that the same argument can be used to bound $M_{r,0}$ in terms of $M_{r-1,I-1}$
with the same recurrence relation.

\paragraph{Unrolling the recursion} For $0 \leq t \leq IP$, let
$r_t = r_0 + \floor{t/I}$ and $k_t = t - \floor{t/I}*I$. Then let $d_t = D_{r_t,k_t}$ and
$m_t = M_{r_t,k_t}$. Also, let
\begin{align*}
    q_1 &= 18 \eta^2 L^2 I \\
    q_2 &= 15 \eta^2 L^2 IP \\
    a_r &= 18 \eta^2 L^2 I \sum_{i=1}^N q_r^i S_{r_0}^i + 36 \eta^2 \sigma^2 \\
    b_r &= 15 \eta^2 L^2 IP\sum_{i=1}^N \left( q_r^i + \frac{1}{N} \right) S_{r_0}^i + 15 \eta^2 IP \mathbb{E} \left[ \left\| \nabla f(\bvx_{r_0}) \right\|^2 \middle| \gQ \right] + 9 \eta^2 \rho^2 \sigma^2.
\end{align*}
Then \Eqref{eq:D_rec_ub} and \Eqref{eq:M_rec_ub} imply the following mutually recursive
relation over $d_t$ and $m_t$:
\begin{align}
    d_t &= \begin{cases}
        0 & k_t = 0 \\
        \left( 1 + \frac{1}{I} \right) d_{t-1} + q_1 m_{t-1} + a_{r_{t-1}} & \text{otherwise}
    \end{cases} \nonumber \\
    m_t &= \left( 1 + \frac{1}{IP} \right) m_{t-1} + q_2 d_{t-1} + b_{r_{t-1}}, \label{eq:dual_recursion}
\end{align}
where $d_0 = m_0 = 0$. We will unroll this recurrence, showing a bound for $m_t$ and $d_t$ in terms of the following quantities. For any $0 \leq t \leq IP - 1$ and $0 \leq s \leq r_t$, let
$j(s,t) = \min\{I, t-sI\}$ and $\ell(s,t) = \max\{0, t - (s+1)I\}$. For any $s, t$, if $k_{t+1} = 0$, define \[
    \alpha_{s,t+1} = 0.
\] Otherwise, define
\begin{align*}
    \alpha_{s,t+1} &= \alpha_{s,t} \left( 1 + \frac{1}{I} \right) + q_1 P \left( \left( 1 + \frac{1}{IP} \right)^{j(s,t)} - 1 \right) \left( 1 + \frac{1}{IP} \right)^{\ell(s,t)} \\
    &\quad + q_1 q_2 \sum_{i=0}^{t-sI-2} \left( 1 + \frac{1}{IP} \right)^i \alpha_{s,t-1-i}.
\end{align*}
Also, define
\begin{align*}
    \beta_{s,t+1} &= \beta_{s,t} \left( 1 + \frac{1}{IP} \right) + \indicator{s=r_t} \frac{q_2}{P} \left( \left( 1 + \frac{1}{I} \right)^{k_t} - 1 \right) + q_1 q_2 \sum_{i=0}^{k_t-2} \left( 1 + \frac{1}{I} \right)^i \beta_{s,t-1-i} \\
    \psi_{s,t+1} &= \frac{1}{P} \sum_{i=0}^{t-sI-1} \left( 1 + \frac{1}{IP} \right)^i \alpha_{s,t-i} \\
    \phi_{s,t+1} &= P \sum_{i=0}^{k_t-1} \left( 1 + \frac{1}{I} \right)^i \beta_{s,t-i},
\end{align*}
under the initial conditions
\begin{align*}
    \phi_{s,kI} &= 0 \text{ for all } s \leq P \text{ and } s \leq k \leq P \\
    \alpha_{s,kI} &= 0 \text{ for all } s \leq P \text{ and } s \leq k \leq P \\
    \psi_{s,sI} &= 0 \text{ for all } s \leq P \\
    \beta_{s,sI} &= 0 \text{ for all } s \leq P.
\end{align*}
Now we can unroll the recurrence in \Eqref{eq:dual_recursion}, by proving the following statements by induction on $t$:
\begin{align}
    d_t &\leq \left( \left( 1 + \frac{1}{I} \right)^{k_t} - 1 \right) a_{r_t} I + q_1 I \sum_{s=0}^{r_t} \phi_{s,t} a_s + I \sum_{s=0}^{r_t} \alpha_{s,t} b_s \label{eq:d_rec_inter} \\
    m_t &\leq IP \sum_{s=0}^{r_t} \left( \left( \left( 1 + \frac{1}{IP} \right)^{j(s,t)} - 1 \right) \left( 1 + \frac{1}{IP} \right)^{\ell(s,t)} + q_2 \psi_{s,t} \right) b_s + IP \sum_{s=0}^{r_t} \beta_{s,t} a_s. \label{eq:m_rec_inter}
\end{align}
\Eqref{eq:d_rec_inter} and \Eqref{eq:m_rec_inter} hold for the base case $t=0$, since
$d_0 = m_0 = 0$. Now suppose that \Eqref{eq:d_rec_inter} and \Eqref{eq:m_rec_inter} hold
for some $t \leq IP$, and we will show that they hold for $t+1$. We consider two cases:
$k_{t+1} \neq 0$ and $k_{t+1} = 0$.

In the first case, $r_{t+1} = r_t$, i.e., step $t+1$ is in the same round as step $t$,
and $k_{t+1} = k_t+1$. Using \Eqref{eq:dual_recursion} together with the inductive
hypothesis:
\begin{align*}
    d_{t+1} &\leq \left( 1 + \frac{1}{I} \right) \left( \left( \left( 1 + \frac{1}{I} \right)^{k_t} - 1 \right) a_{r_t} I + q_1 I \sum_{s=0}^{r_t} \phi_{s,t} a_s + I \sum_{s=0}^{r_t} \alpha_{s,t} b_s \right) \\
    &\quad + q_1 \left( IP \sum_{s=0}^{r_t} \left( \left( \left( 1 + \frac{1}{IP} \right)^{j(s,t)} - 1 \right) \left( 1 + \frac{1}{IP} \right)^{\ell(s,t)} + q_2 \psi_{s,t} \right) b_s + IP \sum_{s=0}^{r_t} \beta_{s,t} a_s \right) + a_{r_t} \\
    &\leq \left( \left( 1 + \frac{1}{I} \right)^{k_t+1} - \left( 1 + \frac{1}{I} \right) + \frac{1}{I} \right) a_{r_t} I + q_1 I \sum_{s=0}^{r_t} \left( \left( 1 + \frac{1}{I} \right) \phi_{s,t} + P \beta_{s,t} \right) a_s \\
    &\quad + I \sum_{s=0}^{r_t} \left( \left( 1 + \frac{1}{I} \right) \alpha_{s,t} + q_1 P \left( \left( 1 + \frac{1}{IP} \right)^{j(s,t)} - 1 \right) \left( 1 + \frac{1}{IP} \right)^{\ell(s,t)} + q_1 q_2 P \psi_{s,t} \right) b_s \\
    &\Eqmark{i}{\leq} \left( \left( 1 + \frac{1}{I} \right)^{k_t+1} - 1 \right) a_{r_t} I + q_1 I \sum_{s=0}^{r_t} \phi_{s,t+1} a_s + I \sum_{s=0}^{r_t} \alpha_{s,t+1} b_s \\
    &\Eqmark{ii}{\leq} \left( \left( 1 + \frac{1}{I} \right)^{k_{t+1}} - 1 \right) a_{r_{t+1}} I + q_1 I \sum_{s=0}^{r_{t+1}} \phi_{s,t+1} a_s + I \sum_{s=0}^{r_{t+1}} \alpha_{s,t+1} b_s,
\end{align*}
where $(i)$ uses the fact that $\phi_{s,t+1} = \left( 1 + \frac{1}{I} \right) \phi_{s,t}
+ P \beta_{s,t}$ and the definitions of $\alpha_{s,t+1}$, $\psi_{s,t}$, and $(ii)$ uses
$k_{t+1} = k_t+1$ and $r_{t+1} = r_t$. Similarly,
\begin{align*}
    m_{t+1} &\leq \left( 1 + \frac{1}{IP} \right) \left( IP \sum_{s=0}^{r_t} \left( \left( \left( 1 + \frac{1}{IP} \right)^{j(s,t)} - 1 \right) \left( 1 + \frac{1}{IP} \right)^{\ell(s,t)} + q_2 \psi_{s,t} \right) b_s + IP \sum_{s=0}^{r_t} \beta_{s,t} a_s \right) \\
    &\quad + q_2 \left( \left( \left( 1 + \frac{1}{I} \right)^{k_t} - 1 \right) a_{r_t} I + q_1 I \sum_{s=0}^{r_t} \phi_{s,t} a_s + I \sum_{s=0}^{r_t} \alpha_{s,t} b_s \right) + b_{r_t} \\
    &\leq IP \sum_{s=0}^{r_t} \bigg( \left( \left( 1 + \frac{1}{IP} \right)^{j(s,t)} - 1 \right) \left( 1 + \frac{1}{IP} \right)^{\ell(s,t)+1} \\
    &\quad + \indicator{s=r_t} \frac{1}{IP} + q_2 \left( \left( 1 + \frac{1}{IP} \right) \psi_{s,t} + \frac{1}{P} \alpha_{s,t} \right) \bigg) b_s \\
    &\quad + IP \sum_{s=0}^{r_t} \left( \left( 1 + \frac{1}{IP} \right) \beta_{s,t} + \indicator{s=r_t} \frac{q_2}{P} \left( \left( 1 + \frac{1}{I} \right)^{k_t} - 1 \right) + \frac{q_1 q_2}{P} \phi_{s,t} \right) a_s \\
    &\Eqmark{i}{\leq} IP \sum_{s=0}^{r_t} \left( \left( \left( 1 + \frac{1}{IP} \right)^{j(s,t+1)} - 1 \right) \left( 1 + \frac{1}{IP} \right)^{\ell(s,t+1)} + q_2 \left( \left( 1 + \frac{1}{IP} \right) \psi_{s,t} + \frac{1}{P} \alpha_{s,t} \right) \right) b_s \\
    &\quad + IP \sum_{s=0}^{r_t} \left( \left( 1 + \frac{1}{IP} \right) \beta_{s,t} + \indicator{s=r_t} \frac{q_2}{P} \left( \left( 1 + \frac{1}{I} \right)^{k_t} - 1 \right) + \frac{q_1 q_2}{P} \phi_{s,t} \right) a_s \\
    &\Eqmark{ii}{\leq} IP \sum_{s=0}^{r_t} \left( \left( \left( 1 + \frac{1}{IP} \right)^{j(s,t+1)} - 1 \right) \left( 1 + \frac{1}{IP} \right)^{\ell(s,t+1)} + q_2 \psi_{s,t+1} \right) b_s + IP \sum_{s=0}^{r_t} \beta_{s,t+1} a_s \\
    &\Eqmark{iii}{\leq} IP \sum_{s=0}^{r_{t+1}} \left( \left( \left( 1 + \frac{1}{IP} \right)^{j(s,t+1)} - 1 \right) \left( 1 + \frac{1}{IP} \right)^{\ell(s,t+1)} + q_2 \psi_{s,t+1} \right) b_s + IP \sum_{s=0}^{r_{t+1}} \beta_{s,t+1} a_s,
\end{align*}
where $(i)$ uses the fact that for $s < r_t$:
\begin{align*}
    &\left( \left( 1 + \frac{1}{IP} \right)^{j(s,t)} - 1 \right) \left( 1 + \frac{1}{IP} \right)^{\ell(s,t)+1} + \indicator{s=r_t} \frac{1}{IP} \\
    &\quad = \left( \left( 1 + \frac{1}{IP} \right)^I - 1 \right) \left( 1 + \frac{1}{IP} \right)^{t+1-(s+1)I} \\
    &\quad = \left( \left( 1 + \frac{1}{IP} \right)^{j(s,t+1)} - 1 \right) \left( 1 + \frac{1}{IP} \right)^{\ell(s,t+1)}
\end{align*}
and for $s = r_t$:
\begin{align*}
    &\left( \left( 1 + \frac{1}{IP} \right)^{j(s,t)} - 1 \right) \left( 1 + \frac{1}{IP} \right)^{\ell(s,t)+1} + \indicator{s=r_t} \frac{1}{IP} \\
    &\quad = \left( \left( 1 + \frac{1}{IP} \right)^{t-sI} - 1 \right) \left( 1 + \frac{1}{IP} \right) + \frac{1}{IP} \\
    &\quad = \left( \left( 1 + \frac{1}{IP} \right)^{t+1-sI} - 1 \right) \\
    &\quad = \left( \left( 1 + \frac{1}{IP} \right)^{j(s,t+1)} - 1 \right) \left( 1 + \frac{1}{IP} \right)^{\ell(s,t+1)},
\end{align*}
$(ii)$ uses the fact that $\psi_{s,t+1} = \left( 1 + \frac{1}{IP} \right) \psi_{s,t} +
\frac{1}{P} \alpha_{s,t}$ and the definitions of $\beta_{s,t+1}$, $\phi_{s,t}$, and
$(iii)$ uses $k_{t+1} = k_t+1$ and $r_{t+1} = r_t$. This completes the inductive step
for the first case ($k_{t+1} \neq 0$).

In the second case (i.e., $k_{t+1} = 0$), \Eqref{eq:d_rec_inter} must hold for $t+1$,
since $d_{t+1} = 0$. So it only remains to show \Eqref{eq:m_rec_inter} holds for $t+1$.
Note that $r_t = r_{t+1}-1$. As in the first case, we can use \Eqref{eq:dual_recursion}
together with the inductive hypothesis to obtain:
\begin{align*}
    m_{t+1} &\leq IP \sum_{s=0}^{r_t} \left( \left( \left( 1 + \frac{1}{IP} \right)^{j(s,t+1)} - 1 \right) \left( 1 + \frac{1}{IP} \right)^{\ell(s,t+1)} + q_2 \psi_{s,t+1} \right) b_s + IP \sum_{s=0}^{r_t} \phi_{s,t+1} a_s \\
    &\leq IP \sum_{s=0}^{r_{t+1}} \left( \left( \left( 1 + \frac{1}{IP} \right)^{j(s,t+1)} - 1 \right) \left( 1 + \frac{1}{IP} \right)^{\ell(s,t+1)} + q_2 \psi_{s,t+1} \right) b_s + IP \sum_{s=0}^{r_{t+1}} \phi_{s,t+1} a_s,
\end{align*}
where the second line uses the fact that the $r_{t+1}$-st element of both sums is $0$,
since $j(r_{t+1}, t+1) = 0$, $\psi_{r_{t+1},t+1} = 0$, and $\phi_{r_{t+1}, t+1} = 0$.
This completes the inductive step for both cases, and proves \Eqref{eq:d_rec_inter} and
\Eqref{eq:m_rec_inter}.

We can now bound $\alpha_{s,t}$ and $\beta_{s,t}$ separately by induction on $t$. First,
for any $s \leq P-1$ and $t$ with $sI \leq t \leq IP$, we claim that
\begin{equation} \label{eq:alpha_rec_ub}
    \alpha_{s,t} \leq 36 q_1 I P,
\end{equation}
which we will show by induction on $t$. Let $s \leq P-1$ be given. For the base case,
\Eqref{eq:alpha_rec_ub} holds when $t = sI$ since $\alpha_{s,sI} = 0$. Now suppose that
it holds for all $t' \leq t$. Then
\begin{align}
    &\left( \left( 1 + \frac{1}{IP} \right)^{m_1} - 1 \right) \left( 1 + \frac{1}{IP} \right)^{m_2} \nonumber \\
    &\quad \leq \left( \left( 1 + \frac{1}{IP} \right)^{t-sI} - 1 \right) \left( 1 + \frac{1}{IP} \right)^{0} \nonumber \\
    &\quad \leq \left( 1 + \frac{1}{IP} \right)^{t-sI} - 1, \label{eq:alpha_ub_inter_1}
\end{align}
and
\begin{align}
    q_1 q_2 \sum_{i=0}^{t-sI-2} \left( 1 + \frac{1}{IP} \right)^i \alpha_{s,t-1-i} &\leq 36 q_1^2 q_2 IP \sum_{i=0}^{t-sI-2} \left( 1 + \frac{1}{IP} \right)^i \nonumber \\
    &= 36 q_1^2 q_2 I^2 P^2 \left( \left( 1 + \frac{1}{IP} \right)^{t-sI-1} - 1 \right) \nonumber \\
    &\leq q_1 P \left( \left( 1 + \frac{1}{IP} \right)^{t-sI} - 1 \right), \label{eq:alpha_ub_inter_2}
\end{align}
where the last line uses the definition of $q_1$ and $q_2$ together with the condition $\eta \leq \frac{1}{60 LIP}$:
\begin{equation*}
    36 q_1 q_2 I^2 P = 36 \cdot 270 \eta^4 L^4 I^4 P^2 \leq \frac{36 \cdot 270}{60^4} L^4 I^4 P^2 \frac{1}{L^4 I^4 P^4} \leq \frac{1}{P^2} \leq 1.
\end{equation*}
Plugging \Eqref{eq:alpha_ub_inter_1} and \Eqref{eq:alpha_ub_inter_2} into the definition
of $\alpha_{s,t+1}$ yields
\begin{align*}
    \alpha_{s,t+1} &\leq \alpha_{s,t} \left( 1 + \frac{1}{I} \right) + 2 q_1 P \left( \left( 1 + \frac{1}{IP} \right)^{t-sI} - 1 \right) \\
    &\Eqmark{i}{\leq} 2 q_1 P \sum_{i=0}^{k_t} \left( 1 + \frac{1}{I} \right)^i \left( \left( 1 + \frac{1}{IP} \right)^{t-sI-i} - 1 \right) \\
    &\leq 2 q_1 P \left( 1 + \frac{1}{I} \right)^I \sum_{i=0}^{k_t} \left( \left( 1 + \frac{1}{IP} \right)^{t-sI-i} - 1 \right) \\
    &\leq 6 q_1 P \sum_{i=0}^{k_t} \left( 1 + \frac{1}{IP} \right)^{t-sI-i} \leq 6 q_1 P \left( 1 + \frac{1}{IP} \right)^{(r_t-s)I} \sum_{i=0}^{k_t} \left( 1 + \frac{1}{IP} \right)^i \\
    &\leq 18 q_1 P \sum_{i=0}^{I-1} \left( 1 + \frac{1}{IP} \right)^i \Eqmark{ii}{\leq} 18 q_1 \sum_{i=0}^{IP-1} \left( 1 + \frac{1}{IP} \right)^i \\
    &\leq 18 q_1 IP \left( \left( 1 + \frac{1}{IP} \right)^{IP} - 1 \right) \leq 36 q_1 IP,
\end{align*}
where $(i)$ unrolls the recurrence on the first line until $\alpha_{s,r_tI} = 0$, $(ii)$
uses $P \sum_{i=0}^{I-1} \left( 1 + \frac{1}{IP} \right)^i \leq \sum_{i=0}^{IP-1} \left(
1 + \frac{1}{IP} \right)^i$ since $\left( 1 + \frac{1}{IP} \right)^i$ is increasing with
$i$, and we repeatedly used $\left( 1 + \frac{1}{x} \right)^x < e \leq 3$. This
completes the induction and proves \Eqref{eq:alpha_rec_ub}.

We will similarly prove the statement
\begin{equation} \label{eq:beta_rec_ub}
    \beta_{s,t} \leq 12 \frac{q_2 I}{P}.
\end{equation}
for all $s \leq P-1$ and $t$ with $sI \leq t \leq IP$ by induction on $t$. Let $s \leq
P-1$ be given. For the base case, \Eqref{eq:alpha_rec_ub} holds when $t=sI$ since
$\beta_{s,sI} = 0$. Now suppose that it holds for all $t' \leq t$. Then
\begin{align}
    q_1 q_2 \sum_{i=0}^{k_t-2} \left( 1 + \frac{1}{I} \right)^i \beta_{s,t-1-i} &\leq 12 \frac{q_1 q_2^2 I}{P} \sum_{i=0}^{k_t-2} \left( 1 + \frac{1}{I} \right)^i \nonumber \\
    &= 12 \frac{q_1 q_2^2 I^2}{P} \left( \left( 1 + \frac{1}{I} \right)^{k_t-1} - 1 \right) \nonumber \\
    &\leq 12 \frac{q_1 q_2^2 I^2}{P} \left( \left( 1 + \frac{1}{I} \right)^{k_t} - 1 \right) \label{eq:beta_ub_inter_1} \\
    &\leq \frac{q_2}{P} \left( \left( 1 + \frac{1}{I} \right)^{k_t} - 1 \right), \label{eq:beta_ub_inter_2}
\end{align}
where the last line uses the definition of $q_1$ and $q_2$ together with the condition
$\eta \leq \frac{1}{60 LIP}$:
\begin{equation*}
    12 q_1 q_2 I^2 \leq 12 \cdot 270 \eta^4 L^4 I^4 P \leq \frac{12 \cdot 270}{60^4} L^4 I^4 P \frac{1}{L^4 I^4 P^4} \leq \frac{1}{P^3} \leq 1.
\end{equation*}
We now consider two cases: $t < (s+1) I$ and $t \geq (s+1)I$. In the first case, we have
$s = r_t$, and plugging \Eqref{eq:beta_ub_inter_2} into the definition of
$\beta_{s,t+1}$ yields
\begin{align*}
    \beta_{s,t+1} &\leq \beta_{s,t} \left( 1 + \frac{1}{IP} \right) + 2 \frac{q_2}{P} \left( \left( 1 + \frac{1}{I} \right)^{k_t} - 1 \right) \\
    &\Eqmark{i}{\leq} 2 \frac{q_2}{P} \sum_{i=0}^{k_t} \left( 1 + \frac{1}{IP} \right)^i \left( \left( 1 + \frac{1}{I} \right)^{k_t-i} - 1 \right) \\
    &\leq 2 \frac{q_2}{P} \left( 1 + \frac{1}{IP} \right)^{k_t} \sum_{i=0}^{k_t} \left( \left( 1 + \frac{1}{I} \right)^{k_t-i} - 1 \right) \\
    &\leq 2 \frac{q_2}{P} \left( 1 + \frac{1}{IP} \right)^{I} \sum_{i=0}^{k_t} \left( 1 + \frac{1}{I} \right)^i \\
    &\leq 6 \frac{q_2I}{P} \left( \left( 1 + \frac{1}{I} \right)^{k_t+1} - 1 \right) \leq 6 \frac{q_2I}{P} \left( \left( 1 + \frac{1}{I} \right)^I - 1 \right) \\
    &\leq 12 \frac{q_2 I}{P},
\end{align*}
where $(i)$ unrolls the recurrence on the first line until $\beta_{s,sI} = 0$ and we
repeatedly used $\left( 1 + \frac{1}{x} \right)^x < e \leq 3$. In the second case, we
have $s > r_t$, so plugging \Eqref{eq:beta_ub_inter_1} into the definition of
$\beta_{s,t+1}$ yields
\begin{align*}
    \beta_{s,t+1} &\leq \beta_{s,t} \left( 1 + \frac{1}{IP} \right) + 12 \frac{q_1 q_2^2 I^2}{P} \left( \left( 1 + \frac{1}{I} \right)^{k_t} - 1 \right) \\
    &\leq \beta_{s,t} \left( 1 + \frac{1}{IP} \right) + 12 \frac{q_1 q_2^2 I^2}{P} \left( \left( 1 + \frac{1}{I} \right)^{I} - 1 \right) \\
    &\leq \beta_{s,t} \left( 1 + \frac{1}{IP} \right) + 24 \frac{q_1 q_2^2 I^2}{P} \\
    &\Eqmark{i}{\leq} 24 \frac{q_1 q_2^2 I^2}{P} \sum_{i=0}^{t-sI} \left( 1 + \frac{1}{IP} \right)^i \\
    &= 24 q_1 q_2^2 I^3 \left( \left( 1 + \frac{1}{IP} \right)^{t-sI+1} - 1 \right) \leq 24 q_1 q_2^2 I^3 \left( \left( 1 + \frac{1}{IP} \right)^{IP} - 1 \right) \\
    &\leq 48 q_1 q_2^2 I^3 \leq 12 \frac{q_2 I}{P},
\end{align*}
where $(i)$ unrolls the recurrence on the previous line until $\beta_{s,sI} = 0$ and the
last inequality uses the definition of $q_1$ and $q_2$ together with the condition $\eta
\leq \frac{1}{60 LIP}$:
\begin{equation*}
    48 q_1 q_2 I^2 = 48 \cdot 270 \eta^4 L^4 I^4 P \leq \frac{48 \cdot 270}{60^4} L^4 I^4 P \frac{1}{L^4 I^4 P^4} \leq \frac{12}{P^3} \leq \frac{12}{P}.
\end{equation*}
This completes the induction in both cases and proves \Eqref{eq:beta_rec_ub}.

We can then use \Eqref{eq:alpha_rec_ub} and \Eqref{eq:beta_rec_ub} to yield bounds for
the remaining terms of \Eqref{eq:d_rec_inter} and \Eqref{eq:m_rec_inter} as follows:
\begin{align*}
    q_2 \psi_{s,t} &= \frac{q_2}{P} \sum_{i=0}^{t-sI-1} \left( 1 + \frac{1}{IP} \right)^i \alpha_{s,t-1-i} \leq 36 q_1 q_2 I \sum_{i=0}^{t-sI-1} \left( 1 + \frac{1}{IP} \right)^i \\
    &\leq 36 q_1 q_2 I \sum_{i=0}^{IP-1} \left( 1 + \frac{1}{IP} \right)^i \leq 36 q_1 q_2 I^2 P \left( \left( 1 + \frac{1}{IP} \right)^{IP} - 1 \right) \\
    &\leq 72 q_1 q_2 I^2 P \leq 72 \cdot 270 \eta^4 L^4 I^4 P^2 \leq \frac{72 \cdot 270}{60^4} L^4 I^4 P^2 \frac{1}{L^4 I^4 P^4} \leq \frac{1}{P^2},
\end{align*}
and
\begin{align*}
    q_1 \phi_{s,t} &= q_1 P \sum_{i=0}^{k_t - 1} \left( 1 + \frac{1}{I} \right)^i \beta_{s,t-i} \leq 12 q_1 q_2 I \sum_{i=0}^{k_t - 1} \left( 1 + \frac{1}{I} \right)^i \\
    &= 12 q_1 q_2 I^2 \left( \left( 1 + \frac{1}{I} \right)^{k_t} - 1 \right) \leq 12 q_1 q_2 I^2 \left( \left( 1 + \frac{1}{I} \right)^I - 1 \right) \leq 24 q_1 q_2 I^2.
\end{align*}
Finally, we can plug these into \Eqref{eq:d_rec_inter} to yield
\begin{align*}
    d_t &\leq \left( \left( 1 + \frac{1}{I} \right)^{k_t} - 1 \right) a_{r_t} I + 24 q_1 q_2 I^3 \sum_{s=0}^{r_t} a_s + 36 q_1 I^2 P \sum_{s=0}^{r_t} b_s \\
    &\leq 2 a_{r_t} I + 24 q_1 q_2 I^3 \sum_{s=0}^{r_t} a_s + 36 q_1 I^2 P \sum_{s=0}^{r_t} b_s,
\end{align*}
and into \Eqref{eq:m_rec_inter}
\begin{align*}
    m_t &\leq IP \sum_{s=0}^{r_t} \left( \left( \left( 1 + \frac{1}{IP} \right)^{m_1} - 1 \right) \left( 1 + \frac{1}{IP} \right)^{m_2} + \frac{1}{P^2} \right) b_s + 12 q_2 I^2 \sum_{s=0}^{r_t} a_s \\
    &\Eqmark{i}{\leq} IP \sum_{s=0}^{r_t} \left( \frac{6}{P} + \frac{1}{P^2} \right) b_s + 12 q_2 I^2 \sum_{s=0}^{r_t} a_s \\
    &\leq 7 I \sum_{s=0}^{r_t} b_s + 12 q_2 I^2 \sum_{s=0}^{r_t} a_s,
\end{align*}
where $(i)$ uses
\begin{align*}
    \left( 1 + \frac{1}{IP} \right)^{m_1} - 1 &\leq \left( 1 + \frac{1}{IP} \right)^I - 1 = IP \sum_{i=0}^{I-1} \left( 1 + \frac{1}{IP} \right)^i \\
    &\quad \leq I \sum_{i=0}^{IP-1} \left( 1 + \frac{1}{IP} \right)^i = \frac{1}{P} \left( \left( 1 + \frac{1}{IP} \right)^{IP} - 1 \right) \leq \frac{2}{P},
\end{align*}
and
\begin{equation*}
    \left( 1 + \frac{1}{IP} \right)^{m_2} \leq \left( 1 + \frac{1}{IP} \right)^{IP} \leq 3.
\end{equation*}

Or, in the original notation:
\begin{align}
    D_{r,k} &\leq 2I \left( 18 \eta^2 L^2 I \sum_{i=1}^N q_{r_t}^i S_{r_0}^i + 36 \eta^2 \sigma^2 \right) + 6480 \eta^4 L^4 I^5 P \sum_{s=r_0}^r \left( 18 \eta^2 L^2 I \sum_{i=1}^N q_s^i S_{r_0}^i + 36 \eta^2 \sigma^2 \right) \nonumber \\
    &\quad + 648 \eta^2 L^2 I^3 P \sum_{s=r_0}^r \left( 15 \eta^2 L^2 IP \sum_{i=1}^N \left( q_s^i + \frac{1}{N} \right) S_{r_0}^i + 15 \eta^2 IP \mathbb{E} \left[ \left\| \nabla f(\bvx_{r_0}) \right\|^2 \middle| \gQ \right] + 9 \eta^2 \rho^2 \sigma^2 \right) \nonumber \\
    &\leq 36 \eta^2 L^2 I^2 \sum_{i=1}^N q_r^i S_{r_0}^i + 18 \cdot 6480 \eta^6 L^6 I^6 P \sum_{s=r_0}^r \sum_{i=1}^N q_s^i S_{r_0}^i \nonumber \\
    &\quad + 9720 \eta^4 L^4 I^4 P^2 \sum_{s=r_0}^r \sum_{i=1}^N \left( q_s^i + \frac{1}{N} \right) S_{r_0}^i + 9720 (r-r_0) \eta^4 L^2 I^4 P^3 \mathbb{E} \left[ \left\| \nabla f(\bvx_{r_0}) \right\|^2 \middle| \gQ \right] \nonumber \\
    &\quad + \left( 72 \eta^2 I + 36 \cdot 6480 (r-r_0) \eta^6 L^4 I^5 P + 5832 (r-r_0) \eta^4 L^2 I^3 P \rho^2 \right) \sigma^2 \nonumber \\
    &\Eqmark{i}{\leq} 36 \eta^2 L^2 I^2 \sum_{i=1}^N q_r^i S_{r_0}^i + 18 \cdot 6480 \eta^6 L^6 I^6 P \sum_{s=r_0}^{r_0+P-1} \sum_{i=1}^N q_s^i S_{r_0}^i \nonumber \\
    &\quad + 9720 \eta^4 L^4 I^4 P^2 \sum_{s=r_0}^{r_0+P-1} \sum_{i=1}^N \left( q_s^i + \frac{1}{N} \right) S_{r_0}^i \nonumber \\
    &\quad + \left( 72 \eta^2 I + 36 \cdot 6480 \eta^6 L^4 I^5 P^2 + 5832 \eta^4 L^2 I^3 P^2 \rho^2 \right) \sigma^2 + 9720 \eta^4 L^2 I^4 P^3 \mathbb{E} \left[ \left\| \nabla f(\bvx_{r_0}) \right\|^2 \middle| \gQ \right] \nonumber \\
    &= 36 \eta^2 L^2 I^2 \sum_{i=1}^N q_r^i S_{r_0}^i + 9720 \eta^4 L^4 I^4 P^3 \frac{1}{N} \sum_{i=1}^N S_{r_0}^i \nonumber \\
    &\quad + \left( 18 \cdot 6480 \eta^6 L^6 I^6 P^2 + 9720 \eta^4 L^4 I^4 P^3 \right) \sum_{i=1}^N \bar{q}_{r_0}^i S_{r_0}^i \nonumber \\
    &\quad + \left( 72 \eta^2 I + 36 \cdot 6480 \eta^6 L^4 I^5 P^2 + 5832 \eta^4 L^2 I^3 P^2 \rho^2 \right) \sigma^2 + 9720 \eta^4 L^2 I^4 P^3 \mathbb{E} \left[ \left\| \nabla f(\bvx_{r_0}) \right\|^2 \middle| \gQ \right] \nonumber \\
    &\Eqmark{ii}{\leq} 36 \eta^2 L^2 I^2 \sum_{i=1}^N q_r^i S_{r_0}^i + 9720 \eta^4 L^4 I^4 P^3 \frac{1}{N} \sum_{i=1}^N S_{r_0}^i + 9753 \eta^4 L^4 I^4 P^3 \sum_{i=1}^N \bar{q}_{r_0}^i S_{r_0}^i \nonumber \\
    &\quad + \left( 73 \eta^2 I + 5832 \eta^4 L^2 I^3 P^2 \rho^2 \right) \sigma^2 + 9720 \eta^4 L^2 I^4 P^3 \mathbb{E} \left[ \left\| \nabla f(\bvx_{r_0}) \right\|^2 \middle| \gQ \right], \label{eq:d_final_ub}
\end{align}
where $(i)$ uses the fact that the interval $\{r_0, \ldots, r\}$ is contained in the
interval $\{r_0, \ldots, r_0 + P - 1\}$ and $(ii)$ uses the condition $\eta \leq
\frac{1}{60 LIP}$ to simplify non-dominating terms. Also
\begin{align}
    M_{r,k} &\leq 7I \sum_{r=r_0}^r \left( 15 \eta^2 L^2 IP\sum_{i=1}^N \left( q_s^i + \frac{1}{N} \right) S_{r_0}^i + 15 \eta^2 IP \mathbb{E} \left[ \left\| \nabla f(\bvx_{r_0}) \right\|^2 \middle| \gQ \right] + 9 \eta^2 \rho^2 \sigma^2 \right) \nonumber \\
    &\quad + 180 \eta^2 L^2 I^3 P \sum_{r=r_0}^r \left( 18 \eta^2 L^2 I \sum_{i=1}^N q_s^i S_{r_0}^i + 36 \eta^2 \sigma^2 \right) \nonumber \\
    &\leq 105 \eta^2 L^2 I^2 P \sum_{r=r_0}^r \sum_{i=1}^N \left( q_s^i + \frac{1}{N} \right) S_{r_0}^i + 3240 \eta^4 L^4 I^4 P \sum_{r=r_0}^r \sum_{i=1}^N q_s^i S_{r_0}^i \nonumber \\
    &\quad + \left( 63 (r-r_0) \eta^2 I \rho^2 + 6480 (r-r_0) \eta^4 L^2 I^3 P \right) \sigma^2 + 105 (r-r_0) \eta^2 I^2 P \mathbb{E} \left[ \left\| \nabla f(\bvx_{r_0}) \right\|^2 \middle| \gQ \right] \nonumber \\
    &\Eqmark{i}{\leq} 105 \eta^2 L^2 I^2 P \sum_{r=r_0}^{r_0+P-1} \sum_{i=1}^N \left( q_s^i + \frac{1}{N} \right) S_{r_0}^i + 3240 \eta^4 L^4 I^4 P \sum_{r=r_0}^{r_0+P-1} \sum_{i=1}^N q_s^i S_{r_0}^i \nonumber \\
    &\quad + \left( 63 \eta^2 IP \rho^2 + 6480 \eta^4 L^2 I^3 P^2 \right) \sigma^2 + 105 \eta^2 I^2 P^2 \mathbb{E} \left[ \left\| \nabla f(\bvx_{r_0}) \right\|^2 \middle| \gQ \right] \nonumber \\
    &\leq 105 \eta^2 L^2 I^2 P^2 \frac{1}{N} \sum_{i=1}^N S_{r_0}^i + \left( 105 \eta^2 L^2 I^2 P^2 + 3240 \eta^4 L^4 I^4 P^2 \right) \sum_{i=1}^N \bar{q}_{r_0}^i S_{r_0}^i \nonumber \\
    &\quad + \left( 63 \eta^2 IP \rho^2 + 6480 \eta^4 L^2 I^3 P^2 \right) \sigma^2 + 105 \eta^2 I^2 P^2 \mathbb{E} \left[ \left\| \nabla f(\bvx_{r_0}) \right\|^2 \middle| \gQ \right] \nonumber \\
    &\Eqmark{ii}{\leq} 105 \eta^2 L^2 I^2 P^2 \frac{1}{N} \sum_{i=1}^N S_{r_0}^i + 106 \eta^2 L^2 I^2 P^2 \sum_{i=1}^N \bar{q}_{r_0}^i S_{r_0}^i \nonumber \\
    &\quad + \left( 63 \eta^2 IP \rho^2 + 6480 \eta^4 L^2 I^3 P^2 \right) \sigma^2 + 105 \eta^2 I^2 P^2 \mathbb{E} \left[ \left\| \nabla f(\bvx_{r_0}) \right\|^2 \middle| \gQ \right], \label{eq:m_final_ub}
\end{align}
where $(i)$ and $(ii)$ use the same operations as in \Eqref{eq:d_final_ub}. Summing
\Eqref{eq:m_final_ub} and \Eqref{eq:d_final_ub},
\begin{align*}
    D_{r,k} + M_{r,k} &\leq 36 \eta^2 L^2 I^2 \sum_{i=1}^N q_r^i S_{r_0}^i + \left( 9720 \eta^4 L^4 I^4 P^3 + 105 \eta^2 L^2 I^2 P^2 \right) \frac{1}{N} \sum_{i=1}^N S_{r_0}^i \\
    &\quad + \left( 9753 \eta^4 L^4 I^4 P^3 + 106 \eta^2 L^2 I^2 P^2 \right) \sum_{i=1}^N \bar{q}_{r_0}^i S_{r_0}^i \\
    &\quad + \left( 73 \eta^2 I + 5832 \eta^4 L^2 I^3 P^2 \rho^2 + 63 \eta^2 IP \rho^2 + 6480 \eta^4 L^2 I^3 P^2 \right) \sigma^2 \\
    &\quad + \left( 9720 \eta^4 L^2 I^4 P^3 + 105 \eta^2 I^2 P^2 \right) \mathbb{E} \left[ \left\| \nabla f(\bvx_{r_0}) \right\|^2 \middle| \gQ \right] \\
    &\leq 36 \eta^2 L^2 I^2 \sum_{i=1}^N q_r^i S_{r_0}^i + 109 \eta^2 L^2 I^2 P^2 \sum_{i=1}^N \left( \bar{q}_{r_0}^i + \frac{1}{N} \right) S_{r_0}^i \\
    &\quad + \left( 75 \eta^2 I + 65 \eta^2 IP \rho^2 \right) \sigma^2 + 108 \eta^2 I^2 P^2 \mathbb{E} \left[ \left\| \nabla f(\bvx_{r_0}) \right\|^2 \middle| \gQ \right],
\end{align*}
where the last inequality uses $\eta \leq \frac{1}{60 LIP}$. This proves
\Eqref{eq:dm_ub_single}. \Eqref{eq:dm_ub_sum} follows by summing over $r \in \{r_0,
\ldots, r_0 + P - 1\}$ and $k \in \{0, \ldots, I - 1\}$, taking total expectation, and
applying the condition $\mathbb{E}_{r_0}[\bar{q}_{r_0}^i] = \frac{1}{N}$.
\end{proof}

\begin{lemma} \label{lem:quadratic_err}
If $\eta \leq \frac{1}{60 LIP}$, $\gamma \eta \leq \frac{1}{60 LIP}$,
$\mathbb{E}_{r_0}[\bar{q}_{r_0}^i] = \frac{1}{N}$, and $\mathbb{E} \left[ \sum_{i=1}^N
\left( v_{r_0}^i \right)^2 \Lambda_{r_0}^i \right] \leq \rho^2$, then
\begin{align}
    \mathbb{E} \left[ \left\| \bvx_{r_0+P} - \bvx_{r_0} \right\|^2 \right] &\leq 6 \gamma^2 \eta^2 I^2 P^2 \mathbb{E} \left[ \|\nabla f(\bvx_{r_0})\|^2 \right] + \gamma \eta IP \left( 5 \gamma \eta \rho^2 + 7 \eta^2 LI \right) \sigma^2 \nonumber \\
    &\quad + 11 \gamma^2 \eta^2 L^2 I^2 P^2 \frac{1}{N} \sum_{i=1}^N \mathbb{E} \left[ S_{r_0}^i \right]. \label{eq:exp_sq_update}
\end{align}
\end{lemma}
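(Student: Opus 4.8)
The plan is to write the windowed update in closed form and then split it into a ``mean'' direction and a zero-mean noise part. Since the local steps are SGD steps on the control-variate--corrected gradient $\vg_{r,k}^i = \nabla F_i(\vx_{r,k}^i;\xi_{r,k}^i) - \mG_{r_0}^i + \mG_{r_0}$ and the global model only moves every $P$ rounds with amplification $\gamma$, the net displacement telescopes to
\[
  \bvx_{r_0+P} - \bvx_{r_0} = -\gamma\eta \sum_{r=r_0}^{r_0+P-1}\sum_{k=0}^{I-1}\sum_{i=1}^N q_r^i \vg_{r,k}^i.
\]
Writing $\bvg_{r,k}^i = \nabla f_i(\vx_{r,k}^i) - \bar{\mG}_{r_0}^i + \bar{\mG}_{r_0}$ for the deterministic analogue, I would set $U = \bvx_{r_0+P}-\bvx_{r_0}$ and $\bar{U} = -\gamma\eta\sum_{r,k,i}q_r^i\bvg_{r,k}^i$, and bound $\mathbb{E}[\|U\|^2]\le 2\mathbb{E}[\|\bar{U}\|^2] + 2\mathbb{E}[\|U-\bar{U}\|^2]$. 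The factor $2$ on $\mathbb{E}[\|\bar{U}\|^2]$ together with a further $3$-way split of $\bar{U}$ is what produces the leading coefficient $6$ on $\|\nabla f(\bvx_{r_0})\|^2$.

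For the noise part $U-\bar{U} = -\gamma\eta\sum_{r,k,i}q_r^i(\vg_{r,k}^i-\bvg_{r,k}^i)$ I would separate the two independent sources of gradient noise. The current-window noise $\nabla F_i(\vx_{r,k}^i;\xi_{r,k}^i)-\nabla f_i(\vx_{r,k}^i)$ is conditionally mean-zero and independent across $(r,k)$, so its contribution telescopes to $\gamma^2\eta^2\sum_{r,k}\mathbb{E}[\|\sum_i q_r^i(\cdot)\|^2]\le \gamma^2\eta^2 IP\rho^2\sigma^2$ via the computation in \Eqref{eq:avg_update_var} and $\sum_i(q_r^i)^2\le\rho^2$. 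The previous-window noise $\mG_{r_0}^i-\bar{\mG}_{r_0}^i$ is constant over the current window; summing over $r,k$ and using $\sum_i q_r^i = 1$ collapses it to $\gamma\eta IP\sum_i(\bar{q}_{r_0}^i - \tfrac1N)(\mG_{r_0}^i-\bar{\mG}_{r_0}^i) = \gamma\eta IP\sum_i v_{r_0}^i(\mG_{r_0}^i-\bar{\mG}_{r_0}^i)$. Conditioned on $\gQ$ these per-client terms are independent, with variance $\mathbb{E}[\|\mG_{r_0}^i-\bar{\mG}_{r_0}^i\|^2\mid\gQ]\le \Lambda_{r_0}^i\sigma^2/(IP)$ --- a refinement of \Eqref{eq:update_var_inter} that exposes the effective sample size $\Lambda_{r_0}^i$ --- so this term is at most $\gamma^2\eta^2 IP\sigma^2\sum_i(v_{r_0}^i)^2\Lambda_{r_0}^i$, which drops to $\gamma^2\eta^2 IP\rho^2\sigma^2$ after taking expectation and invoking \Eqref{eq:v_cond}.

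For the mean part I would expand, using $\sum_i q_r^i=1$ and $\bar{\mG}_{r_0}=\tfrac1N\sum_i\bar{\mG}_{r_0}^i$,
\[
  \sum_i q_r^i\bvg_{r,k}^i = \nabla f(\bvx_{r_0}) + \underbrace{\sum_i q_r^i\big(\nabla f_i(\vx_{r,k}^i)-\nabla f_i(\bvx_{r_0})\big)}_{E_1(r,k)} + \underbrace{\sum_i v_{r_0}^i\big(\nabla f_i(\bvx_{r_0})-\bar{\mG}_{r_0}^i\big)}_{E_2},
\]
so that $\bar{U} = -\gamma\eta\big(IP\nabla f(\bvx_{r_0}) + \sum_{r,k}E_1(r,k) + I\sum_r E_2\big)$. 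I would bound $\|\sum_{r,k}E_1\|^2$ by Jensen and then $L$-smoothness by $2L^2 IP\sum_{r,k}(D_{r,k}+M_{r,k})$ and feed in \Eqref{eq:dm_ub_sum} of Lemma~\ref{lem:dm_ub}; this supplies the $\sigma^2$ and $\tfrac1N\sum_i\mathbb{E}[S_{r_0}^i]$ contributions (plus a negligible $\|\nabla f\|^2$ term). For $E_2$ I would use $|v_{r_0}^i|\le \bar{q}_{r_0}^i+\tfrac1N$, the bound $\mathbb{E}[\|\nabla f_i(\bvx_{r_0})-\bar{\mG}_{r_0}^i\|^2\mid\gQ]\le L^2 S_{r_0}^i$, and $\mathbb{E}_{r_0}[\bar{q}_{r_0}^i]=\tfrac1N$ (with $S_{r_0}^i$ measurable before round $r_0$) to reduce it to a constant multiple of the uniform average $\tfrac{L^2}{N}\sum_i\mathbb{E}[S_{r_0}^i]$.

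Finally I would collect all pieces and repeatedly apply $\gamma\eta\le\tfrac1{60LIP}$ and $\eta\le\tfrac1{60LIP}$ to absorb the higher-order ($\eta^4 L^4$, $\eta^6 L^6$) remainders into the stated constants $5,6,7,11$. The main obstacle is the previous-window noise term: correctly collapsing $\sum_{r,k,i}q_r^i(\mG_{r_0}^i-\bar{\mG}_{r_0}^i)$ to the $v_{r_0}^i$-weighted sum, establishing the sharpened per-client variance bound featuring $\Lambda_{r_0}^i$, and using independence of the two windows' noise (Assumption~\ref{ass:sampling}) so that \Eqref{eq:v_cond} applies. A secondary difficulty is the bias term $E_2$, where the signed, correlated weights $v_{r_0}^i$ must be routed through $\mathbb{E}_{r_0}[\bar{q}_{r_0}^i]=\tfrac1N$ to recover a uniform average of $S_{r_0}^i$ rather than a participation-weighted one.
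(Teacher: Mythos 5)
Your proposal follows essentially the same route as the paper's proof: the same closed form for the windowed update; the same split into a deterministic part $\bar{U}$ and a noise part; the same two-piece treatment of the noise (current-window stochastic-gradient noise handled by the martingale/cross-client independence argument, and previous-window control-variate noise collapsed via $\sum_i q_r^i = 1$ into the $v_{r_0}^i$-weighted sum whose per-client conditional variance is exactly $\Lambda_{r_0}^i \sigma^2/(IP)$, so that \eqref{eq:v_cond} applies --- this is precisely the paper's computation in \eqref{eq:round_update_A2}); and the same handling of $\bar{U}$ (Jensen plus $L$-smoothness to reach the drift terms, then \eqref{eq:dm_ub_sum} of Lemma~\ref{lem:dm_ub}, the bound $\mathbb{E}[\|\nabla f_i(\bvx_{r_0})-\bar{\mG}_{r_0}^i\|^2 \mid \gQ] \leq L^2 S_{r_0}^i$, and the tower property with $\mathbb{E}_{r_0}[\bar{q}_{r_0}^i]=\tfrac{1}{N}$ to convert participation-weighted sums of $S_{r_0}^i$ into uniform ones).

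The one substantive difference is quantitative. The paper does not pay your outer factor of $2$: it uses the decomposition $\mathbb{E}\|U\|^2 \leq \mathbb{E}\|\bar{U}\|^2 + \mathbb{E}\|U-\bar{U}\|^2$ (cross term dropped), followed by a five-way split of $\bar{U}$, and its constants are tuned so that every contribution fits under the stated $6$, $5$, $7$, $11$ (e.g., $5+\text{h.o.t.}\leq 6$ on the gradient term, $4+\text{h.o.t.}\leq 5$ on the $\gamma\eta\rho^2$ term, $10+\text{h.o.t.}\leq 11$ on the $S$-term). With your $2\|\bar{U}\|^2 + 2\|U-\bar{U}\|^2$ split and three-way decomposition of $\bar{U}$, the same bookkeeping lands at roughly $6+\text{h.o.t.}$, $8+\text{h.o.t.}$, and $24+\text{h.o.t.}$ respectively, so as written your argument proves the lemma only with larger absolute constants; recovering the exact statement requires either the cross-term-free decomposition or retuning the splits. (A small notational point: in your per-round display, the control-variate error term should carry the weights $q_r^i - \tfrac{1}{N}$, which only average to $v_{r_0}^i$ over the window; your aggregated formula for $\bar{U}$ is nonetheless correct.)
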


\begin{proof}
By the algorithm definition,
\begin{equation*}
    \bvx_{r_0+P} - \bvx_{r_0} = -\gamma \eta \sum_{r=r_0}^{r_0+P-1} \sum_{i=1}^N \sum_{k=0}^{I-1} q_r^i (\nabla F_i(\vx_{r,k}^i; \xi_{r,k}^i) - \mG_{r_0}^i + \mG_{r_0}) = -\gamma \eta \sum_{r=r_0}^{r_0+P-1} \sum_{i=1}^N \sum_{k=0}^{I-1} q_r^i \vg_{r,k}^i.
\end{equation*}

To obtain the variance of the update $\bvx_{r_0+P} - \bvx_{r_0}$,
\begin{align}
    &\mathbb{E} \left[ \left\| \sum_{r=r_0}^{r_0+P} \sum_{i=1}^N \sum_{k=0}^{I-1} q_r^i \left( \vg_{r,k}^i - \bar{\vg}_{r,k}^i \right) \right\|^2 \right] \nonumber \\
    &\quad = \mathbb{E} \left[ \left\| \sum_{r=r_0}^{r_0+P} \sum_{i=1}^N \sum_{k=0}^{I-1} q_r^i \left( \left( \nabla F_i(\vx_{r,k}^i; \xi_{r,k}^i) - \nabla f_i(\vx_{r,k}^i) \right) - \left( \mG_{r_0}^i - \bar{\mG}_{r_0}^i \right) + \left( \mG_{r_0} - \bar{\mG}_{r_0} \right) \right) \right\|^2 \right] \nonumber \\
    &\quad \leq 2 \underbrace{\mathbb{E} \left[ \left\| \sum_{r=r_0}^{r_0+P} \sum_{i=1}^N \sum_{k=0}^{I-1} q_r^i \left( \nabla F_i(\vx_{r,k}^i; \xi_{r,k}^i) - \nabla f_i(\vx_{r,k}^i) \right) \right\|^2 \right]}_{A_1} \nonumber \\
    &\quad\quad + 2 \underbrace{\mathbb{E} \left[ \left\| \sum_{r=r_0}^{r_0+P} \sum_{i=1}^N \sum_{k=0}^{I-1} q_r^i \left( - \mG_{r_0}^i + \bar{\mG}_{r_0}^i + \mG_{r_0} - \bar{\mG}_{r_0} \right) \right\|^2 \right]}_{A_2}. \label{eq:round_update_var_inter}
\end{align}
We can bound the two terms $A_1$ and $A_2$ separately as follows. For $A_1$,
\begin{align}
    A_1 &= 2 \mathbb{E} \left[ \left\| \sum_{r=r_0}^{r_0+P} \sum_{i=1}^N \sum_{k=0}^{I-1} q_r^i \left( \nabla F_i(\vx_{r,k}^i; \xi_{r,k}^i) - \nabla f_i(\vx_{r,k}^i) \right) \right\|^2 \right] \nonumber \\
    &\quad = 2 \mathbb{E} \left[ \mathbb{E} \left[ \left\| \sum_{r=r_0}^{r_0+P} \sum_{i=1}^N \sum_{k=0}^{I-1} q_r^i \left( \nabla F_i(\vx_{r,k}^i; \xi_{r,k}^i) - \nabla f_i(\vx_{r,k}^i) \right) \right\|^2 \middle| \gQ \right] \right] \nonumber \\
    &\quad \Eqmark{i}{=} 2 \mathbb{E} \left[ \sum_{r=r_0}^{r_0+P} \sum_{i=1}^N \sum_{k=0}^{I-1} \mathbb{E} \left[ \left\| q_r^i \left( \nabla F_i(\vx_{r,k}^i; \xi_{r,k}^i) - \nabla f_i(\vx_{r,k}^i) \right) \right\|^2 \middle| \gQ \right] \right] \nonumber \\
    &\quad = 2 \sum_{r=r_0}^{r_0+P} \sum_{i=1}^N \sum_{k=0}^{I-1} \mathbb{E} \left[ \left( q_r^i \right)^2 \mathbb{E} \left[ \left\| \nabla F_i(\vx_{r,k}^i; \xi_{r,k}^i) - \nabla f_i(\vx_{r,k}^i) \right\|^2 \middle| \gQ \right] \right] \nonumber \\
    &\quad \leq 2 \sigma^2 \sum_{r=r_0}^{r_0+P} \sum_{i=1}^N \sum_{k=0}^{I-1} \mathbb{E} \left[ \left( q_r^i \right)^2 \right] \nonumber \\
    &\quad \leq 2 IP \rho^2 \sigma^2, \label{eq:round_update_A1}
\end{align}
where $(i)$ uses the fact that for each $i$, $\left\{ q_r^i \left( \nabla
F_i(\vx_{r,k}^i) - \nabla f_i(\vx_{r,k}^i) \right) \right\}_{r,k}$ is a
martingale difference sequence with respect to $\gG$ (when conditioned on $\gQ$)
and that stochastic gradient noise is independent across clients. For $A_2$,
\begin{align*}
    &\sum_{r=r_0}^{r_0+P} \sum_{i=1}^N \sum_{k=0}^{I-1} q_r^i \left( - \mG_{r_0}^i - \bar{\mG}_{r_0}^i + \mG_{r_0} - \bar{\mG}_{r_0} \right) \\
    &\quad = - \sum_{r=r_0}^{r_0+P} \sum_{i=1}^N \sum_{k=0}^{I-1} q_r^i \left( \mG_{r_0}^i - \bar{\mG}_{r_0}^i \right) + \sum_{r=r_0}^{r_0+P} \sum_{i=1}^N \sum_{k=0}^{I-1} q_r^i \left( \mG_{r_0} - \bar{\mG}_{r_0} \right) \\
    &\quad = - I \sum_{r=r_0}^{r_0+P} \sum_{i=1}^N q_r^i \left( \mG_{r_0}^i - \bar{\mG}_{r_0}^i \right) + IP \left( \mG_{r_0} - \bar{\mG}_{r_0} \right) \\
    &\quad = - IP \sum_{i=1}^N \left( \frac{1}{P} \sum_{r=r_0}^{r_0+P} q_r^i \right) \left( \mG_{r_0}^i - \bar{\mG}_{r_0}^i \right) + IP \sum_{i=1}^N \frac{1}{N} \left( \mG_{r_0}^i - \bar{\mG}_{r_0}^i \right) \\
    &\quad = - IP \sum_{i=1}^N \left( \bar{q}_{r_0}^i - \frac{1}{N} \right) \left( \mG_{r_0}^i - \bar{\mG}_{r_0}^i \right),
\end{align*}
so
\begin{align}
    &A_2 = 2 I^2 P^2 \mathbb{E} \left[ \left\| \sum_{i=1}^N \left( \bar{q}_{r_0}^i - \frac{1}{N} \right) \left( \mG_{r_0}^i - \bar{\mG}_{r_0}^i \right) \right\|^2 \right] \nonumber \\
    &\quad = 2 I^2 P^2 \mathbb{E} \left[ \mathbb{E} \left[ \left\| \sum_{i=1}^N \left( \bar{q}_{r_0}^i - \frac{1}{N} \right) \left( \mG_{r_0}^i - \bar{\mG}_{r_0}^i \right) \right\|^2 \middle| \gQ \right] \right] \nonumber \\
    &\quad \Eqmark{i}{=} 2 I^2 P^2 \mathbb{E} \left[ \sum_{i=1}^N \mathbb{E} \left[ \left\| \left( \bar{q}_{r_0}^i - \frac{1}{N} \right) \left( \mG_{r_0}^i - \bar{\mG}_{r_0}^i \right) \right\|^2 \middle| \gQ \right] \right] \nonumber \\
    &\quad = 2 I^2 P^2 \mathbb{E} \left[ \sum_{i=1}^N \left( \bar{q}_{r_0}^i - \frac{1}{N} \right)^2 \mathbb{E} \left[ \left\| \mG_{r_0}^i - \bar{\mG}_{r_0}^i \right\|^2 \middle| \gQ \right] \right] \nonumber \\
    &\quad = 2 I^2 P^2 \mathbb{E} \left[ \sum_{i=1}^N \left( \bar{q}_{r_0}^i - \frac{1}{N} \right)^2 \mathbb{E} \left[ \left\| \frac{1}{P \bar{z}_{r_0-P}^i I} \sum_{s=r_0-P}^{r_0-1} \sum_{k=0}^{I-1} z_s^i \left( \nabla F_i(\vy_{s,k}^i; \zeta_{s,k}^i) - \nabla f_i(\vy_{s,k}^i) \right) \right\|^2 \middle| \gQ \right] \right] \nonumber \\
    &\quad = 2 I^2 P^2 \mathbb{E} \left[ \sum_{i=1}^N \left( \bar{q}_{r_0}^i - \frac{1}{N} \right)^2 \sum_{s=r_0-P}^{r_0-1} \sum_{k=0}^{I-1} \mathbb{E} \left[ \left\| \frac{1}{P \bar{z}_{r_0-P}^i I} z_s^i \left( \nabla F_i(\vy_{s,k}^i; \zeta_{s,k}^i) - \nabla f_i(\vy_{s,k}^i) \right) \right\|^2 \middle| \gQ \right] \right] \nonumber \\
    &\quad = 2 I^2 P^2 \mathbb{E} \left[ \sum_{i=1}^N \left( \bar{q}_{r_0}^i - \frac{1}{N} \right)^2 \sum_{s=r_0-P}^{r_0-1} \sum_{k=0}^{I-1} \frac{1}{P^2 \left( \bar{z}_{r_0-P}^i \right)^2 I^2} \left( z_s^i \right)^2 \mathbb{E} \left[ \left\| \nabla F_i(\vy_{s,k}^i; \zeta_{s,k}^i) - \nabla f_i(\vy_{s,k}^i) \right\|^2 \middle| \gQ \right] \right] \nonumber \\
    &\quad = 2 I^2 P^2 \frac{\sigma^2}{I} \mathbb{E} \left[ \sum_{i=1}^N \left( \bar{q}_{r_0}^i - \frac{1}{N} \right)^2 \sum_{s=r_0-P}^{r_0-1} \frac{\left( z_s^i \right)^2}{P^2 \left( \bar{z}_{r_0-P}^i \right)^2} \right] \nonumber \\
    &\quad = 2 IP \sigma^2 \mathbb{E} \left[ \sum_{i=1}^N \left( \bar{q}_{r_0}^i - \frac{1}{N} \right)^2 \frac{ \frac{1}{P} \sum_{s=r_0-P}^{r_0-1} \left( z_s^i \right)^2 }{\left( \frac{1}{P} \sum_{s=r_0-P}^{r_0-1} z_s^i \right)^2} \right] \leq 2IP \rho^2 \sigma^2, \label{eq:round_update_A2}
\end{align}
where $(i)$ uses the fact that, conditioned on $\gQ$, the variables $\mG_{r_0}^i -
\bar{\mG}_{r_0}^i$ depend only on stochastic gradient noise, which is independent across
clients, and the last inequality uses the condition $\mathbb{E} \left[ \sum_{i=1}^N
\left( v_{r_0}^i \right)^2 \Lambda_{r_0}^i \right] \leq \rho^2$. Plugging
\Eqref{eq:round_update_A1} and \Eqref{eq:round_update_A2} into
\Eqref{eq:round_update_var_inter} yields
\begin{equation}
    \mathbb{E} \left[ \left\| \sum_{r=r_0}^{r_0+P} \sum_{i=1}^N \sum_{k=0}^{I-1} q_r^i \left( \vg_{r,k}^i - \bar{\vg}_{r,k}^i \right) \right\|^2 \right] \leq 4 IP \rho^2 \sigma^2. \label{eq:window_update_var}
\end{equation}

Therefore
\begin{align}
    &\mathbb{E} \left[ \left\| \bvx_{r_0+P} - \bvx_{r_0} \right\|^2 \right] \nonumber \\
    &\quad = \gamma^2 \eta^2 \mathbb{E} \left[ \left\| \sum_{r=r_0}^{r_0+P-1} \sum_{i=1}^N \sum_{k=0}^{I-1} q_r^i \vg_{r,k}^i \right\|^2 \right] \nonumber \\
    &\quad \leq \gamma^2 \eta^2 \mathbb{E} \left[ \left\| \sum_{r=r_0}^{r_0+P-1} \sum_{i=1}^N \sum_{k=0}^{I-1} q_r^i \bar{\vg}_{r,k}^i \right\|^2 \right] + \gamma^2 \eta^2 \mathbb{E} \left[ \left\| \sum_{r=r_0}^{r_0+P-1} \sum_{i=1}^N \sum_{k=0}^{I-1} q_r^i \left( \vg_{r,k}^i - \bar{\vg}_{r,k}^i \right) \right\|^2 \right] \nonumber \\
    &\quad \Eqmark{i}{\leq} \gamma^2 \eta^2 \mathbb{E} \left[ \left\| \sum_{r=r_0}^{r_0+P-1} \sum_{i=1}^N \sum_{k=0}^{I-1} q_r^i \bar{\vg}_{r,k}^i \right\|^2 \right] + 4 \gamma^2 \eta^2 IP \rho^2 \sigma^2 \nonumber \\
    &\quad \leq \gamma^2 \eta^2 IP \sum_{r=r_0}^{r_0+P-1} \sum_{i=1}^N \sum_{k=0}^{I-1} \mathbb{E} \left[ q_r^i \left\| \nabla f_i(\vx_{r,k}^i) - \bar{\mG}_{r_0}^i + \bar{\mG}_{r_0} \right\|^2 \right] + 4 \gamma^2 \eta^2 IP \rho^2 \sigma^2 \nonumber \\
    &\quad \Eqmark{ii}{\leq} 5 \gamma^2 \eta^2 IP \sum_{r=r_0}^{r_0+P-1} \sum_{i=1}^N \sum_{k=0}^{I-1} \mathbb{E} \big[ q_r^i \big( \|\nabla f(\bvx_{r_0})\|^2 + \|\nabla f_i(\vx_{r,k}^i) - \nabla f_i(\bvx_{r,k})\|^2 \nonumber \\
    &\quad\quad + \|\nabla f_i(\bvx_{r,k}) - \nabla f_i(\bvx_{r_0})\|^2 + \|\nabla f_i(\bvx_{r_0}) - \bar{\mG}_{r_0}^i \|^2 + \|\nabla f(\bvx_{r_0}) - \bar{\mG}_{r_0} \|^2 \big) \big] + 4 \gamma^2 \eta^2 IP \rho^2 \sigma^2 \nonumber \\
    &\quad \leq 5 \gamma^2 \eta^2 I^2 P^2 \mathbb{E} \left[ \|\nabla f(\bvx_{r_0})\|^2 \right] \nonumber \\
    &\quad\quad + 5 \gamma^2 \eta^2 L^2 IP \sum_{r=r_0}^{r_0+P-1} \sum_{k=0}^{I-1} \sum_{i=1}^N \left( \mathbb{E} \left[ q_r^i \|\vx_{r,k}^i - \bvx_{r,k}\|^2 \right] + \mathbb{E} \left[ q_r^i \|\bvx_{r,k} - \bvx_{r_0}\|^2 \right] \right) \nonumber \\
    &\quad\quad + 5 \gamma^2 \eta^2 I^2 P^2 \mathbb{E} \left[ \|\nabla f(\bvx_{r_0}) - \bar{\mG}_{r_0} \|^2 + \sum_{i=1}^N \left( \frac{1}{P} \sum_{r=r_0}^{r_0+P-1} q_r^i \right) \|\nabla f_i(\bvx_{r_0}) - \bar{\mG}_{r_0}^i \|^2 \right] \nonumber \\
    &\quad\quad + 4 \gamma^2 \eta^2 IP \rho^2 \sigma^2 \nonumber \\
    &\quad \Eqmark{iii}{\leq} 5 \gamma^2 \eta^2 I^2 P^2 \mathbb{E} \left[ \|\nabla f(\bvx_{r_0})\|^2 \right] + 5 \gamma^2 \eta^2 L^2 IP \mathbb{E} \left[ \sum_{r=r_0}^{r_0+P-1} \sum_{k=0}^{I-1} D_{r,k} + M_{r,k} \right] \nonumber \\
    &\quad\quad + 10 \gamma^2 \eta^2 I^2 P^2 \frac{1}{N} \sum_{i=1}^N \mathbb{E} \left[ \|\nabla f_i(\bvx_{r_0}) - \bar{\mG}_{r_0}^i \|^2 \right] + 4 \gamma^2 \eta^2 IP \rho^2 \sigma^2, \label{eq:quadratic_err_inter}
\end{align}
where $(i)$ uses \Eqref{eq:window_update_var}, $(ii)$ uses the decomposition
\begin{align*}
    \nabla f_i(\vx_{r,k}^i) - \bar{\mG}_{r_0}^i + \bar{\mG}_{r_0} &= \nabla f(\bvx_{r_0}) + (\nabla f_i(\vx_{r,k}^i) - \nabla f_i(\bvx_{r,k})) + (\nabla f_i(\bvx_{r,k}) - \nabla f_i(\bvx_{r_0})) \\
    &\quad + (\nabla f_i(\bvx_{r_0}) - \bar{\mG}_{r_0}^i) + (\nabla f(\bvx_{r_0}) - \bar{\mG}_{r_0}),
\end{align*}
and $(iii)$ uses $\mG_{r_0} = \frac{1}{N} \sum_{i=1}^N \mG_{r_0}^i$ and $\mathbb{E}_{r_0} \left[ \bar{q}_{r_0}^i \right] = \frac{1}{N}$ to obtain
\begin{align*}
    &\mathbb{E} \left[ \|\nabla f(\bvx_{r_0}) - \bar{\mG}_{r_0}\|^2 + \sum_{i=1}^N \left( \frac{1}{P} \sum_{r=r_0}^{r_0+P-1} q_r^i \right) \|\nabla f_i(\bvx_{r_0}) - \bar{\mG}_{r_0}^i\|^2 \right] \\
    &\quad = \mathbb{E} \left[ \mathbb{E}_{r_0} \left[ \|\nabla f(\bvx_{r_0}) - \bar{\mG}_{r_0}\|^2 + \sum_{i=1}^N \left( \frac{1}{P} \sum_{r=r_0}^{r_0+P-1} q_r^i \right) \|\nabla f_i(\bvx_{r_0}) - \bar{\mG}_{r_0}^i\|^2 \right] \right] \\
    &\quad = \mathbb{E} \left[ \|\nabla f(\bvx_{r_0}) - \bar{\mG}_{r_0}\|^2 + \sum_{i=1}^N \mathbb{E}_{r_0} \left[ \frac{1}{P} \sum_{r=r_0}^{r_0+P-1} q_r^i \right] \|\nabla f_i(\bvx_{r_0}) - \bar{\mG}_{r_0}^i\|^2 \right] \\
    &\quad = \mathbb{E} \left[ \|\nabla f(\bvx_{r_0}) - \bar{\mG}_{r_0}\|^2 + \frac{1}{N} \sum_{i=1}^N \|\nabla f_i(\bvx_{r_0}) - \bar{\mG}_{r_0}^i\|^2 \right] \\
    &\quad = \mathbb{E} \left[ \left\| \frac{1}{N} \sum_{i=1}^N \left( \nabla f_i(\bvx_{r_0}) - \bar{\mG}_{r_0}^i \right) \right\|^2 + \frac{1}{N} \sum_{i=1}^N \|\nabla f_i(\bvx_{r_0}) - \bar{\mG}_{r_0}^i\|^2 \right] \\
    &\quad \leq \mathbb{E} \left[ \frac{1}{N} \sum_{i=1}^N \left\| \nabla f_i(\bvx_{r_0}) - \bar{\mG}_{r_0}^i \right\|^2 + \frac{1}{N} \sum_{i=1}^N \|\nabla f_i(\bvx_{r_0}) - \bar{\mG}_{r_0}^i\|^2 \right].
\end{align*}
The remaining term in \Eqref{eq:quadratic_err_inter} can be simplified as:
\begin{align*}
    \mathbb{E} \left[ \|\nabla f_i(\bvx_{r_0}) - \bar{\mG}_{r_0}^i\|^2 \right] &\leq \mathbb{E} \left[ \left\|\nabla f_i(\bvx_{r_0}) - \frac{1}{IP} \sum_{r=r_0-P}^{r_0-1} \sum_{k=0}^{I-1} \frac{z_r^i}{\bar{z}_{r_0-P}^i} \nabla f_i(\vy_{r,k}^i) \right\|^2 \right] \\
    &\leq \mathbb{E} \left[ \left\| \frac{1}{IP} \sum_{r=r_0-P}^{r_0-1} \sum_{k=0}^{I-1} \frac{z_r^i}{\bar{z}_{r_0-P}^i} \left( \nabla f_i(\bvx_{r_0}) - \nabla f_i(\vy_{r,k}^i) \right) \right\|^2 \right] \\
    &\leq \frac{1}{IP} \sum_{r=r_0-P}^{r_0-1} \sum_{k=0}^{I-1} \mathbb{E} \left[ \frac{z_r^i}{\bar{z}_{r_0-P}^i} \left\| \nabla f_i(\bvx_{r_0}) - \nabla f_i(\vy_{r,k}^i) \right\|^2 \right] \\
    &\leq \frac{L^2}{IP} \sum_{r=r_0-P}^{r_0-1} \sum_{k=0}^{I-1} \mathbb{E} \left[ \frac{z_r^i}{\bar{z}_{r_0-P}^i} \left\| \bvx_{r_0} - \vy_{r,k}^i \right\|^2 \right] \\
    &= L^2 \mathbb{E} \left[ \frac{1}{IP} \sum_{r=r_0-P}^{r_0-1} \sum_{k=0}^{I-1} \frac{z_r^i}{\bar{z}_{r_0-P}^i} \mathbb{E} \left[ \left\| \bvx_{r_0} - \vy_{r,k}^i \right\|^2 \middle| \gQ \right] \right] \\
    &= L^2 \mathbb{E} \left[ S_{r_0}^i \right].
\end{align*}
Plugging back to \Eqref{eq:quadratic_err_inter} yields
\begin{align*}
    &\mathbb{E} \left[ \left\| \bvx_{r_0+P} - \bvx_{r_0} \right\|^2 \right] \\
    &\quad \leq 5 \gamma^2 \eta^2 I^2 P^2 \mathbb{E} \left[ \|\nabla f(\bvx_{r_0})\|^2 \right] + 4 \gamma^2 \eta^2 IP \rho^2 \sigma^2 \\
    &\quad\quad + 5 \gamma^2 \eta^2 L^2 IP \mathbb{E} \left[ \sum_{r=r_0}^{r_0+P-1} \sum_{k=0}^{I-1} (D_{r,k} + M_{r,k}) \right] + 10 \gamma^2 \eta^2 L^2 I^2 P^2 \frac{1}{N} \sum_{i=1}^N \mathbb{E} \left[ S_{r_0}^i \right] \\
    &\quad \Eqmark{i}{\leq} 5 \gamma^2 \eta^2 I^2 P^2 \mathbb{E} \left[ \|\nabla f(\bvx_{r_0})\|^2 \right] + 4 \gamma^2 \eta^2 IP \rho^2 \sigma^2 \\
    &\quad\quad + 5 \gamma^2 \eta^2 L^2 IP \bigg( 108 \eta^2 I^3 P^3 \mathbb{E} \left[ \left\| \nabla f(\bvx_{r_0}) \right\|^2 \right] + \left( 75 \eta^2 I^2 P + 65 \eta^2 I^2 P^2 \rho^2 \right) \sigma^2 \\
    &\quad\quad + 254 \eta^2 L^2 I^3 P^3 \frac{1}{N} \sum_{i=1}^N\mathbb{E} \left[ S_{r_0}^i \right]\bigg) + 10 \gamma^2 \eta^2 L^2 I^2 P^2 \frac{1}{N} \sum_{i=1}^N \mathbb{E} \left[ S_{r_0}^i \right] \\
    &\quad \leq 5 \gamma^2 \eta^2 I^2 P^2 \left( 1 + 108 \eta^2 L^2 I^2 P^2 \right) \mathbb{E} \left[ \|\nabla f(\bvx_{r_0})\|^2 \right] \\
    &\quad\quad + \gamma \eta IP \left( 4 \gamma \eta \rho^2 + 375 \gamma \eta^3 L^2 I^2 P + 325 \gamma \eta^3 L^2 I^2 P^2 \rho^2 \right) \sigma^2 \\
    &\quad\quad + 10 \gamma^2 \eta^2 L^2 I^2 P^2 \left( 1 + 127 \eta^2 L^2 I^2 P^2 \right) \frac{1}{N} \sum_{i=1}^N \mathbb{E} \left[ S_{r_0}^i \right] \\
    &\quad \Eqmark{ii}{\leq} 6 \gamma^2 \eta^2 I^2 P^2 \mathbb{E} \left[ \|\nabla f(\bvx_{r_0})\|^2 \right] + \gamma \eta IP \left( 5 \gamma \eta \rho^2 + 7 \eta^2 LI \right) \sigma^2 + 11 \gamma^2 \eta^2 L^2 I^2 P^2 \frac{1}{N} \sum_{i=1}^N \mathbb{E} \left[ S_{r_0}^i \right].
\end{align*}
where $(i)$ uses \Eqref{eq:dm_ub_sum} from Lemma \ref{lem:dm_ub} and $(ii)$ uses $\eta
\leq \frac{1}{60 LIP}$ and $\gamma \eta \leq \frac{1}{60 LIP}$. This proves
\Eqref{eq:exp_sq_update}.
\end{proof}

\begin{lemma} \label{lem:staleness_rec_ub}
Suppose that $P_{\gQ_{r_0}}(\bar{q}_{r_0}^i > 0) \geq p_{\text{sample}}$, $\mathbb{E}
\left[ w_{r_0}^i \middle| \gQ_{:r_0} \right] \leq \frac{P^2}{N}$,
$\mathbb{E}_{r_0}[\bar{q}_{r_0}^i] = \frac{1}{N}$, and $\mathbb{E} \left[ \sum_{i=1}^N
\left( v_{r_0}^i \right)^2 \Lambda_{r_0}^i \right] \leq \rho^2$. If $\eta \leq
\frac{\sqrt{p_{\text{sample}}}}{60 LIP}$ and $\gamma \eta \leq
\frac{p_{\text{sample}}}{60 LIP}$, then
\begin{align*}
    \frac{1}{N} \sum_{i=1}^N \mathbb{E} \left[ S_{r_0+P}^i \right] &\leq \left( 324 \eta^2 I^2 P^2 + \frac{20}{p_{\text{sample}}} \gamma^2 \eta^2 I^2 P^2 \right) \mathbb{E} \left[ \left\| \nabla f(\bvx_{r_0}) \right\|^2 \right] \\
    &\quad + \left( 226 \eta^2 I + 195 \eta^2 IP \rho^2 + \frac{17}{p_{\text{sample}}} \gamma^2 \eta^2 IP \rho^2 \right) \sigma^2 \\
    &\quad + \left( 1 - \frac{1}{2} p_{\text{sample}} \right) \frac{1}{N} \sum_{j=1}^N \mathbb{E} \left[ S_{r_0}^j \right].
\end{align*}
\end{lemma}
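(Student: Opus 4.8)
The plan is to write $S_{r_0+P}^i$ as the window-average $\frac{1}{IP}\sum_{s=r_0}^{r_0+P-1}\sum_{k=0}^{I-1}\frac{z_s^i}{\bar z_{r_0}^i}\mathbb{E}[\|\vy_{s,k}^i-\bvx_{r_0+P}\|^2\mid\gQ]$ and to split the client average according to $\indicator{\bar q_{r_0}^i>0}$. From the definitions of $t_i$, $z_s^i$, $\bar z$, and $\vy_{s,k}^i$, when $\bar q_{r_0}^i=0$ every iterate and weight appearing in $S_{r_0+P}^i$ is identical to the corresponding one in $S_{r_0}^i$, and only the anchor changes from $\bvx_{r_0}$ to $\bvx_{r_0+P}$; when $\bar q_{r_0}^i>0$ we have $\vy_{s,k}^i=\vx_{s,k}^i$ and $z_s^i=q_s^i$, so $S_{r_0+P}^i$ is built entirely from the current window. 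I would bound the two cases separately and add them.

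For the stale clients, Young's inequality gives $\|\vy_{s,k}^i-\bvx_{r_0+P}\|^2\le(1+\alpha)\|\vy_{s,k}^i-\bvx_{r_0}\|^2+(1+\alpha^{-1})\|\bvx_{r_0+P}-\bvx_{r_0}\|^2$, hence $\indicator{\bar q_{r_0}^i=0}S_{r_0+P}^i\le(1+\alpha)\indicator{\bar q_{r_0}^i=0}S_{r_0}^i+(1+\alpha^{-1})\mathbb{E}[\|\bvx_{r_0+P}-\bvx_{r_0}\|^2\mid\gQ]$. The key point is that $S_{r_0}^i$ depends only on rounds before $r_0$ and is therefore $\gQ_{:r_0}$-measurable, whereas $\indicator{\bar q_{r_0}^i=0}$ is determined by the current window; conditioning on $\gQ_{:r_0}$ and using $S_{r_0}^i\ge0$ together with $\sP_{\gQ_{r_0}}(\bar q_{r_0}^i>0)\ge p_{\text{sample}}$ yields $\mathbb{E}[\indicator{\bar q_{r_0}^i=0}S_{r_0}^i]\le(1-p_{\text{sample}})\mathbb{E}[S_{r_0}^i]$. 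Choosing $\alpha=\Theta(p_{\text{sample}})$ so that $(1+\alpha)(1-p_{\text{sample}})\le1-\tfrac12 p_{\text{sample}}$ produces the contraction factor in the statement, and the accompanying $(1+\alpha^{-1})=O(p_{\text{sample}}^{-1})$ explains the $p_{\text{sample}}^{-1}$ prefactors attached to the $\gamma^2$ terms once $\|\bvx_{r_0+P}-\bvx_{r_0}\|^2$ is estimated.

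For the fresh clients I would follow the decomposition indicated in the proof sketch, bounding $S_{r_0+P}^i$ by an outer $\frac{q_s^i}{\bar q_{r_0}^i}$-weighted window-average of the aggregate drift terms $D_{s,k}+M_{s,k}$ together with the amplified step $\|\bvx_{r_0+P}-\bvx_{r_0}\|^2$. Each $D_{s,k}+M_{s,k}$ is then replaced by its bound from Lemma \ref{lem:dm_ub}, which is itself an inner non-uniform average $\sum_j q_s^j S_{r_0}^j$ of the previous window's staleness (plus gradient-norm and noise). Reorganizing the double sum after the $\frac{1}{IP}\sum_k$ collapses, the outer weight $\frac{q_s^i}{\bar q_{r_0}^i}$ and the inner weight $q_s^j$ merge into exactly $w_{r_0}^i$, so the nested non-uniform average becomes $c\,\eta^2L^2I^2\sum_i w_{r_0}^i S_{r_0}^i$. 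Taking total expectation and invoking $\mathbb{E}[w_{r_0}^i\mid\gQ_{:r_0}]\le P^2/N$ (again using that $S_{r_0}^i$ is $\gQ_{:r_0}$-measurable) turns this into a coefficient of order $\eta^2L^2I^2P^2$ on $\frac1N\sum_i\mathbb{E}[S_{r_0}^i]$; the condition $\eta\le\sqrt{p_{\text{sample}}}/(60LIP)$ makes that coefficient $O(p_{\text{sample}})$, small enough to be absorbed so that the total staleness coefficient stays below $1-\tfrac12 p_{\text{sample}}$.

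The amplified step $\|\bvx_{r_0+P}-\bvx_{r_0}\|^2$ appears in both cases, and I would bound its expectation with Lemma \ref{lem:quadratic_err}; this is where the hypothesis $\mathbb{E}[\sum_i(v_{r_0}^i)^2\Lambda_{r_0}^i]\le\rho^2$ is consumed, through the term $A_2$ of that lemma. Collecting the gradient-norm, noise, and staleness contributions from the fresh case, the stale case, and Lemma \ref{lem:quadratic_err}, and simplifying the non-dominant terms using $\eta\le\sqrt{p_{\text{sample}}}/(60LIP)$ and $\gamma\eta\le p_{\text{sample}}/(60LIP)$, assembles the stated coefficients. I expect the fresh case to be the main obstacle: correctly reorganizing the reweighted drift bounds so that the nested non-uniform average collapses to $w_{r_0}^i$, and keeping the measurability bookkeeping straight—separating current-window participation (the indicators, $w_{r_0}^i$, and $\bar q_{r_0}^i$) from the $\gQ_{:r_0}$-measurable staleness $S_{r_0}^i$—so that \Eqref{eq:w_cond}, the hypothesis $\mathbb{E}_{r_0}[\bar q_{r_0}^i]=1/N$, and the contraction estimate can all be applied cleanly.
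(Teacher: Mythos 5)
Your proposal takes essentially the same route as the paper's proof: the same case split into fresh ($\bar{q}_{r_0}^i > 0$) and stale clients, the same Young's inequality with parameter $\Theta(p_{\text{sample}})$ combined with the $\gQ_{:r_0}$-measurability of $S_{r_0}^i$ to get the $(1-\tfrac{1}{2}p_{\text{sample}})$ contraction, the same use of Lemma \ref{lem:dm_ub} for the fresh case with the nested non-uniform average collapsing into the weights $w_{r_0}^{i,j}$ and controlled via \Eqref{eq:w_cond}, and the same invocation of Lemma \ref{lem:quadratic_err} for $\mathbb{E}[\|\bvx_{r_0+P}-\bvx_{r_0}\|^2]$ where the $(v_{r_0}^i)^2\Lambda_{r_0}^i$ condition is consumed. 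The plan is correct, differing from the paper only in minor bookkeeping (index conventions for the merged weights and the exact budgeting between the Young parameter and the absorbed drift coefficients).
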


\begin{proof}
Let $1 \leq i \leq N$. We can consider the value $S_{r_0+P}^i$ under two cases:
$\bar{q}_{r_0}^i > 0$ and the complement. Let $A_{r_0}^i = \{\bar{q}_{r_0}^i > 0\}$. Denote
\begin{align*}
    B_1^i &= \indicator{A_{r_0}^i} \frac{1}{I P \bar{q}_{r_0}^i} \sum_{s=r_0}^{r_0+P-1} \sum_{k=0}^{I-1} q_s^i \mathbb{E} \left[ \left\| \vx_{s,k}^i - \bvx_{r_0+P} \right\|^2 \middle| \gQ \right] \\
    B_2^i &= \indicator{\bar{A}_{r_0}^i} \frac{1}{I P \bar{z}_{r_0}^i} \sum_{s=r_0}^{r_0+P-1} \sum_{k=0}^{I-1} z_s^i \mathbb{E} \left[ \left\| \vy_{s,k}^i - \bvx_{r_0+P} \right\|^2 \middle| \gQ \right].
\end{align*}
Then $S_{r_0+P}^i = B_1^i + B_2^i$, and we can consider the two cases separately.

Notice that
\begin{equation*}
    \| \vx_{s,k}^i - \bvx_{r_0+P} \|^2 \leq 3 \|\vx_{s,k}^i - \bvx_{s,k}\|^2 + 3 \|\bvx_{s,k} - \bvx_{r_0}\|^2 + 3 \|\bvx_{r_0+P} - \bvx_{r_0}\|^2.
\end{equation*}
Therefore
\begin{align*}
    B_1^i &= \indicator{A_{r_0}^i} \frac{1}{I P \bar{q}_{r_0}^i} \sum_{s=r_0}^{r_0+P-1} \sum_{k=0}^{I-1} q_s^i \mathbb{E} \left[ \left\| \vx_{s,k}^i - \bvx_{r_0+P} \right\|^2 \middle| \gQ \right] \\
    &\leq \indicator{A_{r_0}^i} \frac{3}{I P \bar{q}_{r_0}^i} \sum_{s=r_0}^{r_0+P-1} \sum_{k=0}^{I-1} q_s^i \bigg( \mathbb{E} \left[ \|\vx_{s,k}^i - \bvx_{s,k}\|^2 \middle| \gQ \right] + \mathbb{E} \left[ \|\bvx_{s,k} - \bvx_{r_0}\|^2 \middle| \gQ \right] \\
    &\quad + \mathbb{E} \left[ \|\bvx_{r_0+P} - \bvx_{r_0}\|^2 \middle| \gQ \right] \bigg) \\
    &\leq \indicator{A_{r_0}^i} \frac{3}{I P \bar{q}_{r_0}^i} \sum_{s=r_0}^{r_0+P-1} \sum_{k=0}^{I-1} q_s^i \left( D_{s,k} + M_{s,k} \right) + 3 \indicator{A_{r_0}^i} \mathbb{E} \left[ \|\bvx_{r_0+P} - \bvx_{r_0}\|^2 \middle| \gQ \right].
\end{align*}
Using \Eqref{eq:dm_ub_single} from Lemma \ref{lem:dm_ub},
\begin{align*}
    &\indicator{A_{r_0}^i} \frac{3}{I P \bar{q}_{r_0}^i} \sum_{s=r_0}^{r_0+P-1} \sum_{k=0}^{I-1} q_s^i \left( D_{s,k} + M_{s,k} \right) \\
    &\quad \leq \indicator{A_{r_0}^i} \frac{3}{I P \bar{q}_{r_0}^i} \sum_{s=r_0}^{r_0+P-1} \sum_{k=0}^{I-1} q_s^i \bigg( 108 \eta^2 I^2 P^2 \mathbb{E} \left[ \left\| \nabla f(\bvx_{r_0}) \right\|^2 \middle| \gQ \right] + \left( 75 \eta^2 I + 65 \eta^2 IP \rho^2 \right) \sigma^2 \\
    &\quad\quad + 109 \eta^2 L^2 I^2 P^2 \sum_{j=1}^N \left( \bar{q}_{r_0}^j + \frac{1}{N} \right) S_{r_0}^j + 36 \eta^2 L^2 I^2 \sum_{j=1}^N q_s^j S_{r_0}^j \bigg) \\
    &\quad \leq 324 \eta^2 I^2 P^2 \mathbb{E} \left[ \left\| \nabla f(\bvx_{r_0}) \right\|^2 \middle| \gQ \right] + \left( 225 \eta^2 I + 195 \eta^2 IP \rho^2 \right) \sigma^2 \\
    &\quad\quad + 327 \eta^2 L^2 I^2 P^2 \sum_{j=1}^N \left( \bar{q}_{r_0}^j + \frac{1}{N} \right) S_{r_0}^j + \indicator{A_{r_0}^i} \frac{3}{P \bar{q}_{r_0}^i} \sum_{s=r_0}^{r_0+P-1} q_s^i \left( 36 \eta^2 L^2 I^2 \sum_{j=1}^N q_s^j S_{r_0}^j \right) \\
    &\quad = 324 \eta^2 I^2 P^2 \mathbb{E} \left[ \left\| \nabla f(\bvx_{r_0}) \right\|^2 \middle| \gQ \right] + \left( 225 \eta^2 I + 195 \eta^2 IP \rho^2 \right) \sigma^2 \\
    &\quad\quad + 327 \eta^2 L^2 I^2 P^2 \sum_{j=1}^N \left( \bar{q}_{r_0}^j + \frac{1}{N} \right) S_{r_0}^j + 108 \eta^2 L^2 I^2 \sum_{j=1}^N \left( \frac{\indicator{A_{r_0}^i}}{P \bar{q}_{r_0}^i} \sum_{s=r_0}^{r_0+P-1} q_s^i q_s^j \right) S_{r_0}^j.
\end{align*}
Denote $w_{r_0}^{i,j} = \frac{\indicator{A_{r_0}^i}}{P \bar{q}_{r_0}^i}
\sum_{s=r_0}^{r_0+P-1} q_s^i q_s^j$, so that $w_{r_0}^i = \frac{1}{N} \sum_{j=1}^N
w_{r_0}^{i,j}$. Then
\begin{align}
    B_1^i &\leq 324 \eta^2 I^2 P^2 \mathbb{E} \left[ \left\| \nabla f(\bvx_{r_0}) \right\|^2 \middle| \gQ \right] + \left( 225 \eta^2 I + 195 \eta^2 IP \rho^2 \right) \sigma^2 \nonumber \\
    &\quad\quad + 327 \eta^2 L^2 I^2 P^2 \sum_{j=1}^N \left( \bar{q}_{r_0}^j + \frac{1}{N} \right) S_{r_0}^j + 108 \eta^2 L^2 I^2 \sum_{j=1}^N \left( \frac{\indicator{A_{r_0}^i}}{P \bar{q}_{r_0}^i} \sum_{s=r_0}^{r_0+P-1} q_s^i q_s^j \right) S_{r_0}^j \nonumber \\
    &\quad\quad + 3 \indicator{A_{r_0}^i} \mathbb{E} \left[ \|\bvx_{r_0+P} - \bvx_{r_0}\|^2 \middle| \gQ \right] \nonumber \\
    \mathbb{E} \left[ B_1^i \middle| \gQ_{:r_0} \right] &\Eqmark{i}{\leq} 324 \eta^2 I^2 P^2 \mathbb{E} \left[ \left\| \nabla f(\bvx_{r_0}) \right\|^2 \middle| \gQ_{:r_0} \right] + \left( 225 \eta^2 I + 195 \eta^2 IP \rho^2 \right) \sigma^2 \nonumber \\
    &\quad\quad + 654 \eta^2 L^2 I^2 P^2 \frac{1}{N} \sum_{j=1}^N S_{r_0}^j + 108 \eta^2 L^2 I^2 \sum_{j=1}^N \mathbb{E} \left[ \frac{\indicator{A_{r_0}^i}}{P \bar{q}_{r_0}^i} \sum_{s=r_0}^{r_0+P-1} q_s^i q_s^j \middle| \gQ_{:r_0} \right] S_{r_0}^j \nonumber \\
    &\quad\quad + 3 \mathbb{E} \left[ \indicator{A_{r_0}^i} \|\bvx_{r_0+P} - \bvx_{r_0}\|^2 \middle| \gQ_{:r_0} \right] \nonumber \\
    \mathbb{E} \left[ B_1^i \right] &\Eqmark{ii}{\leq} 324 \eta^2 I^2 P^2 \mathbb{E} \left[ \left\| \nabla f(\bvx_{r_0}) \right\|^2 \right] + \left( 225 \eta^2 I + 195 \eta^2 IP \rho^2 \right) \sigma^2 \nonumber \\
    &\quad\quad + 654 \eta^2 L^2 I^2 P^2 \frac{1}{N} \sum_{j=1}^N \mathbb{E} \left[ S_{r_0}^j \right] + 108 \eta^2 L^2 I^2 \sum_{j=1}^N \mathbb{E} \left[ \mathbb{E} \left[ w_{r_0}^{i,j} \middle| \gQ_{:r_0} \right] S_{r_0}^j \right] \nonumber \\
    &\quad\quad + 3 \mathbb{E} \left[ \indicator{A_{r_0}^i} \|\bvx_{r_0+P} - \bvx_{r_0}\|^2 \right], \label{eq:staleness_b1}
\end{align}
where $(i)$ uses $\mathbb{E}_{r_0}[\bar{q}_{r_0}^i] = \frac{1}{N}$ and the tower
property $\mathbb{E} \left[ \mathbb{E} \left[ \cdot \middle| \gQ \right] \middle|
\gQ_{:r_0} \right] = \mathbb{E} \left[ \cdot \middle| \gQ_{:r_0} \right]$, and $(ii)$
uses the tower property $\mathbb{E} \left[ \mathbb{E} \left[ \cdot \middle| \gQ_{:r_0}
\right] \right] = \mathbb{E} \left[ \cdot \right]$.

Now consider $B_2^i$. Under $\bar{A}_{r_0}^i$, $z_r^i = z_{r-P}^i$ and $\vy_{r,k}^i =
\vy_{r-P,k}^i$ for all $r \in \{r_0, \ldots, r_0+P-1 \}$. Therefore
\begin{align*}
    B_2^i &= \indicator{\bar{A}_{r_0}^i} \frac{1}{I P \bar{z}_{r_0}^i} \sum_{s=r_0}^{r_0+P-1} \sum_{k=0}^{I-1} z_s^i \mathbb{E} \left[ \left\| \vy_{s,k}^i - \bvx_{r_0+P} \right\|^2 \middle| \gQ \right] \\
    &= \indicator{\bar{A}_{r_0}^i} \frac{1}{I P \bar{z}_{r_0-P}^i} \sum_{s=r_0-P}^{r_0-1} \sum_{k=0}^{I-1} z_s^i \mathbb{E} \left[ \left\| \vy_{s,k}^i - \bvx_{r_0+P} \right\|^2 \middle| \gQ \right] \\
    &\Eqmark{i}{\leq} \indicator{\bar{A}_{r_0}^i} \frac{1}{I P \bar{z}_{r_0-P}^i} \sum_{s=r_0-P}^{r_0-1} \sum_{k=0}^{I-1} z_s^i \bigg( (1 + \lambda) \mathbb{E} \left[ \left\| \vy_{s,k}^i - \bvx_{r_0} \right\|^2 \middle| \gQ \right] \\
    &\quad + \left( 1 + \frac{1}{\lambda} \right) \mathbb{E} \left[ \| \bvx_{r_0+P} - \bvx_{r_0} \|^2 \middle| \gQ \right] \bigg) \\
    &= \indicator{\bar{A}_{r_0}^i} (1 + \lambda) \frac{1}{I P \bar{z}_{r_0-P}^i} \sum_{s=r_0-P}^{r_0-1} \sum_{k=0}^{I-1} z_s^i \mathbb{E} \left[ \left\| \vy_{s,k}^i - \bvx_{r_0} \right\|^2 \middle| \gQ \right] \\
    &\quad + \indicator{\bar{A}_{r_0}^i} \left( 1 + \frac{1}{\lambda} \right) \mathbb{E} \left[ \left\| \bvx_{r_0+P} - \bvx_{r_0} \right\|^2 \middle| \gQ \right] \\
    &= \indicator{\bar{A}_{r_0}^i} (1 + \lambda) S_{r_0}^i + \indicator{\bar{A}_{r_0}^i} \left( 1 + \frac{1}{\lambda} \right) \mathbb{E} \left[ \left\| \bvx_{r_0+P} - \bvx_{r_0} \right\|^2 \middle| \gQ \right]
\end{align*}
where $(i)$ uses Young's inequality with an arbitrary $\lambda > 0$. Taking conditional
expectation $\mathbb{E}[\cdot | \gQ_{:r_0}]$ followed by total expectation yields
\begin{align}
    \mathbb{E}[B_2^i | \gQ_{:r_0}] &\leq \left( 1 - p_{\text{sample}} \right) (1 + \lambda) S_{r_0}^i + \left( 1 + \frac{1}{\lambda} \right) \mathbb{E} \left[ \indicator{\bar{A}_{r_0}^i} \left\| \bvx_{r_0+P} - \bvx_{r_0} \right\|^2 \middle| \gQ_{:r_0} \right] \nonumber \\
    \mathbb{E}[B_2^i] &\leq \left( 1 - p_{\text{sample}} \right) (1 + \lambda) \mathbb{E} \left[ S_{r_0}^i \right] + \left( 1 + \frac{1}{\lambda} \right) \mathbb{E} \left[ \indicator{\bar{A}_{r_0}^i} \left\| \bvx_{r_0+P} - \bvx_{r_0} \right\|^2 \right]. \label{eq:staleness_b2}
\end{align}

Adding \Eqref{eq:staleness_b1} and \Eqref{eq:staleness_b2} yields
\begin{align*}
    \mathbb{E} \left[ S_{r_0+P}^i \right] &= \mathbb{E}[B_1^i] + \mathbb{E}[B_2^i] \\
    &\leq 324 \eta^2 I^2 P^2 \mathbb{E} \left[ \left\| \nabla f(\bvx_{r_0}) \right\|^2 \right] + \left( 225 \eta^2 I + 195 \eta^2 IP \rho^2 \right) \sigma^2 \\
    &\quad + 654 \eta^2 L^2 I^2 P^2 \frac{1}{N} \sum_{j=1}^N \mathbb{E} \left[ S_{r_0}^j \right] + 108 \eta^2 L^2 I^2 \sum_{j=1}^N \mathbb{E} \left[ \mathbb{E} \left[ w_{r_0}^{i,j} \middle| \gQ_{:r_0} \right] S_{r_0}^j \right] \\
    &\quad + \left( 1 - p_{\text{sample}} \right) (1 + \lambda) \mathbb{E} \left[ S_{r_0}^i \right] + \left( 3 + \frac{1}{\lambda} \right) \mathbb{E} \left[ \|\bvx_{r_0+P} - \bvx_{r_0}\|^2 \right].
\end{align*}
Averaging over $i$,
\begin{align*}
    &\frac{1}{N} \sum_{i=1}^N \mathbb{E} \left[ S_{r_0+P}^i \right] \\
    &\quad\leq 324 \eta^2 I^2 P^2 \mathbb{E} \left[ \left\| \nabla f(\bvx_{r_0}) \right\|^2 \right] + \left( 225 \eta^2 I + 195 \eta^2 IP \rho^2 \right) \sigma^2 + 654 \eta^2 L^2 I^2 P^2 \frac{1}{N} \sum_{j=1}^N \mathbb{E} \left[ S_{r_0}^j \right] \\
    &\quad\quad + 108 \eta^2 L^2 I^2 \sum_{j=1}^N \mathbb{E} \left[ \mathbb{E} \left[ \frac{1}{N} \sum_{i=1}^N w_{r_0}^{i,j} \middle| \gQ_{:r_0} \right] S_{r_0}^j \right] \\
    &\quad\quad + \left( 1 - p_{\text{sample}} \right) (1 + \lambda) \frac{1}{N} \sum_{i=1}^N \mathbb{E} \left[ S_{r_0}^i \right] + \left( 3 + \frac{1}{\lambda} \right) \mathbb{E} \left[ \|\bvx_{r_0+P} - \bvx_{r_0}\|^2 \right] \\
    &\quad\leq 324 \eta^2 I^2 P^2 \mathbb{E} \left[ \left\| \nabla f(\bvx_{r_0}) \right\|^2 \right] + \left( 225 \eta^2 I + 195 \eta^2 IP \rho^2 \right) \sigma^2 \\
    &\quad\quad + \left( \left( 1 - p_{\text{sample}} \right) (1 + \lambda) + 654 \eta^2 L^2 I^2 P^2 \right) \frac{1}{N} \sum_{j=1}^N \mathbb{E} \left[ S_{r_0}^j \right] \\
    &\quad\quad + 108 \eta^2 L^2 I^2 \sum_{j=1}^N \mathbb{E} \left[ \mathbb{E} \left[ w_{r_0}^j \middle| \gQ_{:r_0} \right] S_{r_0}^j \right] + \left( 3 + \frac{1}{\lambda} \right) \mathbb{E} \left[ \|\bvx_{r_0+P} - \bvx_{r_0}\|^2 \right] \\
    &\quad\leq 324 \eta^2 I^2 P^2 \mathbb{E} \left[ \left\| \nabla f(\bvx_{r_0}) \right\|^2 \right] + \left( 225 \eta^2 I + 195 \eta^2 IP \rho^2 \right) \sigma^2 \\
    &\quad\quad + \left( \left( 1 - p_{\text{sample}} \right) (1 + \lambda) + 762 \eta^2 L^2 I^2 P^2 \right) \frac{1}{N} \sum_{j=1}^N \mathbb{E} \left[ S_{r_0}^j \right] + \left( 3 + \frac{1}{\lambda} \right) \mathbb{E} \left[ \|\bvx_{r_0+P} - \bvx_{r_0}\|^2 \right],
\end{align*}
where the last inequality uses the condition $\mathbb{E} \left[ w_{r_0}^i \middle|
\gQ_{:r_0} \right] \leq \frac{P^2}{N}$. We can then apply Lemma \ref{lem:quadratic_err}
and choose $\lambda = \frac{3 p_{\text{sample}}}{10 (1 - p_{\text{sample}})}$ to obtain
\begin{align*}
    &\frac{1}{N} \sum_{i=1}^N \mathbb{E} \left[ S_{r_0+P}^i \right] \\
    &\quad\leq \left( 324 \eta^2 I^2 P^2 + \left( 3 + \frac{1}{\lambda} \right) 6 \gamma^2 \eta^2 I^2 P^2 \right) \mathbb{E} \left[ \left\| \nabla f(\bvx_{r_0}) \right\|^2 \right] \\
    &\quad\quad + \left( 225 \eta^2 I + 195 \eta^2 IP \rho^2 + \left( 3 + \frac{1}{\lambda} \right) 5 \gamma^2 \eta^2 IP \rho^2 + \left( 3 + \frac{1}{\lambda} \right) 7 \gamma \eta^3 L I^2 P \right) \sigma^2 \\
    &\quad\quad + \left( \left( 1 - p_{\text{sample}} \right) (1 + \lambda) + 762 \eta^2 L^2 I^2 P^2 + \left( 3 + \frac{1}{\lambda} \right) 11 \gamma^2 \eta^2 L^2 I^2 P^2 \right) \frac{1}{N} \sum_{j=1}^N \mathbb{E} \left[ S_{r_0}^j \right] \\
    &\quad\leq \left( 324 \eta^2 I^2 P^2 + \frac{20}{p_{\text{sample}}} \gamma^2 \eta^2 I^2 P^2 \right) \mathbb{E} \left[ \left\| \nabla f(\bvx_{r_0}) \right\|^2 \right] \\
    &\quad\quad + \left( 225 \eta^2 I + 195 \eta^2 IP \rho^2 + \frac{17}{p_{\text{sample}}} \gamma^2 \eta^2 IP \rho^2 + \frac{23}{p_{\text{sample}}} \gamma \eta^3 L I^2 P \right) \sigma^2 \\
    &\quad\quad + \left( 1 - \frac{7}{10} p_{\text{sample}} + 762 \eta^2 L^2 I^2 P^2 + \frac{110}{3 p_{\text{sample}}} \gamma^2 \eta^2 L^2 I^2 P^2 \right) \frac{1}{N} \sum_{j=1}^N \mathbb{E} \left[ S_{r_0}^j \right] \\
    &\quad\leq \left( 324 \eta^2 I^2 P^2 + \frac{20}{p_{\text{sample}}} \gamma^2 \eta^2 I^2 P^2 \right) \mathbb{E} \left[ \left\| \nabla f(\bvx_{r_0}) \right\|^2 \right] \\
    &\quad\quad + \left( 226 \eta^2 I + 195 \eta^2 IP \rho^2 + \frac{17}{p_{\text{sample}}} \gamma^2 \eta^2 IP \rho^2 \right) \sigma^2 + \left( 1 - \frac{1}{2} p_{\text{sample}} \right) \frac{1}{N} \sum_{j=1}^N \mathbb{E} \left[ S_{r_0}^j \right],
\end{align*}
where the last inequality uses the conditions $\eta \leq
\frac{\sqrt{p_{\text{sample}}}}{60 LIP}$ and $\gamma \eta \leq
\frac{p_{\text{sample}}}{60 LIP}$, and we used that $3 + \frac{1}{\lambda} \leq
\frac{10}{3 p_{\text{sample}}}$.
\end{proof}

\begin{theorem}[Theorem \ref{thm:main_convergence} restated] \label{thm:convergence}
Suppose Assumptions \ref{ass:obj} and \ref{ass:sampling} hold, and $\mathbb{E}[w_{r_0}^i
| \gQ_{:r_0}] \leq \frac{P^2}{N}$, and $\mathbb{E} \left[ \sum_{i=1}^N \left( v_{r_0}^i
\right)^2 \Lambda_{r_0}^i \right] \leq \rho^2$. If
\begin{align*}
    \gamma \eta \leq \frac{p_{\text{sample}}}{60 LIP}, \quad \eta \leq \frac{\sqrt{p_{\text{sample}}}}{60 LIP},
\end{align*}
then Algorithm 1 satisfies
\begin{align*}
    \frac{P}{R} \sum_{r_0 \in \{0, P, \ldots, R-P\}} \mathbb{E}[\|\nabla f(\bvx_{r_0})\|^2] \leq \frac{5 \Delta}{\gamma \eta I R} + \left( 20 \gamma \eta L \rho^2 + 5785 \eta^2 L^2 IP \right) \sigma^2.
\end{align*}
\end{theorem}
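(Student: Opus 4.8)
The plan is to prove a one-window descent inequality for an augmented potential $\tilde f_{r_0} = f(\bvx_{r_0}) + \Phi(r_0)$, where $\Phi$ is a multiple of the average control-variate error, and then telescope over the $R/P$ windows. I would begin by applying $L$-smoothness of $f$ across one window:
\[
\mathbb{E}[f(\bvx_{r_0+P})] \leq \mathbb{E}[f(\bvx_{r_0})] + \mathbb{E}[\langle \nabla f(\bvx_{r_0}),\, \bvx_{r_0+P} - \bvx_{r_0}\rangle] + \tfrac{L}{2}\mathbb{E}[\|\bvx_{r_0+P} - \bvx_{r_0}\|^2].
\]
The quadratic term is handled immediately by Lemma~\ref{lem:quadratic_err}, which supplies a bound involving $\mathbb{E}[\|\nabla f(\bvx_{r_0})\|^2]$, noise proportional to $\sigma^2$, and the staleness error $\tfrac1N\sum_i\mathbb{E}[S^i_{r_0}]$.

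The crux is lower-bounding the expected inner product. Writing the update $\bvx_{r_0+P}-\bvx_{r_0} = -\gamma\eta\sum_{r,k,i} q_r^i \vg_{r,k}^i$ and first taking expectation over the stochastic gradients (so $\vg$ is replaced by $\bvg_{r,k}^i = \nabla f_i(\vx_{r,k}^i) - \bar\mG_{r_0}^i + \bar\mG_{r_0}$), the leading behaviour is $\sum_{r,k,i} q_r^i \bvg_{r,k}^i \approx IP\,\nabla f(\bvx_{r_0})$, since $\sum_i q_r^i=1$, $\bar\mG_{r_0}\approx\nabla f(\bvx_{r_0})$, and $\nabla f_i(\vx^i_{r,k})\approx\bar\mG^i_{r_0}$. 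Crucially I would \emph{also} take expectation over the participation randomness, using $\sum_{r=r_0}^{r_0+P-1} q^i_r = P\bar q^i_{r_0}$ together with $\mathbb{E}_{r_0}[\bar q^i_{r_0}]=1/N$ so that the window-averaged weights resolve correctly; this is exactly the step that prior analyses conditioning on $\gQ$ cannot perform, and it is what removes the $\kappa^2$ dependence. The deviations generated by the three approximations are precisely the quantities $\|\vx_{r,k}^i-\bvx_{r,k}\|^2$, $\|\bvx_{r,k}-\bvx_{r_0}\|^2$, and $\|\nabla f_i(\bvx_{r_0})-\bar\mG_{r_0}^i\|^2$, namely the drift terms $D_{r,k},M_{r,k}$ and the term $S_{r_0}^i$ (via $\mathbb{E}[\|\nabla f_i(\bvx_{r_0})-\bar\mG_{r_0}^i\|^2]\le L^2\mathbb{E}[S^i_{r_0}]$). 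Their window-aggregate is controlled by \Eqref{eq:dm_ub_sum} of Lemma~\ref{lem:dm_ub}, yielding a raw descent
\[
\mathbb{E}[f(\bvx_{r_0+P})] \leq \mathbb{E}[f(\bvx_{r_0})] - \gamma\eta IP\,\mathbb{E}[\|\nabla f(\bvx_{r_0})\|^2] + (\text{noise terms}) + c_0\,\tfrac1N\textstyle\sum_i\mathbb{E}[S^i_{r_0}],
\]
where $c_0 = \mathcal{O}(\gamma\eta L^2 I^2 P^2)$ and the residual staleness term must still be absorbed.

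Next I would set $\Phi(r_0) = c\,\tfrac1N\sum_i\mathbb{E}[S^i_{r_0}]$ for a constant $c$ to be chosen, and add $c$ times the recursion of Lemma~\ref{lem:staleness_rec_ub}. Because that lemma contracts the staleness error by a factor $(1-\tfrac12 p_{\text{sample}})$, choosing $c \asymp c_0/p_{\text{sample}}$ makes the net coefficient of $\tfrac1N\sum_i\mathbb{E}[S^i_{r_0}]$ nonpositive, so the augmented quantity telescopes. At the same time I must verify that the extra $\mathbb{E}[\|\nabla f(\bvx_{r_0})\|^2]$ and $\sigma^2$ terms reinjected by Lemma~\ref{lem:staleness_rec_ub} (scaled by $c$) are respectively dominated by the $-\gamma\eta IP$ descent coefficient and absorbed into the noise; this is where the step-size conditions $\gamma\eta\le p_{\text{sample}}/(60LIP)$ and $\eta\le\sqrt{p_{\text{sample}}}/(60LIP)$ are used to keep all cross terms small, leaving an effective descent coefficient equal to a fixed fraction of $\gamma\eta IP$ (which produces the factor $5$). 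The result is the clean one-window inequality $\mathbb{E}[\tilde f_{r_0+P}] \leq \mathbb{E}[\tilde f_{r_0}] - \tfrac15\gamma\eta IP\,\mathbb{E}[\|\nabla f(\bvx_{r_0})\|^2] + \gamma\eta IP(\gamma\eta L\rho^2 + \eta^2 L^2 IP)\sigma^2$.

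Finally I would telescope over $r_0\in\{0,P,\dots,R-P\}$: the potential terms collapse to $\mathbb{E}[\tilde f_0]-\mathbb{E}[\tilde f_R]\le\Delta$ (using Assumption~\ref{ass:obj}(a), the initialization giving $\Phi(0)=0$, and $\Phi\ge0$, $f\ge\min f$), the $\sigma^2$ terms accumulate linearly in the number $R/P$ of windows, and dividing by the effective coefficient and multiplying by $P/R$ isolates $\tfrac{P}{R}\sum_{r_0}\mathbb{E}[\|\nabla f(\bvx_{r_0})\|^2]$, giving the stated bound with its constants. The main obstacle is the second step: faithfully carrying both expectations through the inner product so that the window-averaged weights invoke $\mathbb{E}_{r_0}[\bar q^i_{r_0}]=1/N$ while every deviation is re-expressed through the already-bounded quantities $D_{r,k},M_{r,k},S^i_{r_0}$; the potential-function bookkeeping in the third step is then routine once the contraction factor of Lemma~\ref{lem:staleness_rec_ub} is in hand.
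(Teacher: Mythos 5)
Your proposal is correct and follows essentially the same route as the paper's proof: a smoothness-based one-window descent, the quadratic term handled by Lemma~\ref{lem:quadratic_err}, the linear term resolved by taking expectation over both sources of randomness so that $\mathbb{E}_{r_0}[\bar{q}_{r_0}^i]=\frac{1}{N}$ cancels the participation bias, drift aggregates controlled by \Eqref{eq:dm_ub_sum} of Lemma~\ref{lem:dm_ub}, and a potential $f(\bvx_{r_0})+c\,\frac{1}{N}\sum_i\mathbb{E}[S_{r_0}^i]$ whose coefficient is chosen against the $\left(1-\tfrac{1}{2}p_{\text{sample}}\right)$ contraction of Lemma~\ref{lem:staleness_rec_ub} before telescoping with $S_0^i=0$. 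The only cosmetic difference is in the inner-product decomposition: the paper first cancels the control-variate correction term $\gamma\eta IP\sum_i\left(\bar{q}_{r_0}^i-\frac{1}{N}\right)\mG_{r_0}^i$ outright via unbiasedness and then uses a three-term gradient splitting (so $S_{r_0}^i$ enters that bound only through Lemma~\ref{lem:dm_ub}), whereas you keep $\bar{\mG}_{r_0}^i$ in the decomposition and generate $S_{r_0}^i$ directly; both variants are absorbed identically by the potential argument.
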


\begin{proof}
Recall that
\begin{equation} \label{eq:window_update}
    \bvx_{r_0+P} - \bvx_{r_0} = -\gamma \eta \sum_{r=r_0}^{r_0+P-1} \sum_{i=1}^N q_r^i \sum_{k=0}^{I-1} \nabla F_i(\vx_{r,k}^i; \xi_{r,k}^i) + \gamma \eta I P \sum_{i=1}^N \left( \bar{q}_{r_0}^i - \frac{1}{N} \right) \mG_{r_0}^i
\end{equation}
Using the quadratic upper bound for smooth functions and taking total expectation,
\begin{align}
    \mathbb{E}[f(\bvx_{r_0+P}) - f(\bvx_{r_0})] &\leq - \gamma \eta \mathbb{E} \left[ \left\langle \nabla f(\bvx_{r_0}), \sum_{r=r_0}^{r_0+P-1} \sum_{i=1}^N q_r^i \sum_{k=0}^{I-1} \nabla F_i(\vx_{r,k}^i; \xi_{r,k}^i) \right\rangle \right] \nonumber \\
    &\quad + \gamma \eta IP \mathbb{E} \left[ \left\langle \nabla f(\bvx_{r_0}), \sum_{i=1}^N \left( \bar{q}_{r_0}^i - \frac{1}{N} \right) \mG_{r_0}^i \right\rangle \right] \nonumber \\
    &\quad + \frac{L}{2} \mathbb{E} \left[ \|\bvx_{r_0+P} - \bvx_{r_0}\|^2 \right] \nonumber \\
    &\Eqmark{i}{\leq} - \gamma \eta \mathbb{E} \left[ \left\langle \nabla f(\bvx_{r_0}), \mathbb{E}_{r_0} \left[ \sum_{r=r_0}^{r_0+P-1} \sum_{i=1}^N q_r^i \sum_{k=0}^{I-1} \nabla F_i(\vx_{r,k}^i; \xi_{r,k}^i) \right] \right\rangle \right] \nonumber \\
    &\quad + \gamma \eta IP \mathbb{E} \left[ \left\langle \nabla f(\bvx_{r_0}), \sum_{i=1}^N \left( \mathbb{E}_{r_0} \left[ \bar{q}_{r_0}^i \right] - \frac{1}{N} \right) \mG_{r_0}^i \right\rangle \right] \nonumber \\
    &\quad + \frac{L}{2} \mathbb{E} \left[ \|\bvx_{r_0+P} - \bvx_{r_0}\|^2 \right] \nonumber \\
    &\Eqmark{ii}{\leq} - \gamma \eta \mathbb{E} \left[ \left\langle \nabla f(\bvx_{r_0}), \mathbb{E}_{r_0} \left[ \sum_{r=r_0}^{r_0+P-1} \sum_{i=1}^N q_r^i \sum_{k=0}^{I-1} \nabla f_i(\vx_{r,k}^i) \right] \right\rangle \right] \nonumber \\
    &\quad + \frac{L}{2} \mathbb{E} \left[ \|\bvx_{r_0+P} - \bvx_{r_0}\|^2 \right], \label{eq:descent_inter_1}
\end{align}
where $(i)$ uses the law of total expectation $\mathbb{E}[\cdot] = \mathbb{E}[
\mathbb{E}_{r_0}[\cdot] ]$, and $(ii)$ uses the condition
$\mathbb{E}_{r_0}[\bar{q}_{r_0}^i] = \frac{1}{N}$ together with
\begin{align*}
    \mathbb{E}_{r_0} \left[ \sum_{r=r_0}^{r_0+P-1} \sum_{i=1}^N q_r^i \sum_{k=0}^{I-1} \nabla F_i(\vx_{r,k}^i; \xi_{r,k}^i) \right] &= \sum_{r=r_0}^{r_0+P-1} \sum_{i=1}^N \sum_{k=0}^{I-1} \mathbb{E}_{r_0} \left[ q_r^i \nabla F_i(\vx_{r,k}^i; \xi_{r,k}^i) \right] \\
    &= \sum_{r=r_0}^{r_0+P-1} \sum_{i=1}^N \sum_{k=0}^{I-1} \mathbb{E}_{r_0} \left[ \mathbb{E} \left[ q_r^i \nabla F_i(\vx_{r,k}^i; \xi_{r,k}^i) \middle| \vx_{r,k}^i \right] \right] \\
    &= \sum_{r=r_0}^{r_0+P-1} \sum_{i=1}^N \sum_{k=0}^{I-1} \mathbb{E}_{r_0} \left[ q_r^i \mathbb{E} \left[ \nabla F_i(\vx_{r,k}^i; \xi_{r,k}^i) \middle| \vx_{r,k}^i \right] \right] \\
    &= \sum_{r=r_0}^{r_0+P-1} \sum_{i=1}^N \sum_{k=0}^{I-1} \mathbb{E}_{r_0} \left[ q_r^i \nabla f_i(\vx_{r,k}^i) \right] \\
    &= \mathbb{E}_{r_0} \left[ \sum_{r=r_0}^{r_0+P-1} \sum_{i=1}^N q_r^i \sum_{k=0}^{I-1} \nabla f_i(\vx_{r,k}^i) \right].
\end{align*}

Focusing on the inner product term of \Eqref{eq:descent_inter_1}:
\begin{align}
    &- \gamma \eta \mathbb{E} \left[ \left\langle \nabla f(\bvx_{r_0}), \mathbb{E}_{r_0} \left[ \sum_{r=r_0}^{r_0+P-1} \sum_{i=1}^N q_r^i \sum_{k=0}^{I-1} \nabla f_i(\vx_{r,k}^i) \right] \right\rangle \right] \nonumber \\
    &\quad = -\frac{\gamma \eta}{IP} \mathbb{E} \left[ \left\langle IP \nabla f(\bvx_{r_0}), \mathbb{E}_{r_0} \left[ \sum_{r=r_0}^{r_0+P-1} \sum_{i=1}^N q_r^i \sum_{k=0}^{I-1} \nabla f_i(\vx_{r,k}^i) \right] \right\rangle \right] \nonumber \\
    &\quad \leq -\frac{\gamma \eta IP}{2} \mathbb{E} \left[ \left\| \nabla f(\bvx_{r_0}) \right\|^2 \right] + \frac{\gamma \eta}{2IP} \mathbb{E} \left[ \left\| \mathbb{E}_{r_0} \left[ \sum_{r=r_0}^{r_0+P-1} \sum_{i=1}^N q_r^i \sum_{k=0}^{I-1} \nabla f_i(\vx_{r,k}^i) \right] - IP \nabla f(\bvx_{r_0}) \right\|^2 \right], \label{eq:descent_linear_inter}
\end{align}
where we used $-\langle a, b \rangle = \frac{1}{2} \|b-a\|^2 - \frac{1}{2} \|a\|^2 -
\frac{1}{2} \|b\|^2 \leq \frac{1}{2} \|b-a\|^2 - \frac{1}{2} \|a\|^2$. Also
\begin{align*}
    &\mathbb{E} \left[ \left\| \mathbb{E}_{r_0} \left[ \sum_{r=r_0}^{r_0+P-1} \sum_{i=1}^N q_r^i \sum_{k=0}^{I-1} \nabla f_i(\vx_{r,k}^i) \right] - IP \nabla f(\bvx_{r_0}) \right\|^2 \right] \\
    &\quad = \mathbb{E} \left[ \left\| \mathbb{E}_{r_0} \left[ \sum_{r=r_0}^{r_0+P-1} \sum_{i=1}^N q_r^i \sum_{k=0}^{I-1} \left( \nabla f_i(\vx_{r,k}^i) - \nabla f(\bvx_{r_0}) \right) \right] \right\|^2 \right] \\
    &\quad \Eqmark{i}{\leq} 3 \mathbb{E} \left[ \left\| \mathbb{E}_{r_0} \left[ \sum_{r=r_0}^{r_0+P-1} \sum_{i=1}^N q_r^i \sum_{k=0}^{I-1} \left( \nabla f_i(\vx_{r,k}^i) - \nabla f_i(\bvx_{r,k}) \right) \right] \right\|^2 \right] \\
    &\quad\quad + 3 \mathbb{E} \left[ \left\| \mathbb{E}_{r_0} \left[ \sum_{r=r_0}^{r_0+P-1} \sum_{i=1}^N q_r^i \sum_{k=0}^{I-1} (\nabla f_i(\bvx_{r,k}) - \nabla f_i(\bvx_{r_0})) \right] \right\|^2 \right] \\
    &\quad\quad + 3 \mathbb{E} \left[ \left\| \mathbb{E}_{r_0} \left[ \sum_{r=r_0}^{r_0+P-1} \sum_{i=1}^N q_r^i \sum_{k=0}^{I-1} (\nabla f_i(\bvx_{r_0}) - \nabla f(\bvx_{r_0})) \right] \right\|^2 \right] \\
    &\quad \Eqmark{ii}{\leq} 3 \mathbb{E} \left[ \left\| \sum_{r=r_0}^{r_0+P-1} \sum_{i=1}^N q_r^i \sum_{k=0}^{I-1} \left( \nabla f_i(\vx_{r,k}^i) - \nabla f_i(\bvx_{r,k}) \right) \right\|^2 \right] \\
    &\quad\quad + 3 \mathbb{E} \left[ \left\| \sum_{r=r_0}^{r_0+P-1} \sum_{i=1}^N q_r^i \sum_{k=0}^{I-1} (\nabla f_i(\bvx_{r,k}) - \nabla f_i(\bvx_{r_0})) \right\|^2 \right] \\
    &\quad\quad + 3 \mathbb{E} \left[ \left\| I \sum_{i=1}^N \mathbb{E}_{r_0} \left[ \sum_{r=r_0}^{r_0+P-1} q_r^i \right] (\nabla f_i(\bvx_{r_0}) - \nabla f(\bvx_{r_0})) \right\|^2 \right] \\
    &\quad \Eqmark{iii}{\leq} 3IP \sum_{r=r_0}^{r_0+P-1} \sum_{i=1}^N \sum_{k=0}^{I-1} \mathbb{E} \left[ q_r^i \left\|  \nabla f_i(\vx_{r,k}^i) - \nabla f_i(\bvx_{r,k}) \right\|^2 \right] \\
    &\quad\quad + 3IP \sum_{r=r_0}^{r_0+P-1} \sum_{i=1}^N \sum_{k=0}^{I-1} \mathbb{E} \left[ q_r^i \left\| \nabla f_i(\bvx_{r,k}) - \nabla f_i(\bvx_{r_0}) \right\|^2 \right] \\
    &\quad\quad + 3I^2 P^2 \mathbb{E} \left[ \left\| \sum_{i=1}^N \mathbb{E}_{r_0} \left[ \frac{1}{P} \sum_{r=r_0}^{r_0+P-1} q_r^i \right] (\nabla f_i(\bvx_{r_0}) - \nabla f(\bvx_{r_0})) \right\|^2 \right] \\
    &\quad \Eqmark{iv}{\leq} 3L^2 IP \sum_{r=r_0}^{r_0+P-1} \sum_{i=1}^N \sum_{k=0}^{I-1} \mathbb{E} \left[ q_r^i \left\| \vx_{r,k}^i - \bvx_{r,k} \right\|^2 \right] + 3L^2 IP \sum_{r=r_0}^{r_0+P-1} \sum_{i=1}^N \sum_{k=0}^{I-1} \mathbb{E} \left[ q_r^i \left\| \bvx_{r,k} - \bvx_{r_0} \right\|^2 \right] \\
    &\quad = 3L^2 IP \sum_{r=r_0}^{r_0+P-1} \sum_{k=0}^{I-1} \mathbb{E} \left[ \sum_{i=1}^N q_r^i \mathbb{E} \left[ \left\| \vx_{r,k}^i - \bvx_{r,k} \right\|^2 \middle| \gQ \right] \right] \\
    &\quad\quad + 3L^2 IP \sum_{r=r_0}^{r_0+P-1} \sum_{k=0}^{I-1} \mathbb{E} \left[ \sum_{i=1}^N q_r^i \mathbb{E} \left[ \left\| \bvx_{r,k} - \bvx_{r_0} \right\|^2 \middle| \gQ \right] \right] \\
    &\quad \leq 3L^2 IP \mathbb{E} \left[ \sum_{r=r_0}^{r_0+P-1} \sum_{k=0}^{I-1} (D_{r,k} + M_{r,k}) \right],
\end{align*}
where $(i)$ uses the decomposition
\begin{equation*}
    \nabla f_i(\vx_{r,k}^i) - \nabla f(\bvx_{r_0}) = (\nabla f_i(\vx_{r,k}^i) - \nabla f_i(\bvx_{r,k})) + (\nabla f_i(\bvx_{r,k}) - \nabla f_i(\bvx_{r_0})) + (\nabla f_i(\bvx_{r_0}) - \nabla f(\bvx_{r_0})),
\end{equation*}
$(ii)$ and $(iii)$ use Jensen's inequality, and $(iv)$ uses smoothness of each $f_i$
along with the condition $\mathbb{E}_{r_0} \left[\bar{q}_{r_0}^i\right] = \frac{1}{N}$.
Plugging into \Eqref{eq:descent_linear_inter} yields
\begin{align*}
    &- \gamma \eta \mathbb{E} \left[ \left\langle \nabla f(\bvx_{r_0}), \mathbb{E}_{r_0} \left[ \sum_{r=r_0}^{r_0+P-1} \sum_{i=1}^N q_r^i \sum_{k=0}^{I-1} \nabla f_i(\vx_{r,k}^i) \right] \right\rangle \right] \\
    &\quad \leq -\frac{\gamma \eta IP}{2} \mathbb{E} \left[ \left\| \nabla f(\bvx_{r_0}) \right\|^2 \right] + \frac{3}{2} \gamma \eta L^2 \mathbb{E} \left[ \sum_{r=r_0}^{r_0+P-1} \sum_{k=0}^{I-1} (D_{r,k} + M_{r,k}) \right] \\
    &\quad \Eqmark{i}{\leq} -\frac{\gamma \eta IP}{2} \mathbb{E} \left[ \left\| \nabla f(\bvx_{r_0}) \right\|^2 \right] + \frac{3}{2} \gamma \eta L^2 \bigg( 108 \eta^2 I^3 P^3 \mathbb{E} \left[ \left\| \nabla f(\bvx_{r_0}) \right\|^2 \right] \\
    &\quad\quad + \left( 75 \eta^2 I^2 P + 65 \eta^2 I^2 P^2 \rho^2 \right) \sigma^2 + 254 \eta^2 L^2 I^3 P^3 \frac{1}{N} \sum_{i=1}^N \mathbb{E} \left[ S_{r_0}^i \right] \bigg) \\
    &\quad \leq \gamma \eta IP \left( -\frac{1}{2} + 162 \eta^2 L^2 I^2 P^2 \right) \mathbb{E} \left[ \left\| \nabla f(\bvx_{r_0}) \right\|^2 \right] + \gamma \eta IP \left( 113 \eta^2 L^2 I + 98 \eta^2 L^2 IP \rho^2 \right) \sigma^2 \\
    &\quad\quad + 381 \gamma \eta^3 L^4 I^3 P^3 \frac{1}{N} \sum_{i=1}^N \mathbb{E} \left[ S_{r_0}^i \right],
\end{align*}
where $(i)$ uses \Eqref{eq:dm_ub_sum} from Lemma \ref{lem:dm_ub}.

Plugging into \Eqref{eq:descent_inter_1} and applying Lemma \ref{lem:quadratic_err}
yields
\begin{align}
    &\mathbb{E}[f(\bvx_{r_0+P}) - f(\bvx_{r_0})] \nonumber \\
    &\quad \leq \gamma \eta IP \left( -\frac{1}{2} + 162 \eta^2 L^2 I^2 P^2 \right) \mathbb{E} \left[ \left\| \nabla f(\bvx_{r_0}) \right\|^2 \right] + \gamma \eta IP \left( 113 \eta^2 L^2 I + 98 \eta^2 L^2 I P \rho^2 \right) \sigma^2 \nonumber \\
    &\quad\quad + 381 \gamma \eta^3 L^4 I^3 P^3 \frac{1}{N} \sum_{i=1}^N \mathbb{E} \left[ S_{r_0}^i \right] + \frac{1}{2} L \mathbb{E} \left[ \|\bvx_{r_0+P} - \bvx_{r_0}\|^2 \right] \nonumber \\
    &\quad \leq \gamma \eta IP \left( -\frac{1}{2} + 162 \eta^2 L^2 I^2 P^2 \right) \mathbb{E} \left[ \left\| \nabla f(\bvx_{r_0}) \right\|^2 \right] + \gamma \eta IP \left( 113 \eta^2 L^2 I + 98 \eta^2 L^2 I P \rho^2 \right) \sigma^2 \nonumber \\
    &\quad\quad + 381 \gamma \eta^3 L^4 I^3 P^3 \frac{1}{N} \sum_{i=1}^N \mathbb{E} \left[ S_{r_0}^i \right] + \frac{1}{2} L \bigg( 6 \gamma^2 \eta^2 I^2 P^2 \mathbb{E} \left[ \|\nabla f(\bvx_{r_0})\|^2 \right] \\
    &\quad\quad + \gamma \eta IP \left( 5 \gamma \eta \rho^2 + 7 \eta^2 L I \right) \sigma^2 + 11 \gamma^2 \eta^2 L^2 I^2 P^2 \frac{1}{N} \sum_{i=1}^N \mathbb{E} \left[ S_{r_0}^i \right] \bigg) \nonumber \\
    &\quad \leq \gamma \eta IP \left( -\frac{1}{2} + 162 \eta^2 L^2 I^2 P^2 + 3 \gamma \eta LIP \right) \mathbb{E} \left[ \left\| \nabla f(\bvx_{r_0}) \right\|^2 \right] \nonumber \\
    &\quad\quad + \gamma \eta IP \left( \frac{5}{2} \gamma \eta L \rho^2 + 117 \eta^2 L^2 I + 98 \eta^2 L^2 I P \rho^2 \right) \sigma^2 \nonumber \\
    &\quad\quad + \left( 381 \gamma \eta^3 L^4 I^3 P^3 + 6 \gamma^2 \eta^2 L^3 I^2 P^2 \right) \frac{1}{N} \sum_{i=1}^N \mathbb{E} \left[ S_{r_0}^i \right] \nonumber \\
    &\quad \Eqmark{i}{\leq} \gamma \eta IP \left( -\frac{1}{2} + 162 \eta^2 L^2 I^2 P^2 + 3 \gamma \eta LIP \right) \mathbb{E} \left[ \left\| \nabla f(\bvx_{r_0}) \right\|^2 \right] \nonumber \\
    &\quad\quad + \gamma \eta IP \left( \frac{5}{2} \gamma \eta L \rho^2 + 117 \eta^2 L^2 I + 98 \eta^2 L^2 I P \rho^2 \right) \sigma^2 + p_{\text{sample}} \gamma \eta L^2 IP \frac{1}{N} \sum_{i=1}^N \mathbb{E} \left[ S_{r_0}^i \right], \label{eq:descent_inter_3}
\end{align}
where $(i)$ uses $\eta \leq \frac{\sqrt{p_{\text{sample}}}}{60 LIP}$ and $\gamma \eta
\leq \frac{p_{\text{sample}}}{60 LIP}$.

Using Lemma \ref{lem:staleness_rec_ub},
\begin{align*}
    &2 \gamma \eta L^2 IP \frac{1}{N} \sum_{i=1}^N \mathbb{E} \left[ S_{r_0+P}^i \right] \\
    &\quad \leq 2 \gamma \eta L^2 IP \bigg( \left( 324 \eta^2 I^2 P^2 + \frac{20}{p_{\text{sample}}} \gamma^2 \eta^2 I^2 P^2 \right) \mathbb{E} \left[ \left\| \nabla f(\bvx_{r_0}) \right\|^2 \right] \\
    &\quad + \left( 226 \eta^2 I + 195 \eta^2 IP \rho^2 + \frac{17}{p_{\text{sample}}} \gamma^2 \eta^2 IP \rho^2 \right) \sigma^2 + \left( 1 - \frac{1}{2} p_{\text{sample}} \right) \frac{1}{N} \sum_{j=1}^N \mathbb{E} \left[ S_{r_0}^j \right] \bigg) \\
    &\quad \leq \gamma \eta IP \left( 648 \eta^2 L^2 I^2 P^2 + \frac{40}{p_{\text{sample}}} \gamma^2 \eta^2 L^2 I^2 P^2 \right) \mathbb{E} \left[ \left\| \nabla f(\bvx_{r_0}) \right\|^2 \right] \\
    &\quad + \gamma \eta IP \left( 552 \eta^2 L^2 I + 390 \eta^2 L^2 IP \rho^2 + \frac{34}{p_{\text{sample}}} \gamma^2 \eta^2 L^2 IP \rho^2 \right) \sigma^2 \\
    &\quad + 2 \gamma \eta L^2 IP \frac{1}{N} \sum_{j=1}^N \mathbb{E} \left[ S_{r_0}^j \right] - p_{\text{sample}} \gamma \eta L^2 IP \frac{1}{N} \sum_{j=1}^N \mathbb{E} \left[ S_{r_0}^j \right] \\
    &\quad \leq \gamma \eta IP \left( 648 \eta^2 L^2 I^2 P^2 + \frac{40}{p_{\text{sample}}} \gamma^2 \eta^2 L^2 I^2 P^2 \right) \mathbb{E} \left[ \left\| \nabla f(\bvx_{r_0}) \right\|^2 \right] \\
    &\quad + \gamma \eta IP \left( 552 \eta^2 L^2 I + 390 \eta^2 L^2 IP \rho^2 + \gamma \eta L \rho^2 \right) \sigma^2 \\
    &\quad + 2 \gamma \eta L^2 IP \frac{1}{N} \sum_{j=1}^N \mathbb{E} \left[ S_{r_0}^j \right] - p_{\text{sample}} \gamma \eta L^2 IP \frac{1}{N} \sum_{j=1}^N \mathbb{E} \left[ S_{r_0}^j \right],
\end{align*}
where the last inequality uses $\gamma \eta \leq \frac{p_{\text{sample}}}{60 LIP}$.
Adding this to \Eqref{eq:descent_inter_3} and rearranging,
\begin{align*}
    &\mathbb{E}[f(\bvx_{r_0+P})] + 2 \gamma \eta L^2 IP \frac{1}{N} \sum_{i=1}^N \mathbb{E} \left[ S_{r_0+P}^i \right] \\
    &\quad \leq \left( \mathbb{E}[f(\bvx_{r_0})] + 2 \gamma \eta L^2 IP \frac{1}{N} \sum_{i=1}^N \mathbb{E} \left[ S_{r_0}^i \right] \right) \\
    &\quad\quad \gamma \eta IP \left( -\frac{1}{2} + 810 \eta^2 L^2 I^2 P^2 + 3 \gamma \eta LIP + \frac{40}{p_{\text{sample}}} \eta^2 L^2 I^2 P^2 \right) \mathbb{E} \left[ \left\| \nabla f(\bvx_{r_0}) \right\|^2 \right] \\
    &\quad\quad + \gamma \eta IP \left( \frac{7}{2} \gamma \eta L \rho^2 + 669 \eta^2 L^2 I + 488 \eta^2 L^2 I P \rho^2 \right) \sigma^2 \\
    &\quad \leq \left( \mathbb{E}[f(\bvx_{r_0})] + 2 \gamma \eta L^2 IP \frac{1}{N} \sum_{i=1}^N \mathbb{E} \left[ S_{r_0}^i \right] \right) \\
    &\quad\quad -\frac{1}{5} \gamma \eta IP \mathbb{E} \left[ \left\| \nabla f(\bvx_{r_0}) \right\|^2 \right] + \gamma \eta IP \left( 4 \gamma \eta L \rho^2 + 669 \eta^2 L^2 I + 488 \eta^2 L^2 I P \rho^2 \right) \sigma^2,
\end{align*}
where we used $\gamma \eta \leq \frac{1}{60 LIP}$ and $\eta \leq \frac{1}{60 LIP}$.
Finally, we can average over $r_0 \in \{0, P, 2P, \ldots, R - P\}$ and rearrange to
obtain
\begin{align*}
    \frac{P}{R} \sum_{r_0 \in \{0, P, \ldots, R-P\}} \mathbb{E}[\|\nabla f(\bvx_{r_0})\|^2] &\leq 5 \frac{\mathbb{E}[f(\bvx_0) - f(\bvx_R)]}{\gamma \eta I R} + 10 \frac{L^2 P}{R} \frac{1}{N} \sum_{i=1}^N \mathbb{E} \left[ S_0^i - S_R^i \right] \\
    &\quad + \left( 20 \gamma \eta L \rho^2 + 3345 \eta^2 L^2 I + 2440 \eta^2 L^2 IP \rho^2 \right) \sigma^2 \\
    &\Eqmark{i}{\leq} \frac{5 \Delta}{\gamma \eta I R} + \left( 20 \gamma \eta L \rho^2 + 3345 \eta^2 L^2 I + 2440 \eta^2 L^2 IP \rho^2 \right) \sigma^2 \\
    &\Eqmark{ii}{\leq} \frac{5 \Delta}{\gamma \eta I R} + \left( 20 \gamma \eta L \rho^2 + 5785 \eta^2 L^2 IP \right) \sigma^2,
\end{align*}
where $(i)$ uses $\frac{1}{N} \sum_{i=1}^N \mathbb{E}[S_R^i] \geq 0$ and
$\frac{1}{N} \sum_{i=1}^N \mathbb{E}[S_0^i] = 0$ by initialization, and $(ii)$ uses $P \geq 1$ and $\rho \leq 1$.
\end{proof}

\begin{corollary}[Corollary \ref{cor:main_complexity} restated] \label{cor:complexity}
Let $\epsilon > 0$ and $I \geq 1$. Suppose Assumptions \ref{ass:obj} and
\ref{ass:sampling} hold, and that $\mathbb{E}[w_{r_0}^i | \gQ_{:r_0}] \leq
\frac{P^2}{N}$, and $\mathbb{E} \left[ \sum_{i=1}^N \left( v_{r_0}^i \right)^2
\Lambda_{r_0}^i \right] \leq \rho^2$. If
\begin{align*}
    R &\geq \frac{\Delta L \rho^2 \sigma^2}{I \epsilon^4} + \frac{300 \Delta LP}{p_{\text{sample}} \epsilon^2} \\
    \eta &\leq \frac{1}{264} \min \left\{ \frac{\Delta^{1/4} \rho^{1/2}}{L^{3/4} I^{3/4} R^{1/4} P^{1/2} \sigma^{1/2}}, \frac{\rho \sqrt{p_{\text{sample}}}}{LIP} \right\} \\
    \gamma &= \frac{1}{\eta} \min \left\{ \frac{\sqrt{\Delta}}{60 \sqrt{LIR} \rho \sigma}, \frac{p_{\text{sample}}}{60 LIP} \right\},
\end{align*}
then Algorithm 1 satisfies
\begin{equation*}
    \frac{P}{R} \sum_{r_0 \in \{0, P, \ldots, R-P\}} \mathbb{E}[\|\nabla f(\bvx_{r_0})\|^2] \leq 302 \epsilon^2.
\end{equation*}
\end{corollary}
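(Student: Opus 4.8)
The plan is to start from the guarantee of Theorem~\ref{thm:convergence}, which under the stated step-size constraints reads
$$\frac{P}{R}\sum_{r_0}\mathbb{E}[\|\nabla f(\bvx_{r_0})\|^2] \le \frac{5\Delta}{\gamma\eta I R} + \left(20\gamma\eta L\rho^2 + 5785\eta^2 L^2 IP\right)\sigma^2,$$
and to control its three summands separately: an \emph{optimization} term scaling as $(\gamma\eta IR)^{-1}$, a \emph{slow noise} term scaling as $\gamma\eta$, and a \emph{fast noise} term scaling as $\eta^2$. Before doing so I would verify that the prescribed $\gamma,\eta$ satisfy the hypotheses of Theorem~\ref{thm:convergence}. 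The condition $\gamma\eta \le p_{\text{sample}}/(60LIP)$ is immediate, since $\gamma\eta$ is defined as a minimum whose second argument is exactly this bound; the condition $\eta \le \sqrt{p_{\text{sample}}}/(60LIP)$ follows because the second argument of the minimum defining $\eta$ is $\rho\sqrt{p_{\text{sample}}}/(264LIP)$, which is smaller using $\rho \le 1$ and $264 > 60$.

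For the optimization term $5\Delta/(\gamma\eta IR)$ I would split into the two regimes selected by the minimum defining $\gamma\eta$. When the first argument is active, $\gamma\eta = \sqrt{\Delta}/(60\sqrt{LIR}\,\rho\sigma)$, and substitution gives $300\sqrt{\Delta L\rho^2\sigma^2/(IR)}$; the requirement $R \ge \Delta L\rho^2\sigma^2/(I\epsilon^4)$ forces $\sqrt{\Delta L\rho^2\sigma^2/(IR)} \le \epsilon^2$, so this term is at most $300\epsilon^2$. When the second argument is active, $\gamma\eta = p_{\text{sample}}/(60LIP)$ and the term equals $300\Delta LP/(p_{\text{sample}}R)$, which the requirement $R \ge 300\Delta LP/(p_{\text{sample}}\epsilon^2)$ bounds by $\epsilon^2$. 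Either way the optimization term is at most $300\epsilon^2$.

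For the slow noise term, since $\gamma\eta$ is a minimum it always satisfies $\gamma\eta \le \sqrt{\Delta}/(60\sqrt{LIR}\,\rho\sigma)$, whence $20\gamma\eta L\rho^2\sigma^2 \le \tfrac13\sqrt{\Delta L\rho^2\sigma^2/(IR)} \le \tfrac13\epsilon^2$ by the same lower bound on $R$. For the fast noise term I would insert the first argument of the minimum defining $\eta$, namely $\eta \le \Delta^{1/4}\rho^{1/2}/(264 L^{3/4}I^{3/4}R^{1/4}P^{1/2}\sigma^{1/2})$; squaring and multiplying by $L^2 IP\sigma^2$ collapses the exponents so that $5785\eta^2 L^2 IP\sigma^2 \le (5785/264^2)\sqrt{\Delta L\rho^2\sigma^2/(IR)} \le \epsilon^2$, again using $R \ge \Delta L\rho^2\sigma^2/(I\epsilon^4)$ together with $5785/264^2 < 1$. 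Summing the three bounds gives $300\epsilon^2 + \tfrac13\epsilon^2 + \epsilon^2 \le 302\epsilon^2$, which is the claim. The main obstacle I anticipate is bookkeeping rather than conceptual: carrying the fractional powers of $\Delta,L,I,R,P,\sigma$ through the fast-noise substitution so that they cancel exactly into a multiple of $\sqrt{\Delta L\rho^2\sigma^2/(IR)}$, and tracking the numerical constants ($300$, $\tfrac13$, $5785/264^2$) so their sum stays below the advertised $302$ while every step-size constraint of Theorem~\ref{thm:convergence} holds simultaneously.
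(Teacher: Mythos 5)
Your proposal is correct and follows essentially the same route as the paper: verify the step-size hypotheses of Theorem \ref{thm:convergence}, substitute the prescribed $\gamma\eta$ and $\eta$ into its bound, and use the lower bound on $R$ to convert each resulting term into a multiple of $\epsilon^2$. The only differences are trivial bookkeeping choices — you case-split on which arm of the minimum defining $\gamma\eta$ is active and bound the $\eta^2$ noise term directly against $\sqrt{\Delta L \rho^2\sigma^2/(IR)}$, whereas the paper bounds $1/(\gamma\eta)$ by the sum of the two reciprocals and absorbs the $\eta^2$ term into $5\gamma\eta L\rho^2$ — and both yield constants summing below $302$.
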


\begin{proof}
First, $\eta \leq \frac{\rho \sqrt{p_{\text{sample}}}}{264 LIP} \leq
\frac{\sqrt{p_{\text{sample}}}}{60 LIP}$ and $\gamma \eta \leq
\frac{p_{\text{sample}}}{60 LIP}$ together with Assumptions \ref{ass:obj} and \ref{ass:sampling} imply that the conditions of Theorem \ref{thm:convergence} are satisfied.
Therefore
\begin{align*}
    \frac{P}{R} \sum_{r_0 \in \{0, P, \ldots, R-P\}} \mathbb{E}[\|\nabla f(\bvx_{r_0})\|^2] &\leq \frac{5 \Delta}{\gamma \eta I R} + \left( 20 \gamma \eta L \rho^2 + 5785 \eta^2 L^2 IP \right) \sigma^2.
\end{align*}
From our choice of $\eta$,
\begin{align*}
    5785 \eta^2 L^2 IP &\leq \frac{5785}{264^2} L^2 IP \min \left\{ \frac{\Delta^{1/2} \rho}{L^{3/2} I^{3/2} R^{1/2} P \sigma}, \frac{\rho^2 p_{\text{sample}}}{L^2 I^2 P^2} \right\} \\
    &\leq 5 L \rho^2 \min \left\{ \frac{\Delta^{1/2}}{60 L^{1/2} I^{1/2} R^{1/2} \rho \sigma}, \frac{p_{\text{sample}}}{60 LIP} \right\} \\
    &= 5 \gamma \eta L \rho^2.
\end{align*}
Therefore
\begin{align*}
    \frac{P}{R} \sum_{r_0 \in \{0, P, \ldots, R-P\}} \mathbb{E}[\|\nabla f(\bvx_{r_0})\|^2] &\leq \frac{5 \Delta}{\gamma \eta I R} + 25 \gamma \eta L \rho^2 \sigma^2 \\
    &\leq \frac{5 \Delta}{IR} \left( \frac{60 \sqrt{LIR} \rho \sigma}{\sqrt{\Delta}} + \frac{60 LIP}{p_{\text{sample}}} \right) + 25 L \rho^2 \sigma^2 \frac{\sqrt{\Delta}}{60 \sqrt{LIR} \rho \sigma} \\
    &\leq \frac{301 \sqrt{\Delta L} \rho \sigma}{\sqrt{IR}} + \frac{300 \Delta LP}{p_{\text{sample}} R},
\end{align*}
where we used our choice of $\gamma \eta$. Finally, from our choice of $R$,
\begin{align*}
    \frac{P}{R} \sum_{r_0 \in \{0, P, \ldots, R-P\}} \mathbb{E}[\|\nabla f(\bvx_{r_0})\|^2] &\leq \frac{301 \sqrt{\Delta L} \rho \sigma}{\sqrt{I}} \frac{\sqrt{I} \epsilon^2}{\sqrt{\Delta L} \rho \sigma} + \frac{300 \Delta LP}{p_{\text{sample}}} \frac{p_{\text{sample}} \epsilon^2}{300 \Delta LP} \\
    &\leq 302 \epsilon^2.
\end{align*}
\end{proof}

\section{Proofs for Specific Participation Patterns} \label{app:pattern_proofs}
Here we derive the convergence rates for the specific participation patterns discussed
in Section \ref{sec:pattern_results}. In order to apply Corollary
\ref{cor:main_complexity}, the conditions in Assumption \ref{ass:sampling},
\Eqref{eq:w_cond}, and \Eqref{eq:v_cond} must be satisfied.  Since we already showed
that Assumption \ref{ass:sampling} is satisfied by regularized participation and cyclic
participation (in Section \ref{sec:participation_patterns}), it only remains to show
that \Eqref{eq:w_cond} and \Eqref{eq:v_cond} is satisfied, and plug the appropriate
values of $\rho^2$, $P$, and $p_{\text{sample}}$ into Corollary
\ref{cor:main_complexity}.

\begin{corollary}[Regularized Participation]
Under regularized participation, for any $\epsilon > 0$, there exist choices of $\eta, \gamma, I$, and $R$ such that $\mathbb{E}[\|\nabla f(\hat{\vx})\|^2] \leq O(\epsilon^2)$ and
\begin{equation*}
    R = \mathcal{O} \left( \frac{LP}{\epsilon^2} \right), \quad RI = \mathcal{O} \left( \frac{\Delta L \rho^2 \sigma^2}{\epsilon^4} \right).
\end{equation*}
\end{corollary}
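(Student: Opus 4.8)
The plan is to obtain this result as a direct specialization of Corollary \ref{cor:complexity} (the restated form of Corollary \ref{cor:main_complexity}) to the regularized participation pattern; the bulk of the work is checking that this pattern's parameters and regularity conditions feed correctly into that general bound. First I would recall from Section \ref{sec:participation_patterns} that regularized participation satisfies Assumption \ref{ass:sampling} with $p_{\text{sample}} = 1$, while $P$ and $\rho^2$ remain free parameters of the pattern; this is precisely the input Corollary \ref{cor:complexity} requires.

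The only substantive checks are the two regularity conditions \Eqref{eq:w_cond} and \Eqref{eq:v_cond}. For the $v$-condition I would use the defining property $\bar{q}_{r_0}^i = 1/N$ almost surely, which gives $v_{r_0}^i = \bar{q}_{r_0}^i - 1/N = 0$, so that $\sum_{i=1}^N (v_{r_0}^i)^2 \Lambda_{r_0}^i = 0 \leq \rho^2$ trivially. For the $w$-condition I would substitute $\bar{q}_{r_0}^j = 1/N$ into the definition of $w_{r_0}^i$: every indicator $\indicator{\bar{q}_{r_0}^j > 0}$ becomes $1$ and each factor $1/(P\bar{q}_{r_0}^j)$ becomes $N/P$, after which the sum over $j$ collapses via the normalization $\sum_{j=1}^N q_s^j = 1$ from Assumption \ref{ass:sampling}(a), leaving $w_{r_0}^i = \frac{1}{P}\sum_{s=r_0}^{r_0+P-1} q_s^i = \bar{q}_{r_0}^i = 1/N$ almost surely. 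Since $P \geq 1$, this is at most $P^2/N$, so \Eqref{eq:w_cond} holds; the conditional version $\mathbb{E}[w_{r_0}^i \mid \gQ_{:r_0}] \leq P^2/N$ needed by Corollary \ref{cor:complexity} then follows because $w_{r_0}^i$ is almost surely constant.

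With all hypotheses in place, I would invoke Corollary \ref{cor:complexity} with $p_{\text{sample}} = 1$. Since $\hat{\vx} = \bvx_{mP}$ for $m$ drawn uniformly from $\{0, \ldots, R/P - 1\}$, the averaged quantity $\frac{P}{R}\sum_{r_0} \mathbb{E}[\|\nabla f(\bvx_{r_0})\|^2]$ bounded by the corollary is exactly $\mathbb{E}[\|\nabla f(\hat{\vx})\|^2]$, so its conclusion already yields $\mathbb{E}[\|\nabla f(\hat{\vx})\|^2] \leq \mathcal{O}(\epsilon^2)$. It then remains to read off the complexities from the admissible range $R \geq \frac{\Delta L \rho^2 \sigma^2}{I\epsilon^4} + \frac{300\Delta LP}{\epsilon^2}$. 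Following the Reduced Communication discussion of Section \ref{sec:main_results} with $p_{\text{sample}} = 1$, I would set the local step count to $I = \Theta\!\left(\Delta\rho^2\sigma^2 P^{-1}\epsilon^{-2}\right)$, which drives the first term to order $LP/\epsilon^2$ and hence gives $R = \mathcal{O}(LP/\epsilon^2)$; multiplying by this $I$ produces $RI = \mathcal{O}(\Delta L\rho^2\sigma^2/\epsilon^4)$, as claimed.

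I expect no genuine obstacle here, since this is essentially an instantiation of the main complexity theorem, whose real content lives in Theorem \ref{thm:convergence} and Corollary \ref{cor:complexity}. The only place demanding care is the collapse of the nested sum defining $w_{r_0}^i$, where one must apply the normalization $\sum_{j=1}^N q_s^j = 1$ before summing over $s$; beyond that, the sole bookkeeping subtlety is tracking how the factors of $\Delta$ and $\rho^2$ distribute between $R$ and $I$ when $I$ is selected to hit both advertised rates simultaneously.
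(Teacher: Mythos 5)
Your proposal is correct and follows essentially the same route as the paper's own proof: verify \Eqref{eq:v_cond} trivially from $v_{r_0}^i = 0$, verify \Eqref{eq:w_cond} by substituting $\bar{q}_{r_0}^j = 1/N$ and collapsing the sum over $j$ via $\sum_j q_s^j = 1$ to get $w_{r_0}^i = \bar{q}_{r_0}^i = 1/N \leq P^2/N$, then apply Corollary \ref{cor:main_complexity} with $p_{\text{sample}} = 1$ and $I = \Theta(\Delta \rho^2 \sigma^2 P^{-1}\epsilon^{-2})$. Your observation that the almost-sure identity $w_{r_0}^i = 1/N$ immediately yields the conditional-expectation version required by Corollary \ref{cor:complexity} is a slightly cleaner way to state what the paper does inside its expectation computation, but it is not a different argument.
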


\begin{proof}
We first show that \Eqref{eq:w_cond} and \Eqref{eq:v_cond} are satisfied under
regularized participation. Let $r_0 \in \{0, \ldots, R-P\}$. Using the fact that
$\bar{q}_{r_0}^i = \frac{1}{N}$ almost surely:
\begin{align*}
    \mathbb{E} \left[ w_{r_0}^i \middle| \gQ_{:r_0} \right] &= \mathbb{E} \left[ \frac{1}{N} \sum_{j=1}^N \frac{\indicator{\bar{q}_{r_0}^j > 0}}{P \bar{q}_{r_0}^j} \sum_{s=r_0}^{r_0+P-1} q_s^i q_s^j \middle| \gQ_{:r_0} \right] = \frac{1}{P} \mathbb{E} \left[ \sum_{j=1}^N \sum_{s=r_0}^{r_0+P-1} q_s^i q_s^j \middle| \gQ_{:r_0} \right] \\
    &= \frac{1}{P} \mathbb{E} \left[ \sum_{s=r_0}^{r_0+P-1} q_s^i \sum_{j=1}^N q_s^j \middle| \gQ_{:r_0} \right] = \frac{1}{P} \mathbb{E} \left[ \sum_{s=r_0}^{r_0+P-1} q_s^i \middle| \gQ_{:r_0} \right] \\
    &= \mathbb{E} \left[ q_{r_0}^i \middle| \gQ_{:r_0} \right] = \frac{1}{N} \leq \frac{P^2}{N},
\end{align*}
where we used the fact that $\sum_{i=1}^N q_r^i = 1$. This shows that \Eqref{eq:w_cond} is satisfied.
Also, $\bar{q}_{r_0}^i = \frac{1}{N}$ means that $v_{r_0}^i = 0$, so that $\mathbb{E} \left[ \left( v_{r_0}^i \right)^2 \Lambda_{r_0}^i \right] = 0 \leq \rho^2$, and \Eqref{eq:v_cond} is satisfied.
Therefore we can apply Corollary \ref{cor:main_complexity}. The complexity result follows by plugging in $p_{\text{sample}} = 1$ and choosing $I = \mathcal{O} \left( \frac{\Delta \rho^2 \sigma^2}{P \epsilon^2} \right)$.
\end{proof}

\begin{corollary}[Cyclic Participation]
Under cyclic participation with $\bar{K}$ groups and $S$ participating clients in each round, for any $\epsilon > 0$, there exist choices of $\eta, \gamma, P, I$, and $R$ such that $\mathbb{E}[\|\nabla f(\hat{\vx})\|^2] \leq O(\epsilon^2)$ and
\begin{equation*}
    R = \mathcal{O} \left( \frac{L \bar{K}}{\epsilon^2} \left( \frac{N}{S} \right) \right), \quad RI = \mathcal{O} \left( \frac{\Delta L \sigma^2}{S \epsilon^4} \right).
\end{equation*}
\end{corollary}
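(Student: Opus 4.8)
The plan is to verify that the two regularity conditions \eqref{eq:w_cond} and \eqref{eq:v_cond} hold under cyclic participation, and then to invoke Corollary \ref{cor:complexity} with the parameters $P = \bar{K}$, $\rho^2 = 1/S$, and $p_{\text{sample}} = S\bar{K}/N$ recorded in Section \ref{sec:participation_patterns}, choosing the number of local steps $I$ so as to balance the two terms in the lower bound on $R$. The structural fact driving every computation is that within each window of $P = \bar{K}$ consecutive rounds, each of the $\bar{K}$ groups is active at exactly one round, and at that round exactly $S$ of the $N/\bar{K}$ group members are sampled and assigned weight $1/S$ (all other weights in the window being zero).

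First I would verify \eqref{eq:w_cond}. Fix a client $i$ and condition on $\gQ_{:r_0}$. When $i$ is sampled at its group's active round $s_i$ we have $\bar{q}_{r_0}^i = 1/(\bar{K} S)$, so $\indicator{\bar{q}_{r_0}^i > 0}/(P \bar{q}_{r_0}^i) = S$; moreover $q_s^i q_s^j$ is nonzero only when $j$ lies in $i$'s group and is itself sampled at round $s_i$, in which case $\sum_s q_s^i q_s^j = 1/S^2$. Since each of the $S$ sampled group members (including $i$) contributes $S \cdot \tfrac{1}{S^2} = \tfrac1S$, these collapse to $w_{r_0}^i = \tfrac1N \cdot S \cdot \tfrac1S = \tfrac1N$ when $i$ is sampled, and $w_{r_0}^i = 0$ otherwise. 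Hence $\mathbb{E}[w_{r_0}^i \mid \gQ_{:r_0}] = \tfrac1N \cdot \tfrac{S\bar{K}}{N} = \tfrac{S\bar{K}}{N^2} \le \tfrac1N \le \tfrac{\bar{K}^2}{N} = \tfrac{P^2}{N}$, using $S\bar{K} \le N$ and $\bar{K} \ge 1$.

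Next I would verify \eqref{eq:v_cond}. The key observation is that the staleness weight $z_s^i = q_{t_i(s)}^i$, carried forward through the $t_i$ recursion, equals $1/S$ at exactly one of the $P$ positions of the preceding window (the position of $i$'s active round) and vanishes at the other $P-1$ positions, regardless of whether $i$ participated in that window; this forces $\Lambda_{r_0}^i = \tfrac{(1/P)\, S^{-2}}{((1/P)\, S^{-1})^2} = P = \bar{K}$ as a deterministic constant. Since $v_{r_0}^i = \bar{q}_{r_0}^i - 1/N$ with $\mathbb{E}[\bar{q}_{r_0}^i] = 1/N$, we have $\mathbb{E}[(v_{r_0}^i)^2] = \operatorname{Var}(\bar{q}_{r_0}^i) = \tfrac{S\bar{K}}{N} (\tfrac1{\bar{K}S})^2 - \tfrac1{N^2} = \tfrac{1}{N\bar{K}S} - \tfrac1{N^2}$. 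Summing over the $N$ clients and multiplying by $\Lambda_{r_0}^i = \bar{K}$ gives $\mathbb{E}[\sum_{i=1}^N (v_{r_0}^i)^2 \Lambda_{r_0}^i] = \bar{K}(\tfrac{1}{\bar{K}S} - \tfrac1N) = \tfrac1S - \tfrac{\bar{K}}{N} \le \tfrac1S = \rho^2$.

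With both conditions in hand, Corollary \ref{cor:complexity} applies. Substituting $P = \bar{K}$, $\rho^2 = 1/S$, and $p_{\text{sample}}$ (either the tight value $S\bar{K}/N$, or the looser valid lower bound $S/N$ that reproduces the table entry) into the required $R \ge \tfrac{\Delta L \rho^2 \sigma^2}{I \epsilon^4} + \tfrac{300 \Delta LP}{p_{\text{sample}} \epsilon^2}$, I would choose $I = \Theta(\tfrac{\rho^2 \sigma^2 p_{\text{sample}}}{P \epsilon^2})$ to equate the two terms, whereupon $R = \Theta(\tfrac{\Delta LP}{p_{\text{sample}} \epsilon^2})$ and $RI = \Theta(\tfrac{\Delta L \rho^2 \sigma^2}{\epsilon^4})$. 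Using $P/p_{\text{sample}} = N/S$ and $\rho^2 = 1/S$, these reduce to $R = \mathcal{O}(\tfrac{L \bar{K}}{\epsilon^2} \tfrac{N}{S})$ and $RI = \mathcal{O}(\tfrac{\Delta L \sigma^2}{S \epsilon^4})$, as claimed. I expect the main obstacle to be the exact evaluation of $\Lambda_{r_0}^i$: establishing that it is the deterministic constant $\bar{K}$ requires tracking the $t_i$ and $z_s^i$ bookkeeping across windows to confirm that the carried-forward weights always concentrate on a single position per window, and checking the independence between the current-window quantity $v_{r_0}^i$ and the participation history determining $\Lambda_{r_0}^i$.
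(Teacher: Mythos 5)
Your proposal is correct and follows essentially the same route as the paper's own proof: verify equation~\ref{eq:w_cond} and equation~\ref{eq:v_cond} by direct computation under the cyclic sampling structure (including the deterministic evaluation $\Lambda_{r_0}^i = P = \bar{K}$, which the paper also uses), then plug $P = \bar{K}$, $\rho^2 = 1/S$, and $p_{\text{sample}}$ into Corollary \ref{cor:main_complexity} and balance the two terms in the bound on $R$ through the choice of $I$. The only noteworthy difference is cosmetic: you compute $\mathbb{E}[w_{r_0}^i \mid \gQ_{:r_0}] = S\bar{K}/N^2$ using the marginal sampling probability $S\bar{K}/N$, whereas the paper records $S/N^2$ (its appendix uses $S/N$ for this probability); both values lie below $P^2/N$, so the verification goes through either way.
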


\begin{proof}
Again, we show that \Eqref{eq:w_cond} and \Eqref{eq:v_cond} are satisfied under cyclic participation. Let $r_0 \in \{0, \ldots, R-P\}$.
Denoting $g(i) = \floor{\frac{\bar{K}}{N} (i-1)}$ and $r(i) = r_0 + g(i)$, it holds that,
over the rounds $r \in \{r_0, \ldots, r_0 + P - 1\}$, client $i$ belongs to group $g(i)$ and is available only
during round $r(i)$. Also, let $\Pi(i)$ denote the set of indices of clients in group $g(i)$.
To simplify $w_{r_0}^i$, recall that for client $i$ in group $(r \mod \bar{K})$:
\begin{equation*}
    q_r^i = \begin{cases}
        \frac{1}{S} & \text{with probability } \frac{S}{N} \\
        0 & \text{with probability } 1 - \frac{S}{N}
    \end{cases},
\end{equation*}
and $q_r^i = 0$ for all clients $i$ not in group $(r \mod \bar{K})$.
So
\begin{align*}
    \mathbb{E} \left[ w_{r_0}^i \middle| \gQ_{:r_0} \right] &= \mathbb{E} \left[ \frac{1}{N} \sum_{j=1}^N \frac{\indicator{\bar{q}_{r_0}^j > 0}}{P \bar{q}_{r_0}^j} \sum_{s=r_0}^{r_0+P-1} q_s^i q_s^j \middle| \gQ_{:r_0} \right] \\
    &= \frac{1}{NP} \mathbb{E} \left[ \sum_{s=r_0}^{r_0+P-1} q_s^i \sum_{j=1}^N \frac{\indicator{\bar{q}_{r_0}^j > 0}}{\bar{q}_{r_0}^j} q_s^j \middle| \gQ_{:r_0} \right] \\
    &\Eqmark{i}{=} \frac{1}{NP} \mathbb{E} \left[ q_{r(i)}^i \sum_{j=1}^N \frac{\indicator{\bar{q}_{r_0}^j > 0}}{\bar{q}_{r_0}^j} q_{r(i)}^j \middle| \gQ_{:r_0} \right] \\
    &\Eqmark{ii}{=} \frac{1}{NP} \mathbb{E} \left[ q_{r(i)}^i \sum_{j \in \Pi(i)} \frac{\indicator{\bar{q}_{r_0}^j > 0}}{\bar{q}_{r_0}^j} q_{r(i)}^j \middle| \gQ_{:r_0} \right] \\
    &\Eqmark{iii}{=} \frac{1}{NP} \mathbb{E} \left[ q_{r(i)}^i \sum_{j \in \Pi(i)} \frac{\indicator{\bar{q}_{r_0}^j > 0}}{1/SP} \frac{1}{S} \middle| \gQ_{:r_0} \right] \\
    &= \frac{1}{N} \mathbb{E} \left[ q_{r(i)}^i \sum_{j \in \Pi(i)} \indicator{\bar{q}_{r_0}^j > 0} \middle| \gQ_{:r_0} \right] \\
    &\Eqmark{iv}{=} \frac{S}{N} \mathbb{E} \left[ q_{r(i)}^i \middle| \gQ_{:r_0} \right] \\
    &= \frac{S}{N} \left( \frac{S}{N} \frac{1}{S} + \left( 1 - \frac{S}{N} \right) \cdot 0 \right) \\
    &= \frac{S}{N^2} \leq \frac{P^2}{N},
\end{align*}
where $(i)$ uses $q_r^i = 0$ for all $r \neq r(i)$, $(ii)$ uses $q_{r(i)}^j = 0$ for all
$j \notin \Pi(i)$, $(iii)$ uses the fact that $\bar{q}_{r_0}^j > 0$ implies
$q_{r(i)}^j = \frac{1}{S}$ and $\bar{q}_{r_0}^j = \frac{1}{SP}$, and $(iv)$ uses the
fact that $S$ clients are sampled in each round. This shows that \Eqref{eq:w_cond} is satisfied.

To see that \Eqref{eq:v_cond} is satisfied:
\begin{align*}
    \mathbb{E} \left[ \sum_{i=1}^N \left( v_{r_0}^i \right)^2 \Lambda_{r_0}^i \right] &\Eqmark{i}{=} \mathbb{E} \left[ \sum_{i=1}^N \mathbb{E} \left[ \left( \bar{q}_{r_0}^i - \frac{1}{N} \right)^2 \middle| \gQ_{:r_0} \right] \Lambda_{r_0}^i \right] \\
    &\Eqmark{ii}{=} \frac{1}{SNP} \left( 1 - \frac{S \bar{K}}{N} \right) \sum_{i=1}^N \mathbb{E} \left[ \Lambda_{r_0}^i \right] \\
    &= \frac{1}{SNP} \left( 1 - \frac{S \bar{K}}{N} \right) \sum_{i=1}^N \left( \frac{1}{P} \sum_{s=r_0-P}^{r_0-1} \left( z_s^i \right)^2 \bigg/ \left( \frac{1}{P} \sum_{s=r_0-P}^{r_0-1} z_s^i \right)^2 \right) \\
    &\Eqmark{iii}{=} \frac{1}{SP} \left( 1 - \frac{S \bar{K}}{N} \right) \frac{1}{P} \frac{1}{S^2} \bigg/ \left( \frac{1}{P} \frac{1}{S} \right)^2 \\
    &= \frac{1}{S} \left( 1 - \frac{S \bar{K}}{N} \right) \leq \frac{1}{S} = \rho^2,
\end{align*}
where $(i)$ uses the tower property, $(ii)$ uses the variance of $\bar{q}_{r_0}^i$ as computed in Appendix \ref{app:baseline_complexity}, and $(iii)$ uses the fact that $\bar{z}_{r_0}^i > 0$ by construction of $\bar{z}_{r_0}^i$, and in this case $z_r^i = \frac{1}{S}$ for exactly one $r \in \{r_0 - P, \ldots, r_0 - 1\}$ and $z_r^i = 0$ for all other $r$. Therefore \Eqref{eq:v_cond} is satisfied.

This shows we can apply Corollary \ref{cor:main_complexity}. The complexity result follows by plugging in
$p_{\text{sample}} = \frac{S}{N}$, $P = \bar{K}$, $\rho = \frac{1}{\sqrt{S}}$, and choosing $I = \mathcal{O} \left( \frac{\Delta \sigma^2}{\bar{K} N \epsilon^2} \right)$.
\end{proof}

\section{Complexity of Amplified FedAvg} \label{app:baseline_complexity}
In this section we derive the computation and communication complexity required for Amplified FedAvg \cite{wang2022unified} to find an $\epsilon$-stationary point under various client participation patterns, which we listed in Table \ref{tab:complexity} of the main paper. Table \ref{tab:complexity} compares the complexity under i.i.d. participation, regularized participation, and cyclic participation.
Since i.i.d. participation is a special case of cyclic participation with $\bar{K}=1$ groups, here we only consider regularized and cyclic participation, and the result for i.i.d. participation follows.

Many works in federated learning characterize data heterogeneity by assuming that there exists a constant $\kappa$ such that
\begin{equation*}
    \|\nabla f_i(\vx) - \nabla f(\vx)\| \leq \kappa,
\end{equation*}
for all $\vx$.
The previous analysis of Amplified FedAvg \cite{wang2022unified} instead assumes an upper bound $\tilde{\delta}(P)$ on a weighted heterogeneity that depends on client sampling. Specifically, they assume that there exists $\tilde{\delta}(P)$ such that
\begin{equation*}
    \left\| \frac{1}{P} \sum_{r=r_0}^{r_0+P-1} \sum_{i=1}^N q_r^i (\nabla f_i(\vx) - \nabla f(\vx)) \right\|^2 \leq \tilde{\delta}^2(P),
\end{equation*}
for all $\vx$ and $r_0$. We restate their Corollary 3.2 for convenience:
\begin{corollary}[Corollary 3.2 \cite{wang2022unified} informally restated]
There exist parameter choices such that Amplified FedAvg satisfies
\begin{equation} \label{eq:amp_fedavg_conv}
    \min_r \mathbb{E} \left[ \|\nabla f(\bvx_r)\|^2 \right] \leq \mathcal{O} \left( \frac{\sqrt{\Delta L} \rho \sigma}{\sqrt{RI}} + \frac{\Delta LP + \kappa^2}{R} + \frac{\sigma^2}{RIP} + \mathbb{E} \left[ \tilde{\delta}^2(P) \right] \right).
\end{equation}
\end{corollary}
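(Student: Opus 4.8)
The plan is to recognize that this corollary merely restates Corollary 3.2 of \cite{wang2022unified}, so the cleanest route is to check that our Assumptions \ref{ass:obj} and \ref{ass:sampling} (and our notation for $\Delta$, $L$, $\rho$, $\sigma$, $P$, $I$) align with the hypotheses used there, and then invoke their result directly. For completeness I would also sketch the underlying window-based descent argument, which parallels our own analysis of Amplified SCAFFOLD but with one crucial structural simplification: Amplified FedAvg carries no control variates, so the quantities $\mG_{r_0}^i$ and the staleness terms $S_{r_0}^i$ are absent, and the data heterogeneity instead enters through the weighted discrepancy $\tilde{\delta}^2(P)$.

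First I would apply $L$-smoothness across each window of $P$ rounds to bound $f(\bvx_{r_0+P}) - f(\bvx_{r_0})$ by an inner-product term plus $\frac{L}{2}\|\bvx_{r_0+P} - \bvx_{r_0}\|^2$, exactly as in the proof of Theorem \ref{thm:convergence}. The amplified update direction is now the participation-weighted average of raw local gradients, $\sum_r \sum_i q_r^i \sum_k \nabla f_i(\vx_{r,k}^i)$, with no $\mG_{r_0}^i$ correction. Decomposing this against $IP\,\nabla f(\bvx_{r_0})$, the mismatch splits into (a) client-drift terms, controlled by a recursion on $D_{r,k}$ and $M_{r,k}$ analogous to Lemma \ref{lem:dm_ub} but \emph{without} the $S_{r_0}^i$ contributions, and (b) a bias term of the form $\|\frac{1}{P}\sum_r \sum_i q_r^i(\nabla f_i(\bvx_{r_0}) - \nabla f(\bvx_{r_0}))\|^2$, which is exactly what the assumption $\tilde{\delta}^2(P)$ bounds. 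Next I would bound the variance of the window update $\|\bvx_{r_0+P} - \bvx_{r_0}\|^2$ using the spread constant $\sum_i (q_r^i)^2 \leq \rho^2$ from Assumption \ref{ass:sampling}(a) together with the martingale structure of the stochastic gradient noise (as in the $A_1$ computation of Lemma \ref{lem:quadratic_err}), producing the $\sqrt{\Delta L}\rho\sigma/\sqrt{RI}$ and $\sigma^2/(RIP)$ terms. Assembling the per-window descent inequality, averaging over $r_0 \in \{0, P, \ldots, R-P\}$, telescoping $f$, and then optimizing $\eta$ and $\gamma$ yields the four terms displayed in \Eqref{eq:amp_fedavg_conv}.

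The main obstacle — and the reason Amplified FedAvg fails to achieve reduced communication or heterogeneity resilience — is the irreducible $\mathbb{E}[\tilde{\delta}^2(P)]$ term. Without control variates there is no recursive mechanism (the $S_{r_0}^i$ recursion of Lemma \ref{lem:staleness_rec_ub}) to cancel the participation-weighted gradient bias, so this additive term does not shrink as $R$ or $I$ grows. Forcing $\tilde{\delta}^2(P) = \mathcal{O}(\epsilon^2)$ therefore requires averaging over a long window, i.e. $P = \Omega(\epsilon^{-2})$, which is precisely what inflates the communication complexity reported for Amplified FedAvg in Table \ref{tab:complexity} relative to Amplified SCAFFOLD. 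Once the corollary is in hand, deriving the pattern-specific entries of Table \ref{tab:complexity} amounts to substituting the appropriate $\rho^2$, $P$, and an upper bound on $\mathbb{E}[\tilde{\delta}^2(P)]$ for regularized and cyclic participation into \Eqref{eq:amp_fedavg_conv} and balancing $R$ and $I$.
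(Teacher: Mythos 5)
Your proposal takes essentially the same route as the paper: the paper offers no proof of this corollary at all, presenting it exactly as you do — an informal restatement of Corollary 3.2 of \cite{wang2022unified}, invoked after matching notation and assumptions, and then used (via the bound $\mathbb{E}[\tilde{\delta}^2(P)] \leq N^2\kappa^2 v^2$ and pattern-specific variance computations) to fill in Table \ref{tab:complexity}. Your supplementary sketch of the window-based descent argument and your observation that killing the irreducible $\mathbb{E}[\tilde{\delta}^2(P)]$ term forces $P = \Omega(\epsilon^{-2})$ are both consistent with the paper's discussion in Appendix \ref{app:baseline_complexity}, so there is nothing to correct.
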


As pointed out in their Section 4.2, we can interpret $\tilde{\delta}(P)$ in terms of the conventional heterogeneity constant $\kappa$ as:
\begin{equation*}
    \tilde{\delta}^2(P) \leq N \kappa^2 \sum_{i=1}^N \left( \bar{q}_{r_0}^i - \frac{1}{N} \right)^2.
\end{equation*}
Our Assumption \ref{ass:sampling}(b) implies that $\mathbb{E}_{\gQ_{r_0}}[\bar{q}_{r_0}^i] = \frac{1}{N}$. If we choose $v \geq 0$ such that $\text{Var}[\bar{q}_{r_0}^i] \leq v^2$, then we can take expecation of the above to obtain
\begin{equation*}
    \mathbb{E} \left[ \tilde{\delta}^2(P) \right] \leq N^2 \kappa^2 v^2.
\end{equation*}
This ``conversion" of $\tilde{\delta}(P)$ to $\kappa$ and $v$ will allow us to compare their complexity to that of algorithms that use the conventional heterogeneity assumption by computing $v$ for each participation pattern.

\paragraph{Regularized Participation} In this case, $\bar{q}_{r_0}^i = \frac{1}{N}$ almost surely, so that $v=0$ and accordingly $\tilde{\delta}^2(P) = 0$. Therefore
\begin{equation*}
    \min_r \mathbb{E} \left[ \|\nabla f(\bvx_r)\|^2 \right] \leq \mathcal{O} \left( \frac{\sqrt{\Delta L} \rho \sigma}{\sqrt{RI}} + \frac{\Delta LP + \kappa^2}{R} + \frac{\sigma^2}{RIP} \right).
\end{equation*}
Therefore the choices
\begin{equation*}
    R \geq \mathcal{O} \left( \frac{\Delta LP + \kappa^2}{\epsilon^2} \right), \quad RI \geq \mathcal{O} \left( \frac{\Delta L \rho^2 \sigma^2}{\epsilon^4} + \frac{\sigma^2}{P \epsilon^2} \right),
\end{equation*}
imply
\begin{equation*}
    \min_r \mathbb{E} \left[ \|\nabla f(\bvx_r)\|^2 \right] \leq \mathcal{O} \left( \epsilon^2 \right).
\end{equation*}

\paragraph{Cyclic Participation} We can compute $v$ in terms of the parameters of the participation pattern: number of groups $\bar{K}$ and number of participating clients $S$ in each round.
Although we chose $P = \bar{K}$ for Amplified SCAFFOLD, we can satisfy Assumption(b) \ref{ass:sampling} by choosing $P = m\bar{K}$ for any $m \in \mathbb{N}$, and indeed to achieve $\mathbb{E}[\tilde{\delta}^2(P)] \leq \epsilon^2$ we must choose $P = m\bar{K}$ with $m$ depending on $\epsilon$.

Let $A(i)$ denote the set of rounds in $\{r_0, \ldots, r_0+P-1\}$ during which client $i$ is available. Note that $A(i)$ has size $m$, since $P = m\bar{K}$. Then
\begin{equation*}
    \bar{q}_{r_0}^i - \frac{1}{N} = \frac{1}{P} \sum_{r=r_0}^{r_0+P-1} q_r^i - \frac{1}{N} \Eqmark{i}{=} \frac{1}{P} \sum_{r \in A(i)} q_r^i - \frac{1}{N} = \frac{1}{m\bar{K}} \sum_{r \in A(i)} q_r^i - \frac{1}{b \bar{K}} = \frac{1}{\bar{K}} \left( \frac{1}{m} \sum_{r \in A(i)} q_r^i - \frac{1}{b} \right),
\end{equation*}
where $(i)$ uses the fact that $q_r^i = 0$ for all $r \notin A(i)$. Therefore
\begin{align*}
    v^2 &= \frac{1}{\bar{K}^2} \mathbb{E} \left[ \left( \frac{1}{m} \sum_{r \in A(i)} q_r^i - \frac{1}{b} \right)^2 \right] \\
    &\Eqmark{i}{=} \frac{1}{\bar{K}^2} \frac{1}{m^2} \sum_{r \in A(i)} \mathbb{E} \left[ \left( q_r^i - \frac{1}{b} \right)^2 \right] \\
    &\Eqmark{ii}{=} \frac{1}{\bar{K}^2} \frac{1}{m^2} \sum_{r \in A(i)} \frac{1}{S^2} \frac{S}{b} \left( 1 - \frac{S}{b} \right) \\
    &= \frac{1}{S m\bar{K}^2 b} \left( 1 - \frac{S}{b} \right) \\
    &= \frac{1}{SNP} \left( 1 - \frac{S \bar{K}}{N} \right),
\end{align*}
where $(i)$ uses the fact that $\{q_r^i\}_{r \in A(i)}$ are independent and $(ii)$ uses the fact that $q_r^i$ equals $\frac{1}{S}$ times a Bernoulli variable for $r \in A(i)$.

With a bound for $v^2$, we can bound the remaining term in \Eqref{eq:amp_fedavg_conv} as follows:
\begin{equation*}
    \mathbb{E}[\tilde{\delta}^2(P)] \leq N^2 \kappa^2 v^2 \leq \frac{\kappa^2}{P} \frac{N}{S} \left( 1 - \frac{S\bar{K}}{N} \right).
\end{equation*}
Therefore, Amplified FedAvg under cyclic participation can find an $\epsilon$-stationary point with the choices
\begin{align*}
    P &\geq \max \left\{ \bar{K}, \frac{\kappa^2}{\epsilon^2} \frac{N}{S} \left( 1 - \frac{S\bar{K}}{N} \right) \right\} \\
    R &\geq \mathcal{O} \left( \frac{\Delta L \bar{K} + \kappa^2}{\epsilon^2} + \frac{\Delta LN \kappa^2}{S \epsilon^4} \left( 1 - \frac{S\bar{K}}{N} \right) \right) \\
    RI &\geq  \mathcal{O} \left( \frac{\Delta L \sigma^2}{S \epsilon^4} \right).
\end{align*}

\section{Experiment Details} \label{app:experiment_details}
Here we discuss experimental details deferred from the main body: client sampling
parameters, heterogeneity protocol, hyperparameter tuning, definition of the synthetic
objective, and specification of the CNN architecture used for the image classification
experiments.

\subsection{Client Sampling Parameters}
For the synthetic objective, we set the number of groups $\bar{K}=2$ and the
availability time $g=240$. We set the communication interval $I=10$ and train for
$R=5000$ rounds.

For Fashion-MNIST, we set the communication interval $I=30$ and train a logistic
regression model for $R=2000$ rounds, with availability time $g=4$.

For CIFAR-10, we set the communication interval $I=5$ and train a two-layer CNN for
$R=12000$ rounds, with availability time $g=10$.

\subsection{Heterogeneity Protocol}
The following protocol is commonly used in the literature \cite{karimireddy2020scaffold,
wang2022unified} to convert a dataset into a collection of heterogeneous local datasets
according to a data similarity parameter $s$, where $s=0\%$ creates maximal data
heterogeneity across clients, and $s=100\%$ means that data is allocated to each client
uniformly at random.

A single, non-federated dataset (e.g. CIFAR-10) is partitioned into two subsets: $s\%$
of the samples are allocated to an i.i.d. pool and randomly shuffled, and the remaining
$(100-s)\%$ of the sampled are allocated to a non-i.i.d. pool and are sorted by label.
Samples are allocated to each client so that $s\%$ of each local dataset comes from the
i.i.d. pool, and the remaining $(100-s)\%$ comes from the non-i.i.d. pool, so that with
a small $s$, the majority of each local dataset consists of a small number of labels.

\subsection{Hyperparameter Tuning}
For all three experiments in the main body (synthetic objective, Fashion-MNIST, and
CIFAR-10), each of the four baselines are individually tuned with grid search. For
algorithms that use amplified updates (Amplified FedAvg and Amplified SCAFFOLD), we tune
the amplification rate $\gamma$ by searching over a fixed range of values. For the other
algorithms (FedAvg and SCAFFOLD), we indicate the lack of amplified updates by setting
$\gamma = 1$. We tune $\eta$ by allowing $\gamma \eta$ over a fixed range of values. For
FedProx's $\mu$ parameter, we also search over a fixed range of values. The search range
and final values for each parameter are written in Table \ref{tab:hyperparams}, along
with the final values adopted for each algorithm.

\begin{table*}[t]
\caption{Hyperparameter search ranges and final values.}
\label{tab:hyperparams}
\begin{center}
\resizebox{\textwidth}{!}{
\begin{tabular}{@{}lccl@{}}
\toprule
& $\gamma$ values & $\gamma \eta$ values & $\mu$ values \\
\midrule
Synthetic & & & \\
\quad\quad FedAvg & \{$\mathbf{1}$\} & $\{10^{-6}, \mathbf{10^{-5}}, 10^{-4}, 10^{-3}\}$ & $\{\mathbf{0}\}$ \\
\quad\quad FedProx & \{$\mathbf{1}$\} & $\{10^{-6}, \mathbf{10^{-5}}, 10^{-4}, 10^{-3}\}$ & $\{\mathbf{0.01}, 0.1, 1.0, 10.0\}$ \\
\quad\quad SCAFFOLD & \{$\mathbf{1}$\} & $\{10^{-6}, 10^{-5}, \mathbf{10^{-4}}, 10^{-3}\}$ & $\{\mathbf{0}\}$ \\
\quad\quad Amplified FedAvg & $\{1.25, 1.5, 2, \mathbf{3}\}$ & $\{10^{-6}, \mathbf{10^{-5}}, 10^{-4}, 10^{-3}\}$ & $\{\mathbf{0}\}$ \\
\quad\quad Amplified SCAFFOLD & $\{1.25, \mathbf{1.5}, 2, 3\}$ & $\{10^{-6}, 10^{-5}, \mathbf{10^{-4}}, 10^{-3}\}$ & $\{\mathbf{0}\}$ \\
\midrule
Fashion-MNIST & & & \\
\quad\quad FedAvg & \{$\mathbf{1}$\} & $\{10^{-5}, \mathbf{10^{-4}}, 10^{-3}, 10^{-2}\}$ & $\{\mathbf{0}\}$ \\
\quad\quad FedProx & \{$\mathbf{1}$\} & $\{10^{-5}, \mathbf{10^{-4}}, 10^{-3}, 10^{-2}\}$ & $\{0.01, 0.1, 1.0, \mathbf{10.0}\}$ \\
\quad\quad SCAFFOLD & \{$\mathbf{1}$\} & $\{10^{-5}, 10^{-4}, 10^{-3}, \mathbf{10^{-2}}\}$ & $\{\mathbf{0}\}$ \\
\quad\quad Amplified FedAvg & $\{1.25, 1.5, \mathbf{2}, 3\}$ & $\{10^{-5}, \mathbf{10^{-4}}, 10^{-3}, 10^{-2}\}$ & $\{\mathbf{0}\}$ \\
\quad\quad Amplified SCAFFOLD & $\{1.25, \mathbf{1.5}, 2, 3\}$ & $\{10^{-5}, 10^{-4}, 10^{-3}, \mathbf{10^{-2}}\}$ & $\{\mathbf{0}\}$ \\
\midrule
CIFAR-10 & & & \\
\quad\quad FedAvg & \{$\mathbf{1}$\} & $\{10^{-6}, \mathbf{10^{-5}}, 10^{-4}, 10^{-3}\}$ & $\{\mathbf{0}\}$ \\
\quad\quad FedProx & \{$\mathbf{1}$\} & $\{10^{-6}, \mathbf{10^{-5}}, 10^{-4}, 10^{-3}\}$ & $\{\mathbf{0.01}, 0.1, 1.0, 10.0\}$ \\
\quad\quad SCAFFOLD & \{$\mathbf{1}$\} & $\{10^{-6}, 10^{-5}, \mathbf{10^{-4}}, 10^{-3}\}$ & $\{\mathbf{0}\}$ \\
\quad\quad Amplified FedAvg & $\{1.25, 1.5, 2, \mathbf{3}\}$ & $\{10^{-6}, 10^{-5}, \mathbf{10^{-4}}, 10^{-3}\}$ & $\{\mathbf{0}\}$ \\
\quad\quad Amplified SCAFFOLD & $\{1.25, \mathbf{1.5}, 2, 3\}$ & $\{10^{-6}, 10^{-5}, 10^{-4}, \mathbf{10^{-3}}\}$ & $\{\mathbf{0}\}$ \\
\bottomrule
\end{tabular}}
\end{center}
\end{table*}

\subsection{Synthetic Objective}
For the synthetic experiment, we use a difficult objective from a lower bound analysis
of FedAvg \cite{woodworth2020minibatch}. As defined in the lower bound analysis, the
objective is parameterized by $H, \kappa, \sigma, c, \mu$, and $L$. The objective maps
$\mathbb{R}^4$ to $\mathbb{R}$, and there are only two clients with corresponding local
objectives
\begin{align*}
    f_1(\vx) &= \frac{\mu}{2} \left( x_1 - c \right)^2 + \frac{H}{2} \left( x_2 - \frac{\sqrt{\mu} c}{\sqrt{H}} \right)^2 + \frac{H}{8} \left( x_3^2 + [x_3]_+^2 \right) + \kappa x_4 \\
    f_2(\vx) &= \frac{\mu}{2} \left( x_1 - c \right)^2 + \frac{H}{2} \left( x_2 - \frac{\sqrt{\mu} c}{\sqrt{H}} \right)^2 + \frac{H}{8} \left( x_3^2 + [x_3]_+^2 \right) - \kappa x_4,
\end{align*}
where $[x]_+ := \max \left\{ x, 0 \right\}$. The stochastic gradients for $f_1$ and
$f_2$ are sampled from the distributions $\nabla f_1(\vx) + \xi e_3$ and $\nabla
f_2(\vx) + \xi e_3$, respectively, where $\xi \sim \mathcal{N}(0, \sigma^2)$ and $e_3$
denotes the third standard basis vector in $\mathbb{R}^4$. We set the parameters of the
objective as follows:
\begin{align*}
    &H = 16, \quad \kappa = 16, \quad \sigma = 1 \\
    &c = 1, \quad \mu = 2, \quad L = 2.
\end{align*}

\subsection{CNN Architecture}
We use a simple 2-layer CNN for CIFAR-10. The first layer is a convolutional layer with
64 channels, a $5\times5$ kernel, stride of 2, padding of 2, and a ReLU activation. The
second layer is a fully connected layer with no activation.

\section{Additional Experimental Results} \label{app:extra_experiments} In this section,
we provide two additional experimental results. First, in Section
\ref{app:extra_baselines}, we add four baselines to the experimental settings of the
main paper: FedAdam \cite{reddi2020adaptive}, FedYogi \cite{reddi2020adaptive}, FedAvg-M
\cite{cheng2023momentum}, and Amplified FedAvg with FedProx regularization. Second, in
Section \ref{app:sca}, we evaluate all nine algorithms (five from the main paper and the
four additional baselines) under another non-i.i.d. client participation pattern for the
CIFAR-10 dataset, which we refer to as Stochastic Client Availability (SCA).

\subsection{Additional Baselines} \label{app:extra_baselines}
We evaluate the four additional baselines for the Fashion-MNIST and CIFAR-10 experiments
from Section \ref{sec:experiments}, keeping the same experimental setup. We tuned the
hyperparameters of all baselines according to the hyperparameter ranges suggested in the
original paper of each algorithm, and we allow the same compute budget for tuning each
baseline as we did for tuning the algorithms in the original paper, in terms of the
total number of hyperparameter combinations evaluated. Also, the results are averaged
over five random seeds. The results are shown in Figures \ref{fig:fashion_all} and
\ref{fig:cifar_all}.

\begin{figure}
\begin{center}
\includegraphics[width=0.8\linewidth]{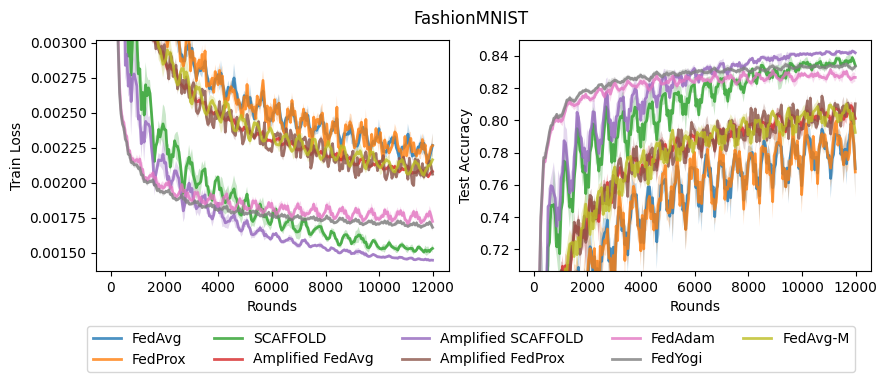}
\caption{FashionMNIST with additional baselines. Amplified SCAFFOLD maintains the best performance.}
\label{fig:fashion_all}
\end{center}
\end{figure}

\begin{figure}
\begin{center}
\includegraphics[width=0.8\linewidth]{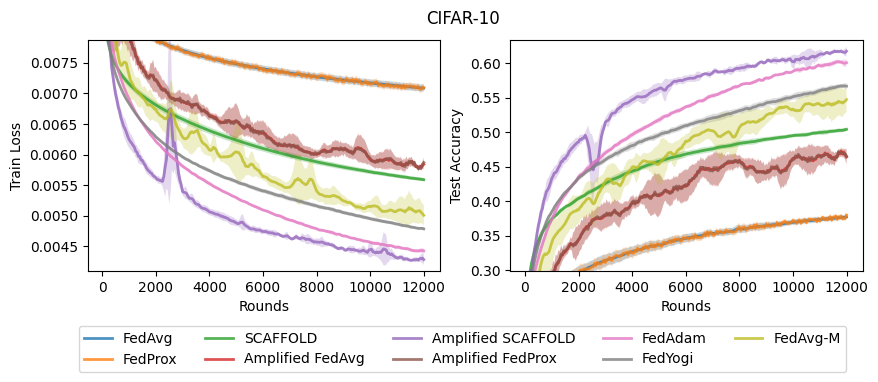}
\caption{CIFAR-10 with additional baselines. FedAdam is competitive, but Amplified SCAFFOLD maintains superiority.}
\label{fig:cifar_all}
\end{center}
\end{figure}

For FashionMNIST, FedAdam and FedYogi reach moderate training loss quickly, but
are soon overtaken by Amplified SCAFFOLD and later by SCAFFOLD. FedAvg-M
exhibits a minor advantage over FedAvg, but performs about the same as Amplified
FedAvg. Amplified FedProx (i.e. Amplified FedAvg with FedProx regularization)
performs nearly identically to Amplified FedAvg.

For CIFAR-10, FedAdam is more competitive, but is still outperformed by
Amplified SCAFFOLD. FedYogi and FedAvg-M are further behind, though both still
outperform SCAFFOLD. Amplified FedProx is again nearly identical to Amplified
FedAvg.

These additional comparisons demonstrate that Amplified SCAFFOLD outperforms
strong empirical baselines (FedAdam, FedYogi) under cyclic client participation,
reinforcing the empirical validation of our algorithm. This performance is
consistent with the fact that Amplified SCAFFOLD has convergence guarantees
under periodic participation, while FedAdam and FedYogi were not designed for
settings beyond i.i.d. client sampling.

\subsection{CIFAR-10 with Stochastic Client Availability} \label{app:sca}
Here, we include an evaluation under another non-i.i.d. participation pattern,
which we refer to as Stochastic Cyclic Availability (SCA). SCA models device
availability which is both periodic and unreliable. Similarly to cyclic participation,
the set of clients is divided into groups, and at each round one group is deemed the
"active" group, while the others are inactive. Unlike cyclic participation, in SCA not
every client in the active group is always available: Instead, when a group becomes
active, the clients in that group become available for sampling with probability $80\%$, while
clients in inactive groups have probability $5\%$ to be available for participation. The
active group changes every $g$ rounds. This stochastic availability models the real-life
situation where a client device can be unavailable at a time of day when it is usually
available, or vice versa. In this way, SCA is more flexible than cyclic participation
and better captures the unreliability of client devices. Lastly, we reused the remaining
settings ($g$, $\bar{K}$, $I$, etc.) and the tuned hyperparameters for each baseline
from the CIFAR-10 experiment under cyclic participation. Again, we average each
algorithm's performance over five random seeds.

Results for CIFAR-10 under SCA participation are shown in Figure \ref{fig:cifar_ra_all}.
Again, Amplified SCAFFOLD outperforms all baselines under SCA participation. The
relative performance of each baseline is similar as under cyclic participation, with
FedAdam staying competitive with Amplified SCAFFOLD, followed by FedYogi and FedAvg-M.
The remainder of the baselines have significantly worse performance, and again Amplified
FedAvg has not benefitted by adding FedProx regularization.

\begin{figure}
\begin{center}
\includegraphics[width=0.8\linewidth]{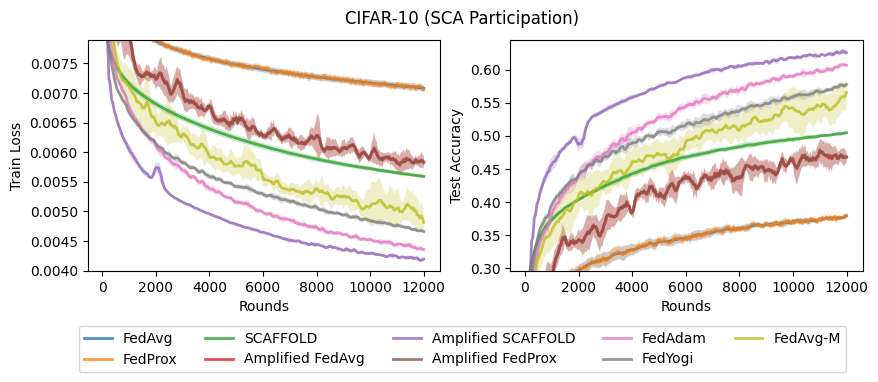}
\caption{CIFAR-10 under SCA (stochastic cyclic availability). Amplified SCAFFOLD converges fastest.}
\label{fig:cifar_ra_all}
\end{center}
\end{figure}

\section{Extended Comparison with Baselines} \label{app:comparison}
Here we include an extended comparison against two relevant prior works.

\paragraph{FedAvg with Cyclic Participation}
\cite{cho2023convergence} analyzes FedAvg under cyclic participation, but the
resulting convergence rate does not benefit from local steps unless regularized
participation is satisfied. They analyze FedAvg for $L$-smooth and $\mu$-PL
objectives. Using their notation, $\bar{K}$ is the number of client groups,
$\kappa$ is the condition number of the objective, $\gamma$ is the intra-group
heterogeneity, $M$ is the total number of clients, $N$ is the number of clients
that participate in each round, and $T$ is the number of communication rounds.
Then the dominating term in their convergence rate for FedAvg under cyclic
participation (Theorem 2) is \[
    \tilde{\gO} \left( \frac{\bar{K} \kappa \gamma^2}{\mu NT} \left( \frac{M/\bar{K}-N}{M/\bar{K}-1} \right) \right).
\] Notice that the number of local steps (denoted $\tau$) does not appear in
this dominating term, so there is no way to reduce the communication complexity
(compared to parallel SGD) by taking local steps. The only exception is when
this term is zero from $N = M/\bar{K}$, which is equivalent to the condition
that every client participates within every cycle of availability. Therefore,
this result cannot show a benefit from local steps unless the client
participation is regularized.

\paragraph{SCAFFOLD} In Section \ref{sec:pattern_results}, we mentioned a
discrepancy between the complexity of SCAFFOLD vs. Amplified SCAFFOLD in terms
of the dependence on $N/S$. In Table \ref{tab:complexity}, the communication
complexity of SCAFFOLD and Amplified SCAFFOLD under i.i.d. participation differs
by their dependence on $\frac{N}{S}$. The complexity of Amplified SCAFFOLD is
$\mathcal{O} \left( \frac{N}{S} \right)$, while that of SCAFFOLD is $\mathcal{O}
\left( \left( \frac{N}{S} \right)^{2/3} \right)$. This difference in the order
of $\frac{N}{S}$ is due to a potential small issue in the analysis of SCAFFOLD,
which we intentionally avoided by accepting a slightly worse dependence on
$\frac{N}{S}$.

This difference stems from an apparent mistake in the original SCAFFOLD
analysis. In the proof of Lemma 16 (PMLR version of SCAFFOLD), the
second-to-last equation of page 32 is obtained with an incorrect step. Namely,
while the current version includes
\begin{align}
    \Xi_r &= \frac{1}{KN} \sum_{i,k} \mathbb{E} \|\mathbf{\alpha}_{i,k-1}^r - \mathbf{x}^r\|^2 \nonumber \\
    &= \left( 1 - \frac{S}{N} \right) \frac{1}{KN} \sum_{i,k} \mathbb{E} \|\mathbf{\alpha}_{i,k-1}^{r-1} - \mathbf{x}^r\|^2 + \frac{S}{N} \frac{1}{KN} \sum_{i,k} \mathbb{E} \|\mathbf{y}_{i,k-1}^r - \mathbf{x}^r\|^2, \label{eq:scaffold_wrong}
\end{align}
where the last line is obtained by conditioning on the event $\alpha_{i,k-1}^r =
\alpha_{i,k-1}^{r-1}$ and the complement $\alpha_{i,k-1}^{r-1} = y_{i,k-1}^r$,
which have probabilities $1-\frac{S}{N}$ and $\frac{S}{N}$, respectively.
However, this condition is not denoted in \Eqref{eq:scaffold_wrong}. A corrected
version should be written
\begin{align*}
    \Xi_r &= \left( 1 - \frac{S}{N} \right) \frac{1}{KN} \sum_{i,k} \mathbb{E} \left[ \|\mathbf{\alpha}_{i,k-1}^{r-1} - \mathbf{x}^r\|^2 \;|\; \alpha_{i,k-1}^r = \alpha_{i,k-1}^{r-1} \right] \\
    &\quad + \frac{S}{N} \frac{1}{KN} \sum_{i,k} \mathbb{E} \left[ \|\mathbf{y}_{i,k-1}^r - \mathbf{x}^r\|^2 \;|\; \alpha_{i,k-1}^{r-1} = y_{i,k-1}^r \right],
\end{align*}
but this conditional expectation is not specified in the proof of SCAFFOLD. For
the remainder of the proof of Lemma 16, these terms are treated as total
expectation, leading to an inconsistency. Lemma 16 concludes by applying Lemma
15 in order to bound the term $\mathbb{E} \left[ \left\| \mathbb{E}_{r-1} \left[
\Delta \mathbf{x}^r \right] \right\|^2 \right]$. However, when we make the
correction to include the conditional expectation, we do not have a bound of
$\Xi_r$ in terms of $\mathbb{E} \left[ \left\| \mathbb{E}_{r-1} \left[ \Delta
\mathbf{x}^r \right] \right\|^2 \right]$, we instead have a bound in terms of
$\mathbb{E} \left[ \left\| \mathbb{E}_{r-1} \left[ \Delta \mathbf{x}^r \;|\;
\alpha_{i,k-1}^{r-1} = y_{i,k-1}^r \right] \right\|^2 \right]$. But this term
with a conditional expectation inside the norm can't be bounded with Lemma 15.

A pessimistic solution is to use Jensen's inequality to bound
\begin{align*}
    \mathbb{E} \left[ \left\| \mathbb{E}_{r-1} \left[ \Delta \mathbf{x}^r \;|\; \alpha_{i,k-1}^{r-1} = y_{i,k-1}^r \right] \right\|^2 \right] &\leq \mathbb{E} \left[ \mathbb{E}_{r-1} \left[ \left\| \Delta \mathbf{x}^r \right\|^2 \;|\; \alpha_{i,k-1}^{r-1} = y_{i,k-1}^r \right] \right] \\
    &= \mathbb{E} \left[ \left\| \Delta \mathbf{x}^r \right\|^2 \right],
\end{align*}
where the second line follows from the tower property. This is the step that we
perform in our analysis of Amplified SCAFFOLD, and this results in the
$\frac{N}{S}$ dependence.

Fixing their analysis to recover the same $\left( \frac{N}{S} \right)^{2/3}$
dependence may be possible, but we have instead focused on achieving the best
known complexity in terms of $\epsilon, \kappa, \sigma$, and other problem
parameters.

%%%%%%%%%%%%%%%%%%%%%%%%%%%%%%%%%%%%%%%%%%%%%%%%%%%%%%%%%%%%

\newpage
\section*{NeurIPS Paper Checklist}

\begin{enumerate}

\item {\bf Claims}
    \item[] Question: Do the main claims made in the abstract and introduction accurately reflect the paper's contributions and scope?
    \item[] Answer: \answerYes{} % Replace by \answerYes{}, \answerNo{}, or \answerNA{}.
    \item[] Justification: Every claim made in the abstract is specifically tied to a theoretical convergence property of our proposed algorithm, which are stated in Section \ref{sec:main_results} and proven in \ref{app:main_proof} and \ref{app:pattern_proofs}.
    \item[] Guidelines:
    \begin{itemize}
        \item The answer NA means that the abstract and introduction do not include the claims made in the paper.
        \item The abstract and/or introduction should clearly state the claims made, including the contributions made in the paper and important assumptions and limitations. A No or NA answer to this question will not be perceived well by the reviewers.
        \item The claims made should match theoretical and experimental results, and reflect how much the results can be expected to generalize to other settings.
        \item It is fine to include aspirational goals as motivation as long as it is clear that these goals are not attained by the paper.
    \end{itemize}

\item {\bf Limitations}
    \item[] Question: Does the paper discuss the limitations of the work performed by the authors?
    \item[] Answer: \answerYes{} % Replace by \answerYes{}, \answerNo{}, or \answerNA{}.
    \item[] Justification: We discussion limitations in the "limitations" paragraph of Section \ref{sec:conclusion}.
    \item[] Guidelines:
    \begin{itemize}
        \item The answer NA means that the paper has no limitation while the answer No means that the paper has limitations, but those are not discussed in the paper.
        \item The authors are encouraged to create a separate "Limitations" section in their paper.
        \item The paper should point out any strong assumptions and how robust the results are to violations of these assumptions (e.g., independence assumptions, noiseless settings, model well-specification, asymptotic approximations only holding locally). The authors should reflect on how these assumptions might be violated in practice and what the implications would be.
        \item The authors should reflect on the scope of the claims made, e.g., if the approach was only tested on a few datasets or with a few runs. In general, empirical results often depend on implicit assumptions, which should be articulated.
        \item The authors should reflect on the factors that influence the performance of the approach. For example, a facial recognition algorithm may perform poorly when image resolution is low or images are taken in low lighting. Or a speech-to-text system might not be used reliably to provide closed captions for online lectures because it fails to handle technical jargon.
        \item The authors should discuss the computational efficiency of the proposed algorithms and how they scale with dataset size.
        \item If applicable, the authors should discuss possible limitations of their approach to address problems of privacy and fairness.
        \item While the authors might fear that complete honesty about limitations might be used by reviewers as grounds for rejection, a worse outcome might be that reviewers discover limitations that aren't acknowledged in the paper. The authors should use their best judgment and recognize that individual actions in favor of transparency play an important role in developing norms that preserve the integrity of the community. Reviewers will be specifically instructed to not penalize honesty concerning limitations.
    \end{itemize}

\item {\bf Theory Assumptions and Proofs}
    \item[] Question: For each theoretical result, does the paper provide the full set of assumptions and a complete (and correct) proof?
    \item[] Answer: \answerYes{} % Replace by \answerYes{}, \answerNo{}, or \answerNA{}.
    \item[] Justification: All of the theoretical results are stated with the full set of assumptions in Sections \ref{sec:setup}, \ref{sec:participation_framework}, and \ref{sec:main_results}. The proofs are contained in Section \ref{app:main_proof} and \ref{app:pattern_proofs}.
    \item[] Guidelines:
    \begin{itemize}
        \item The answer NA means that the paper does not include theoretical results.
        \item All the theorems, formulas, and proofs in the paper should be numbered and cross-referenced.
        \item All assumptions should be clearly stated or referenced in the statement of any theorems.
        \item The proofs can either appear in the main paper or the supplemental material, but if they appear in the supplemental material, the authors are encouraged to provide a short proof sketch to provide intuition.
        \item Inversely, any informal proof provided in the core of the paper should be complemented by formal proofs provided in appendix or supplemental material.
        \item Theorems and Lemmas that the proof relies upon should be properly referenced.
    \end{itemize}

    \item {\bf Experimental Result Reproducibility}
    \item[] Question: Does the paper fully disclose all the information needed to reproduce the main experimental results of the paper to the extent that it affects the main claims and/or conclusions of the paper (regardless of whether the code and data are provided or not)?
    \item[] Answer: \answerYes{} % Replace by \answerYes{}, \answerNo{}, or \answerNA{}.
    \item[] Justification: Experimental details are fully specified in Section \ref{sec:exp_setup} and Appendix \ref{app:experiment_details}.
    \item[] Guidelines:
    \begin{itemize}
        \item The answer NA means that the paper does not include experiments.
        \item If the paper includes experiments, a No answer to this question will not be perceived well by the reviewers: Making the paper reproducible is important, regardless of whether the code and data are provided or not.
        \item If the contribution is a dataset and/or model, the authors should describe the steps taken to make their results reproducible or verifiable.
        \item Depending on the contribution, reproducibility can be accomplished in various ways. For example, if the contribution is a novel architecture, describing the architecture fully might suffice, or if the contribution is a specific model and empirical evaluation, it may be necessary to either make it possible for others to replicate the model with the same dataset, or provide access to the model. In general. releasing code and data is often one good way to accomplish this, but reproducibility can also be provided via detailed instructions for how to replicate the results, access to a hosted model (e.g., in the case of a large language model), releasing of a model checkpoint, or other means that are appropriate to the research performed.
        \item While NeurIPS does not require releasing code, the conference does require all submissions to provide some reasonable avenue for reproducibility, which may depend on the nature of the contribution. For example
        \begin{enumerate}
            \item If the contribution is primarily a new algorithm, the paper should make it clear how to reproduce that algorithm.
            \item If the contribution is primarily a new model architecture, the paper should describe the architecture clearly and fully.
            \item If the contribution is a new model (e.g., a large language model), then there should either be a way to access this model for reproducing the results or a way to reproduce the model (e.g., with an open-source dataset or instructions for how to construct the dataset).
            \item We recognize that reproducibility may be tricky in some cases, in which case authors are welcome to describe the particular way they provide for reproducibility. In the case of closed-source models, it may be that access to the model is limited in some way (e.g., to registered users), but it should be possible for other researchers to have some path to reproducing or verifying the results.
        \end{enumerate}
    \end{itemize}

\item {\bf Open access to data and code}
    \item[] Question: Does the paper provide open access to the data and code, with sufficient instructions to faithfully reproduce the main experimental results, as described in supplemental material?
    \item[] Answer: \answerYes{} % Replace by \answerYes{}, \answerNo{}, or \answerNA{}.
    \item[] Justification: The code for our experiments is included in the supplemental material with instructions for reproduction.
    \item[] Guidelines:
    \begin{itemize}
        \item The answer NA means that paper does not include experiments requiring code.
        \item Please see the NeurIPS code and data submission guidelines (\url{https://nips.cc/public/guides/CodeSubmissionPolicy}) for more details.
        \item While we encourage the release of code and data, we understand that this might not be possible, so “No” is an acceptable answer. Papers cannot be rejected simply for not including code, unless this is central to the contribution (e.g., for a new open-source benchmark).
        \item The instructions should contain the exact command and environment needed to run to reproduce the results. See the NeurIPS code and data submission guidelines (\url{https://nips.cc/public/guides/CodeSubmissionPolicy}) for more details.
        \item The authors should provide instructions on data access and preparation, including how to access the raw data, preprocessed data, intermediate data, and generated data, etc.
        \item The authors should provide scripts to reproduce all experimental results for the new proposed method and baselines. If only a subset of experiments are reproducible, they should state which ones are omitted from the script and why.
        \item At submission time, to preserve anonymity, the authors should release anonymized versions (if applicable).
        \item Providing as much information as possible in supplemental material (appended to the paper) is recommended, but including URLs to data and code is permitted.
    \end{itemize}

\item {\bf Experimental Setting/Details}
    \item[] Question: Does the paper specify all the training and test details (e.g., data splits, hyperparameters, how they were chosen, type of optimizer, etc.) necessary to understand the results?
    \item[] Answer: \answerYes{} % Replace by \answerYes{}, \answerNo{}, or \answerNA{}.
    \item[] Justification: The main training details are specified in Section \ref{sec:exp_setup}. Details such as hyperparameters, tuning protocols, architecture choices, etc. are specified in Section \ref{app:experiment_details}.
    \item[] Guidelines:
    \begin{itemize}
        \item The answer NA means that the paper does not include experiments.
        \item The experimental setting should be presented in the core of the paper to a level of detail that is necessary to appreciate the results and make sense of them.
        \item The full details can be provided either with the code, in appendix, or as supplemental material.
    \end{itemize}

\item {\bf Experiment Statistical Significance}
    \item[] Question: Does the paper report error bars suitably and correctly defined or other appropriate information about the statistical significance of the experiments?
    \item[] Answer: \answerYes{} % Replace by \answerYes{}, \answerNo{}, or \answerNA{}.
    \item[] Justification: All of our experiments show results averaged over 3-5 random seeds, and error bars are shown for learning curves. Error bar calculation is specified in Section \ref{sec:exp_setup}.
    \item[] Guidelines:
    \begin{itemize}
        \item The answer NA means that the paper does not include experiments.
        \item The authors should answer "Yes" if the results are accompanied by error bars, confidence intervals, or statistical significance tests, at least for the experiments that support the main claims of the paper.
        \item The factors of variability that the error bars are capturing should be clearly stated (for example, train/test split, initialization, random drawing of some parameter, or overall run with given experimental conditions).
        \item The method for calculating the error bars should be explained (closed form formula, call to a library function, bootstrap, etc.)
        \item The assumptions made should be given (e.g., Normally distributed errors).
        \item It should be clear whether the error bar is the standard deviation or the standard error of the mean.
        \item It is OK to report 1-sigma error bars, but one should state it. The authors should preferably report a 2-sigma error bar than state that they have a 96\% CI, if the hypothesis of Normality of errors is not verified.
        \item For asymmetric distributions, the authors should be careful not to show in tables or figures symmetric error bars that would yield results that are out of range (e.g. negative error rates).
        \item If error bars are reported in tables or plots, The authors should explain in the text how they were calculated and reference the corresponding figures or tables in the text.
    \end{itemize}

\item {\bf Experiments Compute Resources}
    \item[] Question: For each experiment, does the paper provide sufficient information on the computer resources (type of compute workers, memory, time of execution) needed to reproduce the experiments?
    \item[] Answer: \answerYes{} % Replace by \answerYes{}, \answerNo{}, or \answerNA{}.
    \item[] Justification: Hardware resources are specified in Section \ref{sec:exp_setup}.
    \item[] Guidelines:
    \begin{itemize}
        \item The answer NA means that the paper does not include experiments.
        \item The paper should indicate the type of compute workers CPU or GPU, internal cluster, or cloud provider, including relevant memory and storage.
        \item The paper should provide the amount of compute required for each of the individual experimental runs as well as estimate the total compute.
        \item The paper should disclose whether the full research project required more compute than the experiments reported in the paper (e.g., preliminary or failed experiments that didn't make it into the paper).
    \end{itemize}

\item {\bf Code Of Ethics}
    \item[] Question: Does the research conducted in the paper conform, in every respect, with the NeurIPS Code of Ethics \url{https://neurips.cc/public/EthicsGuidelines}?
    \item[] Answer: \answerYes{} % Replace by \answerYes{}, \answerNo{}, or \answerNA{}.
    \item[] Justification: We have read and conformed to the NeurIPS Code of Ethics.
    \item[] Guidelines:
    \begin{itemize}
        \item The answer NA means that the authors have not reviewed the NeurIPS Code of Ethics.
        \item If the authors answer No, they should explain the special circumstances that require a deviation from the Code of Ethics.
        \item The authors should make sure to preserve anonymity (e.g., if there is a special consideration due to laws or regulations in their jurisdiction).
    \end{itemize}

\item {\bf Broader Impacts}
    \item[] Question: Does the paper discuss both potential positive societal impacts and negative societal impacts of the work performed?
    \item[] Answer: \answerNA{} % Replace by \answerYes{}, \answerNo{}, or \answerNA{}.
    \item[] Justification: Our paper is a theoretical paper on mathematical optimization problems arising in distributed learning. We do not see any direct paths to negative applications beyond those existing for any application of distributed learning or machine learning in general.
    \item[] Guidelines:
    \begin{itemize}
        \item The answer NA means that there is no societal impact of the work performed.
        \item If the authors answer NA or No, they should explain why their work has no societal impact or why the paper does not address societal impact.
        \item Examples of negative societal impacts include potential malicious or unintended uses (e.g., disinformation, generating fake profiles, surveillance), fairness considerations (e.g., deployment of technologies that could make decisions that unfairly impact specific groups), privacy considerations, and security considerations.
        \item The conference expects that many papers will be foundational research and not tied to particular applications, let alone deployments. However, if there is a direct path to any negative applications, the authors should point it out. For example, it is legitimate to point out that an improvement in the quality of generative models could be used to generate deepfakes for disinformation. On the other hand, it is not needed to point out that a generic algorithm for optimizing neural networks could enable people to train models that generate Deepfakes faster.
        \item The authors should consider possible harms that could arise when the technology is being used as intended and functioning correctly, harms that could arise when the technology is being used as intended but gives incorrect results, and harms following from (intentional or unintentional) misuse of the technology.
        \item If there are negative societal impacts, the authors could also discuss possible mitigation strategies (e.g., gated release of models, providing defenses in addition to attacks, mechanisms for monitoring misuse, mechanisms to monitor how a system learns from feedback over time, improving the efficiency and accessibility of ML).
    \end{itemize}

\item {\bf Safeguards}
    \item[] Question: Does the paper describe safeguards that have been put in place for responsible release of data or models that have a high risk for misuse (e.g., pretrained language models, image generators, or scraped datasets)?
    \item[] Answer: \answerNA{} % Replace by \answerYes{}, \answerNo{}, or \answerNA{}.
    \item[] Justification: Our paper does not involve the release of any data or models.
    \item[] Guidelines:
    \begin{itemize}
        \item The answer NA means that the paper poses no such risks.
        \item Released models that have a high risk for misuse or dual-use should be released with necessary safeguards to allow for controlled use of the model, for example by requiring that users adhere to usage guidelines or restrictions to access the model or implementing safety filters.
        \item Datasets that have been scraped from the Internet could pose safety risks. The authors should describe how they avoided releasing unsafe images.
        \item We recognize that providing effective safeguards is challenging, and many papers do not require this, but we encourage authors to take this into account and make a best faith effort.
    \end{itemize}

\item {\bf Licenses for existing assets}
    \item[] Question: Are the creators or original owners of assets (e.g., code, data, models), used in the paper, properly credited and are the license and terms of use explicitly mentioned and properly respected?
    \item[] Answer: \answerYes{} % Replace by \answerYes{}, \answerNo{}, or \answerNA{}.
    \item[] Justification: Our paper uses existing datasets (Fashion-MNIST, CIFAR-10), and we cite original sources for both datasets in Section \ref{sec:exp_setup}.
    \item[] Guidelines:
    \begin{itemize}
        \item The answer NA means that the paper does not use existing assets.
        \item The authors should cite the original paper that produced the code package or dataset.
        \item The authors should state which version of the asset is used and, if possible, include a URL.
        \item The name of the license (e.g., CC-BY 4.0) should be included for each asset.
        \item For scraped data from a particular source (e.g., website), the copyright and terms of service of that source should be provided.
        \item If assets are released, the license, copyright information, and terms of use in the package should be provided. For popular datasets, \url{paperswithcode.com/datasets} has curated licenses for some datasets. Their licensing guide can help determine the license of a dataset.
        \item For existing datasets that are re-packaged, both the original license and the license of the derived asset (if it has changed) should be provided.
        \item If this information is not available online, the authors are encouraged to reach out to the asset's creators.
    \end{itemize}

\item {\bf New Assets}
    \item[] Question: Are new assets introduced in the paper well documented and is the documentation provided alongside the assets?
    \item[] Answer: \answerNA{} % Replace by \answerYes{}, \answerNo{}, or \answerNA{}.
    \item[] Justification: This paper does not release new assets.
    \item[] Guidelines:
    \begin{itemize}
        \item The answer NA means that the paper does not release new assets.
        \item Researchers should communicate the details of the dataset/code/model as part of their submissions via structured templates. This includes details about training, license, limitations, etc.
        \item The paper should discuss whether and how consent was obtained from people whose asset is used.
        \item At submission time, remember to anonymize your assets (if applicable). You can either create an anonymized URL or include an anonymized zip file.
    \end{itemize}

\item {\bf Crowdsourcing and Research with Human Subjects}
    \item[] Question: For crowdsourcing experiments and research with human subjects, does the paper include the full text of instructions given to participants and screenshots, if applicable, as well as details about compensation (if any)?
    \item[] Answer: \answerNA{} % Replace by \answerYes{}, \answerNo{}, or \answerNA{}.
    \item[] Justification: This paper does not involve crowdsourcing or research with human subjects.
    \item[] Guidelines:
    \begin{itemize}
        \item The answer NA means that the paper does not involve crowdsourcing nor research with human subjects.
        \item Including this information in the supplemental material is fine, but if the main contribution of the paper involves human subjects, then as much detail as possible should be included in the main paper.
        \item According to the NeurIPS Code of Ethics, workers involved in data collection, curation, or other labor should be paid at least the minimum wage in the country of the data collector.
    \end{itemize}

\item {\bf Institutional Review Board (IRB) Approvals or Equivalent for Research with Human Subjects}
    \item[] Question: Does the paper describe potential risks incurred by study participants, whether such risks were disclosed to the subjects, and whether Institutional Review Board (IRB) approvals (or an equivalent approval/review based on the requirements of your country or institution) were obtained?
    \item[] Answer: \answerNA{} % Replace by \answerYes{}, \answerNo{}, or \answerNA{}.
    \item[] Justification: This paper does not involve crowdsourcing or research with human subjects.
    \item[] Guidelines:
    \begin{itemize}
        \item The answer NA means that the paper does not involve crowdsourcing nor research with human subjects.
        \item Depending on the country in which research is conducted, IRB approval (or equivalent) may be required for any human subjects research. If you obtained IRB approval, you should clearly state this in the paper.
        \item We recognize that the procedures for this may vary significantly between institutions and locations, and we expect authors to adhere to the NeurIPS Code of Ethics and the guidelines for their institution.
        \item For initial submissions, do not include any information that would break anonymity (if applicable), such as the institution conducting the review.
    \end{itemize}

\end{enumerate}

\end{document}